%% file: autoanchor.tex
\documentclass{article}

\PassOptionsToPackage{numbers, compress}{natbib}

 \usepackage[preprint]{neurips_2026}

\usepackage[utf8]{inputenc} 
\usepackage[T1]{fontenc}    
\usepackage{hyperref}       
\usepackage{url}            
\usepackage{booktabs}       
\usepackage{amsfonts}       
\usepackage{nicefrac}       
\usepackage{microtype}      
\usepackage{xcolor}         

\usepackage{graphicx}
\usepackage{xspace}
\usepackage{colortbl}
\usepackage{multirow}
\usepackage{makecell} 
\usepackage{rotating}
\usepackage{subcaption}
\usepackage{wrapfig}
\usepackage{arydshln}
\usepackage{float}
\usepackage{amsmath,amsthm,amssymb,bm}
\usepackage[normalem]{ulem}
\usepackage{enumitem}
\usepackage{tcolorbox}
\tcbuselibrary{skins,breakable}
\newtcolorbox{promptfancy}{
    enhanced,
    colback=white,
    colframe=orange!70!black,
    colbacktitle=orange!10,
    coltitle=black,
    attach title to upper,
    drop shadow,
    left=6pt, right=6pt,
    fontupper=\small\ttfamily,
    breakable,  
    overlay={
        \begin{tcbclipinterior}
        \draw[orange!20, line width=2mm] 
            (frame.south west) rectangle (frame.north east);
        \end{tcbclipinterior}
    }
}

\newtheorem{theorem}{Theorem}
\newtheorem*{theorem*}{Theorem}

\newtheorem{lemma}{Lemma}

\newtheorem{proposition}{Proposition}

\newtheorem{assumption}{Assumption}
\newtheorem*{assumption*}{Assumption}
\newtheorem{definition}{Definition}
\newtheorem*{definition*}{Definition}

\newcommand{\mysys}{AutoAnchor\space}
\newcommand{\mysysn}{AutoAnchor}

\title{\mysysn: Stable Diffusion Unlearning Using Cross-Attention as a Manifold Surrogate}

\author{%
  Siyuan Wen\textsuperscript{\rm 1}, Jiahao Zeng\textsuperscript{\rm 1}, Ningning Ding\textsuperscript{\rm 1}\thanks{Corresponding author.} \\
  \textsuperscript{\rm 1}Hong Kong University of Science and Technology (Guangzhou)\\
  \texttt{\{swen211,jzeng110\}@connect.hkust-gz.edu.cn}, 
  \texttt{ningningding@hkust-gz.edu.cn} \\
  }

\begin{document}

\maketitle

\input{src/abstract.tex}

\input{src/introduction.tex}
\input{src/background.tex}
\input{src/preliminary.tex}
\input{src/analysis.tex}
\input{src/autoanchor.tex}

\input{src/evaluation.tex}

\input{src/conclusion.tex}




\bibliography{ref}
\bibliographystyle{plain}

\newpage
\appendix

\input{src/appendix.tex}

\end{document}

%% file: src/abstract.tex
\begin{abstract}
Diffusion unlearning is essential for mitigating the generation of harmful or copyrighted content in text-to-image models. Current diffusion unlearning techniques determine the model update direction by either using alternatives of the target concept as an anchor or using empty prompts. The anchor-based method relies on manually and semantically-chosen anchors that risk biased unlearning, while the anchor-free method inherently suffers from unrobust unlearning due to unconstrained latent updates. In this work, we theoretically formalize such unstable diffusion unlearning issues under the manifold hypothesis and prove that lacking a manifold-proximal anchor inevitably induces significant normal-space drift that degrades unlearning performance. To achieve stable unlearning, we propose \mysysn, a two-stage framework that automatically synthesizes manifold-proximal anchors. However, direct geometric manifold optimization is computationally intractable. To address this challenge, \mysys introduces a novel cross-attention consistency loss which serves as a highly efficient surrogate of manifold proximity. Experimental results demonstrate that \mysys effectively achieves robust and unbiased unlearning across various state-of-the-art baselines, significantly improving targeted concept removal (by up to 31.04\% in CLIP score) and non-target utility (by up to 4.18\% in CLIP score). Moreover, \mysys can also be easily integrated into existing diffusion unlearning methods to enhance their unlearning performance (by 6.30\% for concept removal and 6.65\% for utility on average).
\end{abstract}

%% file: src/introduction.tex
\section{Introduction}\label{sec:intro}

Text-to-Image diffusion models have achieved remarkable success in generating high-fidelity images conditioned on input text prompts. However, their reliance on massive and uncurated web data introduces significant risks, \textit{e.g.}, generating harmful content or copyright infringement. To mitigate these risks, diffusion unlearning has emerged as a crucial technique. The goal of diffusion unlearning is to erase a diffusion model's ability to generate specific target concepts, which is usually realized by tuning the model parameters towards certain update directions.

Achieving effective erasure while preserving model utility is hard for diffusion unlearning. This is because the modern diffusion models operate in a high-dimensional latent space, but the valid and realistic images reside on a low-dimensional data manifold within that. Without a carefully designed update direction, diffusion unlearning can easily push the model off the data manifold when erasing target concepts, thereby degrading the erasure effectiveness and damaging the model's utility.

Regarding how to determine the update directions, current diffusion unlearning methods are categorized into \textit{anchor-free} unlearning and \textit{anchor-based} unlearning. \textit{Anchor-free} methods (\textit{e.g.,}~\cite{esd,CCRT,ResAlign}) utilize target-irrelevant directions for unlearning, simply pushing the generation away from the target concepts. For example, they tune the models toward valid but unconditional images when given prompts containing target concepts. While computationally straightforward, such update directions inherently cause off-manifold drift and lead to \textbf{unrobust unlearning}, manifesting as catastrophic forgetting~\cite{amnesia}, degraded image quality~\cite{salun,GLoCE}, and weakened text-image alignment~\cite{boostingSDU,CCRT} on non-target concepts. Moreover, such methods risk fragile forgetting that can be easily bypassed by adversarial prompts~\cite{adversarialSDU,codo,advUnlearn} or simple fine-tuning~\cite{catastrophicSDU}. 
Existing \textit{Anchor-based} methods (\textit{e.g.,}~\cite{conabl,spm,codo}) redirect the model from the target concept toward a specific alternative concept (the anchor), trying to stay on the manifold during unlearning. However, the anchors that they heavily rely on are all manually chosen based on semantic similarity, which is insufficient for ensuring manifold proximity and risks \textbf{biased unlearning}. With biased anchors, the unlearning can still suffer from the unrobust unlearning issues above and may further cause imprecise forgetting~\cite{eraseDiff}.

Both approaches suffer from a shared limitation: the inability to identify a manifold-proximal anchor for robust and unbiased unlearning. To fill this gap, we analyze the unlearning process under the manifold hypothesis.
Geometrically, valid images lie on a low-dimensional data manifold within the vast diffusion latent space, as illustrated in Figure \ref{fig:illu}. A robust and effective unlearning update should smoothly traverse along this manifold to alter target concepts, rather than pushing the representation away from it. Therefore, we decompose the unlearning update directions into their tangential (along the manifold) and normal (orthogonal to the manifold) components.
Our analysis demonstrates that lacking a manifold-proximal anchor (\textit{e.g.}, anchor-free methods and unlearning via bias anchors) fundamentally leads to normal-space drift, forcing the generation trajectory off the manifold. Moreover, we prove that manifold-proximal anchors effectively align unlearning directions closely with the local tangent space of the data manifold, promoting robust and unbiased unlearning.

\begin{figure}
    \centering
    \includegraphics[width=0.8\textwidth]{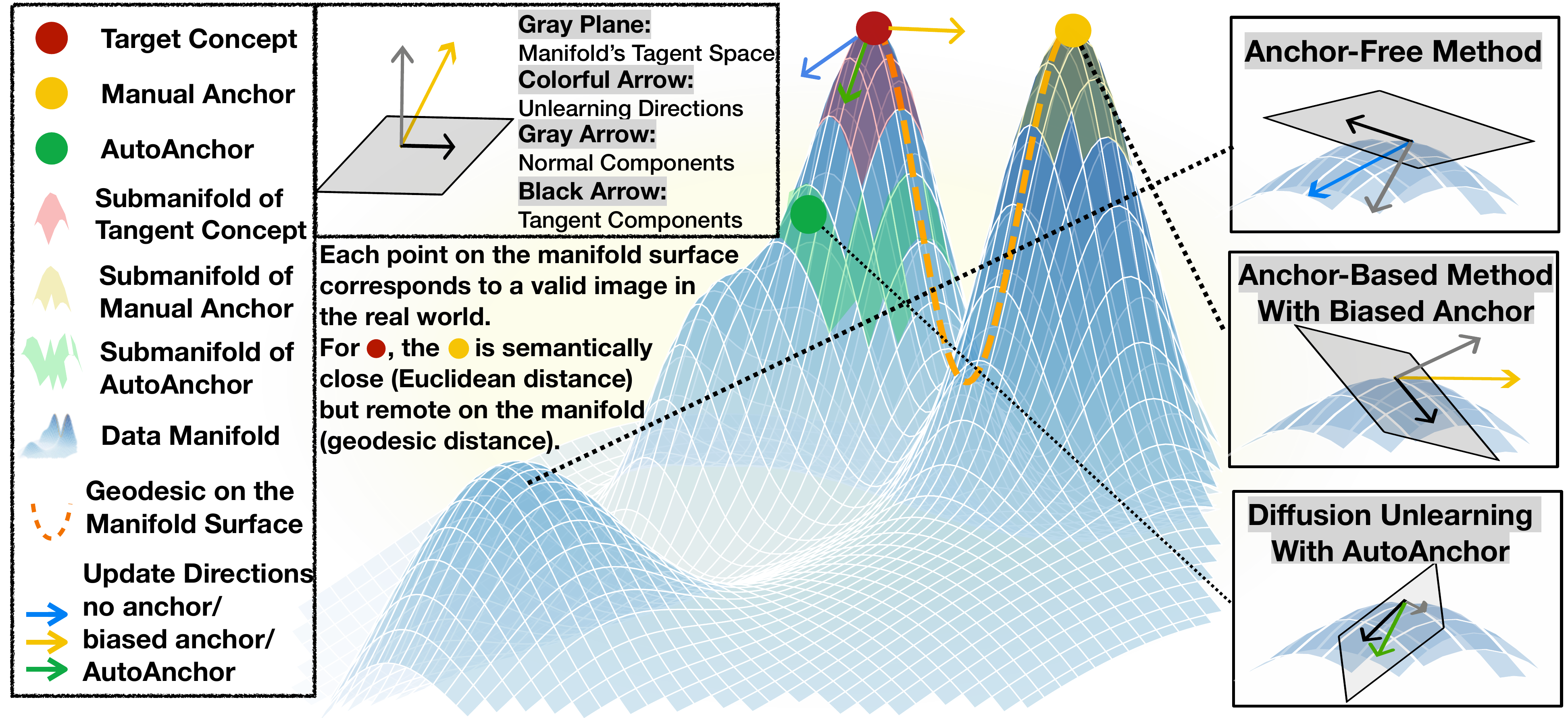}
    \caption{Illustration of the data manifold and unlearning update directions. Without a manifold-proximal anchor (\textit{e.g.}, our \mysysn), the update direction has a large normal component that pushes the model off the manifold, leading to unrobust or biased unlearning.}\label{fig:illu}
    \vspace{-10 pt}
\end{figure}

However, constructing such a manifold-proximal anchor presents non-trivial challenges. Direct optimization on the data manifold is computationally intractable~\cite{hardmanifold2,hardmanifold3,hardmanifold1}, as the manifold is only implicitly defined by the complex distribution of the diffusion model's latent space. Furthermore, there exists a fundamental domain gap: while the unlearning update operates in the diffusion latent space, the anchor is optimized within the vast prompt embedding space. To bridge this gap, our key insight is that in diffusion architectures, a frozen model's cross-attention maps explicitly dictate the layout of certain concepts when generating images, which implicitly reflect the data manifold learned by the model. By leveraging additional semantic constraints, these attention maps serve as a powerful and efficient proxy for tangential manifold proximity (as we prove in Appendix \ref{seca:theoretical_bridge}), enabling us to optimize an anchor that aligns its attention patterns with those of the target concept. 

To this end, we propose \mysysn, a lightweight two-stage process that automatically synthesizes manifold-proximal anchors for diffusion unlearning. It first filters and aggregates valid candidate concepts automatically, and then optimizes the anchors to be manifold-proximal to the target concept. We further propose a cross-attention consistency loss, a novel manifold optimization surrogate specific to anchor construction. \mysys empirically mitigates off-manifold drift and thus effectively achieves robust and unbiased diffusion unlearning. Moreover, it can be integrated into existing diffusion unlearning methods to further improve their unlearning performance.

Our main contributions are summarized as follows:
\begin{itemize}[leftmargin=10pt]
    \item \textbf{Theoretical Formalization of Diffusion Unlearning Issues:} We analyze diffusion unlearning via the manifold hypothesis and theoretically prove that lacking a manifold-proximal anchor inevitably causes normal-space drift, providing a mathematical explanation for why existing anchor-free and manual anchor-based methods fundamentally suffer from unstable diffusion unlearning.
    \item \textbf{Novel Manifold-Proximal Surrogate and Unlearning Framework:} To overcome the computational intractability and domain gap of direct manifold optimization, we introduce a cross-attention consistency loss. Using this surrogate loss, our proposed \mysys efficiently and automatically constructs manifold-proximal anchors for achieving unbiased and robust diffusion unlearning. Furthermore, we theoretically explain how \mysys approximates manifold proximity. 
    \item \textbf{Empirical Evaluation:} Extensive experiments show that \mysys improves the diffusion unlearning performance of the state-of-the-art baselines in both concept removal (improving 6.30\% in CLIP score on average) and model utility (improving 6.65\% in CLIP score on average). Moreover, ablation studies show the effects of cross-attention loss and its robustness on hyperparameters.
\end{itemize}

%% file: src/background.tex
\section{Related Works}
Based on how to determine the update direction, existing diffusion unlearning methods are categorized into anchor-free and anchor-based approaches.
\textbf{Anchor-free} approaches determine their update directions using coarse signals like empty prompts, generic concepts, or unconditional scores. For example, ESD~\cite{esd} utilizes the difference of predictions between the target concept and empty prompt as the update direction. Such imprecise directions cause a series of robustness issues in diffusion unlearning (unrobust against adversarial prompts, \textit{etc}). Although existing works have made efforts to address them, they target individual issues in isolation. Specifically, \cite{advUnlearn} only focuses on robustness against adversarial prompts; \cite{boostingSDU,amnesia,tuneSD,overunlearn,salun} only aim to strengthen model utility; \cite{mace} is solely concerned with utility loss after multiple unlearning; and \cite{catastrophicSDU,unstableSDU,SemanticSurgery} only pursue effective concept removal. No single existing work solves these issues simultaneously, due to ignoring the root problem of imprecise update directions. Our method fundamentally fills this gap by generating a stable anchor for precise unlearning direction, thereby enabling robust diffusion unlearning.

\textbf{Anchor-based} approaches pre-define specific anchor concepts for the target concepts, aiming to provide more precise unlearning directions than anchor-free methods. For example, ConAbl~\cite{conabl} first uses the generation direction differences between the target and anchor concepts as the update direction, thus realizing better utility preservation than ESD. Recent anchor-based methods further explore more sophisticated problems like reliable erasure~\cite{allbut1,codo,ResAlign,eraseDiff}, multiple unlearning~\cite{massSDU,CCRT}, training-free unlearning~\cite{CURE}, concept erosion~\cite{spm,DUO,FADE,GLoCE}, and attention-based unlearning~\cite{Forget-me-not,conceptreplacer}. However, existing anchor-based methods heavily rely on the assumption that their manually chosen anchors are precise. This assumption is not practical when semantically close anchors are not manifold-proximal to the target concepts, leading to biased unlearning as we show in Appendix \S\ref{seca:bias}. While some works~\cite{eraseDiff,spm,misdirection,cooccurring} attempt to generate anchors automatically, they still rely on semantic closeness and thus fail to provide fundamentally unbiased anchors.
Our method leverages cross-attention maps as a practical manifold surrogate that manages to construct manifold-proximal anchors, thereby achieving unbiased unlearning. Moreover, our method can be easily integrated into existing diffusion unlearning methods to further improve their unlearning performance.

%% file: src/preliminary.tex
\section{Preliminaries}\label{sec:preliminaries}

\textbf{Latent Diffusion Models (LDMs).} Modern text-to-image models (\textit{e.g.}, Stable Diffusion~\cite{ldm}) operate in a compressed latent space. Given an input text prompt, the model first maps it into the corresponding embedding $\mathbf{e} \in \mathcal{E}$. Next, the model predicts the latent noise $\epsilon \in \mathbb{R}^d$, which is then used to generate an image latent $\mathbf{z} \in \mathbb{R}^d$ as the output. Training LDMs involves two processes. First, in the noise diffusion process, Gaussian noise $\epsilon$ is progressively added to the latent state $\mathbf{z}_0$ of a valid image, producing a noisy latent $\mathbf{z}_t$ at timestep $t \sim \mathcal{U}[0,T]$, where $\mathcal{U}$ denotes the uniform distribution. Second, in the denoising process, the diffusion model, a neural network parameterized by $\boldsymbol{\epsilon}_{\theta}: \mathbb{R}^d \times \mathcal{E} \times [0,T] \to \mathbb{R}^d$, predicts the added noise and learns the image manifold by minimizing the standard objective $\mathcal{L}_{\rm SD}(\theta) = \mathbb{E}_{\mathbf{z}_0, \mathbf{e}, t, \boldsymbol{\epsilon}} \left[ \|\boldsymbol{\epsilon} - \boldsymbol{\epsilon}_\theta(\mathbf{z}_t, \mathbf{e}, t)\|^2 \right]$. LDMs use the predicted noises to generate an image $\mathbf{z}_0$ by progressively denoising $\mathbf{z}_{\rm t-1} = \mathbf{z}_{\rm t} - \boldsymbol{\epsilon}_\theta(\mathbf{z}_t, \mathbf{e}, t)$, for $t=T,...,2,1$.

\textbf{Diffusion Unlearning.} Denote a pre-trained diffusion model $\boldsymbol{\epsilon}_{\theta^*}$ and a target concept $c_u \in \mathcal{C}$ to unlearn (e.g., ``Mickey Mouse''). The goal of diffusion unlearning is to obtain a tuned model $\boldsymbol{\epsilon}_{\theta^u}$ that is incapable of generating images related to the target concept $c_u$ when prompted, while preserving its generation capabilities for all non-target concepts. Existing unlearning algorithms achieve this by tuning the model's parameters along a specific update direction to alter its prediction for $c_u$. To formulate this update direction, we recall the known fact~\cite{ddpm,score} that diffusion models learn the score function (\textit{i.e.}, gradient of the log probability) of the data distribution, \textit{i.e.}, $\boldsymbol{\epsilon}_\theta(\mathbf{z}_t, \mathbf{e}, t) \propto \nabla_{\mathbf{z}_t} \log p_t(\mathbf{z}_t | \mathbf{e})$. This equivalence allows us to represent unlearning updates (on the data manifold) as noise prediction differences (in latent space) under two distinct embeddings. For any \textit{anchor-free} method ($\Delta^{\text{AF}}$) and any \textit{anchor-based} method ($\Delta^{\text{AB}}$), these update directions are expressed as:
\begin{equation}\label{eq:uudAF}
    \Delta^{\text{AF}} = \boldsymbol{\epsilon}_{\theta^*}(\mathbf{z}_t,\mathbf{e}_{u}, t) - \boldsymbol{\epsilon}_{\theta^*}(\mathbf{z}_t, \mathbf{e}_\emptyset, t), \quad \Delta^{\text{AB}} = \boldsymbol{\epsilon}_{\theta^*}(\mathbf{z}_t, \mathbf{e}_{u}, t) - \boldsymbol{\epsilon}_{\theta^*}(\mathbf{z}_t, \mathbf{e}_a, t),
\end{equation}
where $\mathbf{e}_{u}$, $\mathbf{e}_a$, and $\mathbf{e}_\emptyset$ denote the embeddings for the target concept $c_u$, the anchor $c_a\in \mathcal{C}$, and the empty prompt, respectively. The geometric properties of the conditioning embedding (empty or anchor) critically determine unlearning stability, as we will formally analyze in \S\ref{sec:analysis}.

%% file: src/analysis.tex
\section{Theoretical Analysis of Unstable Diffusion Unlearning}\label{sec:analysis}

In \S\ref{sec:the_notations}, we first introduce the manifold notations and statistical assumptions. Next we theoretically analyze why existing diffusion unlearning methods are unstable in \S\ref{sec:the_anchorfree}. In \S\ref{sec:the_anchorbased}, we further prove that unlearning updates determined by {manifold-proximal} anchors successfully mitigate this instability. Complete proofs, detailed assumption justifications, and extra stability analyses are in Appendix \ref{seca:analysis}.

\subsection{Problem Setup and Statistical Assumptions}\label{sec:the_notations}
We consider a smooth and compact manifold embedded in the diffusion latent space $\mathcal{M} \subset \mathbb{R}^d$ which is supported by all valid images. All data points associated with the target concept $c_u$ support a $d_u^*$-dimensional submanifold $\mathcal{M}_u \subset \mathcal{M}$. To analyze the stability of unlearning updates, we decompose the gradient directions relative to the concept manifold $\mathcal{M}_u$. For any latent state $\mathbf{z}_u \in \mathcal{M}$, $T_{u}$ denotes the $d_u^*$-dimensional \textit{tangent space} (directions along the manifold surface), while $N_{u}$ denotes the $(d - d_u^*)$-dimensional \textit{normal space} (directions orthogonal to the manifold). 
Fundamentally, effective unlearning requires erasing the target concept while maintaining the generation on the manifold to preserve utility. Therefore, a desired update direction $\Delta$ should be tightly aligned with the tangent space, \textit{i.e.}, $\Pi_{N_u} \Delta \approx 0$ where $\Pi$ denotes projection operation. 

Next we will introduce some standard assumptions on data distribution and LDMs that are widely adopted in existing literature. Assumptions~\ref{assumption:high_dim} and \ref{assumption:isotropy} directly follow from the manifold hypothesis and are supported by empirical studies of intrinsic dimension and high-dimensional geometry~\cite{intrinsicdim1,isobehavior,highdimProb}. Assumption~\ref{assumption:error_structure} formalizes how score approximation errors decompose along the data manifold~\cite{decompose1,scoredecomposeerror2}.
\begin{assumption}[High-Dimensional Regime]\label{assumption:high_dim}
The dimension of ambient latent space $\mathbb{R}^d$ is much larger than the dimension of the data manifold $\mathcal{M}$, \textit{i.e.}, $d_u^* / d \to 0$. 
\end{assumption}
\begin{assumption}[Score Approximation Error]\label{assumption:error_structure}
The score approximation error $\varepsilon(\mathbf{z}_t, t) = \boldsymbol{\epsilon}_{\theta}(\mathbf{z}_t, t, \mathbf{e}) - \nabla_{\mathbf{z}_t} \log p_t(\mathbf{z}_t | \mathbf{e})$ has zero mean, and its variance projected onto the normal space is bounded by $\sigma_{N}^2$.
\end{assumption}
\begin{assumption}[Isotropy of Unconditional/Biased Interference]\label{assumption:isotropy}
For anchor-free unlearning, the normalized update vector $v_{\text{AF}} = \Delta^{\text{AF}}/\|\Delta^{\text{AF}}\|$ is approximately isotropic in high dimensions, and the same applies to anchor-based unlearning $v_{\text{AB}} = \Delta^{\text{AB}}/\|\Delta^{\text{AB}}\|$ with biased anchor $a\nsubseteq \mathcal{M}_u$.
\end{assumption}
We further elaborate on the justifications and details of these assumptions in Appendix \ref{seca:Justifications}.

\subsection{The Inevitability of Unstable Diffusion Unlearning for Existing Methods}\label{sec:the_anchorfree}
Under Assumptions \ref{assumption:high_dim}-\ref{assumption:isotropy}, we proves existing diffusion unlearning methods' inevitable normal drift.
\begin{theorem}[Inevitability of Normal Drift]
\label{thm:anchor_free_main}
For anchor-free unlearning and biased anchor-based unlearning, the fraction of the update energy projected into the normal direction is concentrated:
\begin{equation}\label{eq:theorem1}
    \forall \delta \in [0,1), \quad \mathbb{P}\left(\frac{\|\Delta_N\|^2}{\|\Delta\|^2} > 1 - \frac{d_u^*}{d} - \delta\right) \geq 1- \exp\left(-\frac{d\delta^2}{8}\right).
\end{equation}
\end{theorem}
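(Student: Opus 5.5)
The plan is to reduce the statement to a concentration inequality for the squared length of a random unit vector projected onto a fixed low-dimensional subspace. By Assumption~\ref{assumption:isotropy}, the normalized update $v=\Delta/\|\Delta\|$ (either $v_{\text{AF}}$ or $v_{\text{AB}}$ with a biased anchor) is approximately uniform on the sphere $S^{d-1}$. The ratio in \eqref{eq:theorem1} is scale invariant, so
\[
\frac{\|\Delta_N\|^2}{\|\Delta\|^2}=\|\Pi_{N_u}v\|^2=1-\|\Pi_{T_u}v\|^2 .
\]
The event in \eqref{eq:theorem1} is therefore equivalent to $\|\Pi_{T_u}v\|^2<d_u^*/d+\delta$. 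It suffices to show that the tangential energy of a uniform unit vector exceeds its mean $d_u^*/d$ by more than $\delta$ with probability at most $\exp(-d\delta^2/8)$.

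First, I fix the tangent space $T_u$ at $\mathbf{z}_u$. This is legitimate because $T_u$ is determined by the manifold and not by the random direction. By rotation invariance I then take $T_u=\mathrm{span}(e_1,\dots,e_{d_u^*})$. Next I compute the mean: by symmetry of the uniform measure, $\mathbb{E}\,v_i^2=1/d$, hence $\mathbb{E}\|\Pi_{T_u}v\|^2=d_u^*/d$.

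For concentration there are two routes.
\begin{itemize}
\item \emph{Lipschitz route (preferred).} The map $f(v)=\|\Pi_{T_u}v\|$ is $1$-Lipschitz on $S^{d-1}$. L\'evy's isoperimetric inequality, in the form $\mathbb{P}(f-\mathbb{E}f>s)\le \exp(-ds^2/2)$ or a similar variant, gives concentration of $f$. I then pass from $f$ to $f^2$ using $f^2-(\mathbb{E}f)^2\le 2(f-\mathbb{E}f)$, since $f\le 1$ and $f+\mathbb{E}f\le 2$. Together with $(\mathbb{E}f)^2\le\mathbb{E}f^2=d_u^*/d$, this shows that $f^2>d_u^*/d+\delta$ forces $f-\mathbb{E}f>\delta/2$. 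Its probability is then at most $\exp(-d(\delta/2)^2/2)=\exp(-d\delta^2/8)$, which is exactly the claimed constant. This matching suggests that this is the intended route.
\item \emph{Gaussian route (alternative).} Write $v=g/\|g\|$ with $g\sim\mathcal{N}(0,I_d)$, and apply Laurent--Massart chi-square tail bounds to the numerator and denominator separately.
\end{itemize}

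To finish, I translate back: the complement event $\|\Pi_{N_u}v\|^2>1-d_u^*/d-\delta$ holds with probability at least $1-\exp(-d\delta^2/8)$. For the biased-anchor case the same argument applies verbatim under Assumption~\ref{assumption:isotropy}. Assumption~\ref{assumption:high_dim} is used only for interpretation: since $d_u^*/d\to0$, the normal fraction tends to one. Assumption~\ref{assumption:error_structure} enters only in justifying isotropy, namely that the error term dominates the direction, and is not needed in the inequality itself.

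\textbf{Main obstacle.} The hardest step is getting the constant right in the squared-norm conversion and in the L\'evy inequality; different textbook forms carry constants such as $(d-2)/2$ or $\sqrt{\pi/2}$ prefactors. I would state the sphere concentration with the form $\exp(-ds^2/2)$, citing Ledoux or Vershynin. If needed, I would absorb small constant discrepancies by noting that ``approximately isotropic'' already carries slack. A secondary subtlety is that the concentration is around $\mathbb{E}f$, not the median; I handle this via the Jensen step above.
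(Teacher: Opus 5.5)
Your argument is sound and has the same skeleton as the paper's proof: scale invariance, $\|\Pi_N v\|^2=1-\|\Pi_T v\|^2$, a tail bound on the tangential energy, then the complement. The difference is where that tail bound comes from.

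The paper never models $v$ as uniform. The appendix version of Assumption~\ref{assumption:isotropy} is itself a tail bound, $\mathbb{P}(\|\Pi_V v\|^2\ge(1+\delta')m/d)\le\exp(-\kappa_0 m\delta'^2)$ for every fixed $m$-dimensional $V$. The proof takes $V=T_u$ and $m=d_u^*$, sets $\kappa_0\approx 1/8$ by appeal to L\'evy's lemma, and reads off the claim. This implicitly uses $\delta'=\delta d/d_u^*$, which gives $\exp(-\kappa_0 d^2\delta^2/d_u^*)\le\exp(-\kappa_0 d\delta^2)$. So in the paper the theorem is essentially a restatement of the assumption, and the constant $1/8$ is asserted rather than derived.

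You instead derive the bound from exact uniformity on $S^{d-1}$, and your L\'evy-plus-squaring step is correct. Indeed, $f^2\ge d_u^*/d+\delta\ge(\mathbb{E}f)^2+\delta$ forces $f\ge\mathbb{E}f$, and then $f-\mathbb{E}f\ge\delta/2$. This explains exactly where $(\delta/2)^2/2$ comes from. The price is the stronger hypothesis of exact uniformity, which the main text's ``approximately isotropic'' does not literally supply.

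Two remarks on constants.

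\emph{Your constant is slightly off.} The standard mean-concentration results on $S^{d-1}$ (curvature or log-Sobolev plus Herbst) give $d-1$ or $d-2$ in place of $d$. As written, your route therefore delivers only $\exp(-(d-1)\delta^2/8)$. Rather than absorbing this into the slack of ``approximate'' isotropy, use the exact law: under uniformity, $\|\Pi_{T_u}v\|^2\sim\mathrm{Beta}(d_u^*/2,(d-d_u^*)/2)$. This Beta law is sub-Gaussian with variance proxy at most $1/(4(\alpha+\beta+1))=1/(2d+4)$ (Marchal--Arbel). That gives $\exp(-(d+2)\delta^2)$, with room to spare.

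\emph{The paper's assumption is stronger than uniformity delivers.} Read literally for all margins, the formal assumption demands more than exactly uniform vectors satisfy. For large $\delta'$ the true tail decays only like $\exp(-\tfrac{m}{2}(\delta'-\ln(1+\delta')))$, i.e.\ linearly rather than quadratically in $\delta'$. And $\delta'=\delta d/d_u^*$ is large in exactly the regime the paper uses. So your derivation from an explicit model is, in that respect, the more trustworthy of the two.
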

Equation~\ref{eq:theorem1} means that with a probability larger than or equal to $1-\exp(-{d\delta^2}/{8})$, the fraction of update energy in the normal direction (\textit{i.e.}, $\|{\Delta_N\|^2}/{\|\Delta\|^2}$) is at least $1-d^*_u/d-\delta$. Note that as the latent space dimension $d$ is very large,  $1-d^*_u/d-\delta$ means a large normal drift, and $1-\exp(-{d\delta^2}/{8})$ is close to 1, which is an overwhelming probability. 
Therefore, nearly all unlearning energy inevitably leaks into the normal space and pushes the model off the manifold, mechanically explaining why existing diffusion unlearning risks being unrobust and biased. We provide the absolute magnitude bound for $\|\Delta_N\|$ in Appendix \ref{seca:anchor_free_abs} (Theorem \ref{thm:anchor_free_full}), which further demonstrates this instability.

\subsection{Stabilizing Unlearning via Manifold-Proximal Anchors}\label{sec:the_anchorbased}

To prove the potential of stable unlearning, we first define a $\zeta$-proximal anchor based on the the update direction it induces.
For all latent points related to the target concept, an anchor is $\zeta$-proximal anchor if the update direction from this anchor to each latent point is predominantly along the data manifold (with no larger than $\zeta^2$ fraction of update energy in the normal direction). That is, a smaller $\zeta$ means more manifold-proximal (see the formal Definition \ref{def:proximal_anchor} in Appendix \ref{seca:setup}). Therefore, we have:

\begin{theorem}[Suppression of Normal Drift via Proximal Anchors]
\label{thm:proximal_anchor_main}
Under Assumptions \ref{assumption:high_dim} and \ref{assumption:error_structure}, for diffusion unlearning with a $\zeta$-proximal anchor, the normal component $\Delta_N^{\text{PA}} = \Pi_N(\Delta^{\text{AB}})$ satisfies:
\begin{equation}
    \mathbb{P}\left(\frac{\|\Delta_N^{\text{PA}}\|^2}{\|\Delta^{\text{PA}}\|^2} \geq \zeta^2 + \frac{d_u^*}{d} + \delta\right) \leq \exp\left(-\frac{d\delta^2}{8}\right).
\end{equation}
\end{theorem}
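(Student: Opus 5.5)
The plan is to split the proximal-anchor update into a clean score difference and an error term, and to handle them separately. By the score interpretation in the Preliminaries, write
\[
\Delta^{\text{PA}} = \big(\nabla \log p_t(\mathbf{z}_t\mid\mathbf{e}_u) - \nabla \log p_t(\mathbf{z}_t\mid\mathbf{e}_a)\big) + \big(\varepsilon_u - \varepsilon_a\big) =: S + E,
\]
where $\varepsilon_u,\varepsilon_a$ are the score approximation errors of Assumption~\ref{assumption:error_structure}. The signal $S$ is the update an exact model would produce with the anchor. The $\zeta$-proximal Definition~\ref{def:proximal_anchor} controls it directly: $\|\Pi_N S\|^2 \le \zeta^2 \|S\|^2$ deterministically for every latent point of the target concept. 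So the only random source of normal energy is $\Pi_N E$.

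For the error term I will mirror the proof of Theorem~\ref{thm:anchor_free_main}, but apply it only to $E$ rather than to the whole update. Assumption~\ref{assumption:error_structure} makes $E$ zero-mean with bounded normal variance, and I will treat its direction as spread over the ambient space. The goal is to show that the extra normal fraction contributed by $E$ is at most $d_u^*/d + \delta$ except with probability $\exp(-d\delta^2/8)$.

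The natural tool is the concentration of the squared projection of a random unit vector onto a fixed subspace, which is a Lipschitz function on the sphere $S^{d-1}$. By Lévy's inequality, this deviates from its mean by more than $\delta$ with probability at most $\exp(-d\delta^2/8)$. This is presumably the same lemma behind Theorem~\ref{thm:anchor_free_main}, applied with the roles of $T$ and $N$ swapped: the tangential fraction has mean $d_u^*/d$ and concentrates above, while the normal fraction is bounded by the deterministic part plus the fluctuation. Assumption~\ref{assumption:high_dim} guarantees that $d_u^*/d$ is small, so the bound is meaningful.

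Finally I combine the two terms. From $\|\Pi_N(S+E)\|^2 \le \|\Pi_N S\|^2 + \|\Pi_N E\|^2 + 2\langle \Pi_N S,\Pi_N E\rangle$, the normal fraction of $\Delta^{\text{PA}}$ is at most $\zeta^2$ plus the random contribution, outside the bad event. Taking the complement gives
\[
\mathbb{P}\!\left(\frac{\|\Delta_N^{\text{PA}}\|^2}{\|\Delta^{\text{PA}}\|^2}\ge \zeta^2+\frac{d_u^*}{d}+\delta\right)\le \exp\!\left(-\frac{d\delta^2}{8}\right).
\]

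The hard step is normalization: the ratio has $\|S+E\|^2$ in the denominator, not $\|S\|^2$. The cross term $\langle \Pi_N S,\Pi_N E\rangle$ must also be absorbed into the same $\delta$ slack without paying a factor in the exponent. My first attempt is to treat the normalized update $(S+E)/\|S+E\|$ directly. I would write its normal fraction as a convex-type combination: the signal direction contributes at most $\zeta^2$, and the error direction, being isotropic within the complement of the signal, contributes a normal share concentrating at $\le d_u^*/d$. I would then bound the cross term by noting that it is zero-mean and concentrates at the same Lévy rate. If this cannot be made fully rigorous, I would state an extra mild assumption that the error is small relative to $\|S\|$, or that it is isotropic. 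That is in the same spirit as Assumption~\ref{assumption:isotropy}, and it lets the $\zeta^2$ and $d_u^*/d$ terms simply add.
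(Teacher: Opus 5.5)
The step that fails is your treatment of the error term. For an (approximately) isotropic direction in $\mathbb{R}^d$, Lévy concentration puts about $d_u^*/d$ of the energy in the \emph{tangent} space and about $1-d_u^*/d$ in the \emph{normal} space. This is exactly the content of Theorem~\ref{thm:anchor_free_main}. Swapping the roles of $T$ and $N$ therefore does not give what you want: $E$ does not add ``at most $d_u^*/d+\delta$'' to the normal fraction, because essentially all of $\|E\|^2$ lands in $N_u$. With $S$ and $E$ nearly orthogonal, the ratio is roughly $(\zeta^2\|S\|^2+\|E\|^2)/(\|S\|^2+\|E\|^2)$. This exceeds $\zeta^2+d_u^*/d+\delta$ as soon as $\|E\|^2/\|S\|^2$ is of order $d_u^*/d+\delta$. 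Nothing in Assumptions~\ref{assumption:high_dim}--\ref{assumption:error_structure} makes that ratio small. Assumption~\ref{assumption:error_structure} only upper-bounds the normal error variance, and in its extended form $\mathbb{E}\|\Pi_N E\|^2$ is of order $\sigma_{\text{est}}^2(d-d_u^*)$. Isotropy itself is Assumption~\ref{assumption:isotropy}, which this theorem does not assume. Your fallback does not rescue the argument either. Isotropy of $E$ makes things worse, not better. ``$E$ small relative to $S$'' would have to mean $\|E\|^2\lesssim(d_u^*/d+\delta)\|S\|^2$ with high probability, a strong quantitative hypothesis under which the $d_u^*/d$ term no longer comes from concentration at all.

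The root cause is a misreading of Definition~\ref{def:proximal_anchor}. It constrains the pretrained network's prediction difference $\boldsymbol{\epsilon}_{\theta^*}(\mathbf{z}_t,t,\mathbf{e}_u)-\boldsymbol{\epsilon}_{\theta^*}(\mathbf{z}_t,t,\mathbf{e}_{a^*})$, which is $\Delta^{\text{AB}}=S+E$ itself. It does so only in expectation over $(t,\mathbf{z}_t)$, not pointwise for the exact score difference $S$. The paper uses that looser pointwise-on-$S$ reading only for the absolute bound in Theorem~\ref{thm:proximal_anchor_full}. Splitting off $E$ moves the hypothesis onto the wrong object and leaves you paying for $E$ with nothing.

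The paper does not split. It sets $v_{\text{PA}}=\Delta^{\text{AB}}/\|\Delta^{\text{AB}}\|$ and takes the mean bound $\mathbb{E}\|\Pi_N(v_{\text{PA}})\|^2\le\zeta^2+d_u^*/d$ from the definition. It notes that Markov's inequality gives only a polynomial tail. It then invokes sub-Gaussian concentration of $v_{\text{PA}}$ with a Chernoff-type optimization to reach $\exp(-d\delta^2/8)$. That last upgrade is asserted rather than derived from Assumptions~\ref{assumption:high_dim}--\ref{assumption:error_structure}. A rigorous version needs an extra hypothesis: concentration of $\|\Pi_N(v_{\text{PA}})\|^2$ about its definition-controlled mean, not isotropy of the error.
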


Theorem \ref{thm:proximal_anchor_main} shows that given the large $\delta$, with a very low probability (\textit{i.e.}, no larger than $\exp(-{d\delta^2}/{8})$), the normal energy fraction is larger than a small number $\zeta^2 + d_u^*/d + \delta$. This reveals that if a tightly bounded $\zeta$-proximal anchor is utilized, the unlearning will leak almost no energy in the normal direction, being potential to stable unlearning. We also provide the absolute magnitude bound for $\|\Delta_N^{\text{PA}}\|$ in Appendix \ref{seca:anchor_based_abs} (Theorem \ref{thm:proximal_anchor_full}), which proves the stability as well. Moreover, as proven via the full random-walk analysis in Appendix \ref{seca:analysis} (Theorem \ref{thm:stability_comparison}), this suppression grants proximal anchor-based methods an exponential stability advantage over anchor-free methods. In \S\ref{sec:autoanchor}, we introduce \mysys to construct a manifold-proximal anchor in diffusion unlearning.

%% file: src/autoanchor.tex
\section{Design of \mysysn}\label{sec:autoanchor}

To fundamentally avoid the instability and bias problems in diffusion unlearning, the anchor generated by \mysys must satisfy three key requirements. \textbf{(R1) Non-Synonymous:} It must not be a direct synonym of the target concept. For example, using ``naked'' as an anchor fails to unlearn the "nude" concept, because the two distributions largely overlap, leaving the target insufficiently suppressed. \textbf{(R2) Semantic Alternative:} It should be a valid semantic alternative related to the target concept. For example, using an irrelevant concept ``Mickey Mouse'' as an anchor to unlearn the ``nude'' concept offers no meaningful guidance and effectively behaves as unconditional noise. \textbf{(R3) Manifold-Proximal:} It should be manifold-proximal to the target concept to guide a stable unlearning direction, as demonstrated by our theoretical analysis in \S\ref{sec:analysis}. 
In addition to the three requirements of the anchor above, the generation process itself must be \textbf{automatic} (to avoid manually introducing extra subjective bias and ensure scalability of unlearning) and \textbf{efficient} (acting as a lightweight preprocessing step).

To meet these criteria, we propose a two-stage framework, illustrated in Figure~\ref{fig:overview}. The first stage initializes the anchor by clustering and aggregating a diverse set of candidate concepts (\S\ref{sec:init}), explicitly enforcing \textbf{R1} and \textbf{R2}. The second stage optimizes the initial anchor to approach manifold proximity using a cross-attention surrogate (\S\ref{sec:opt}), thereby satisfying \textbf{R3}. Finally, we apply \mysys to unlearn the target concept from the model, achieving robust and unbiased unlearning.

\begin{figure}
    \centering
    \includegraphics[width=0.76\textwidth]{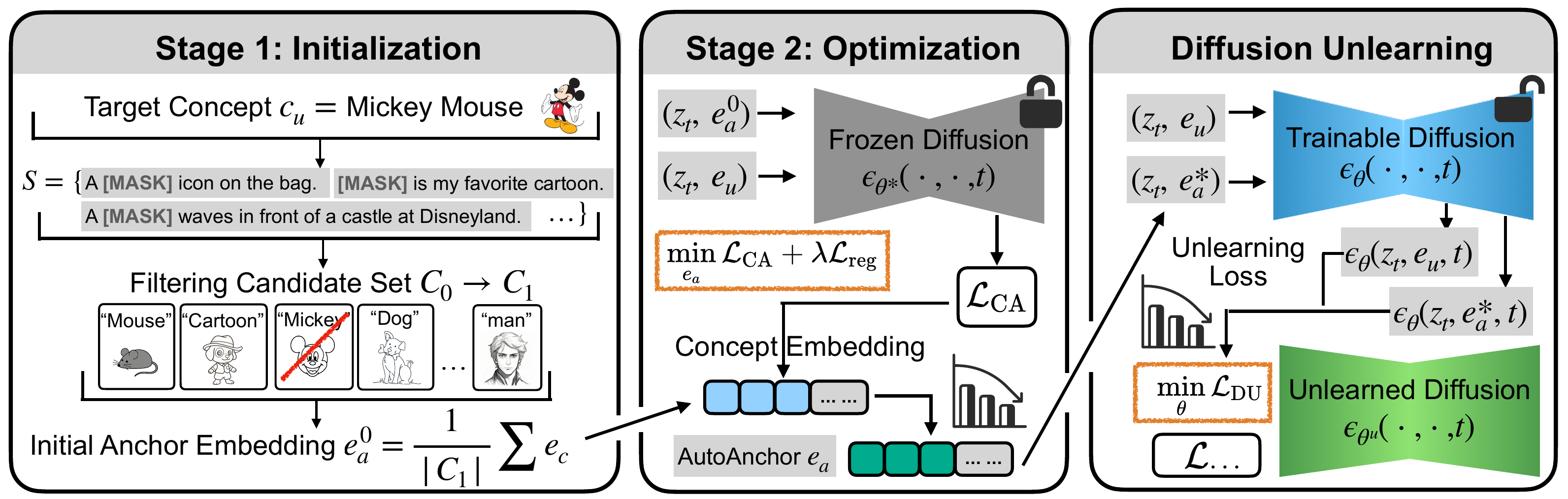}
    \caption{Illustration of \mysysn. Given a target concept $c_u$ to unlearn, \mysys first generates, filters, and aggregates the candidate concepts. Then \mysys optimizes the cross-attention consistency loss $\mathcal{L}_{\rm CA}$ for \textbf{R3} and a regularization term $\mathcal{L}_{\rm reg}$ to maintain \textbf{R1} and \textbf{R2}. Finally, with \mysysn, we tune the model parameters to achieve robust and unbiased diffusion unlearning.}\label{fig:overview}
\end{figure}

\subsection{Stage \MakeUppercase{\romannumeral 1}: Efficient Anchor Initialization}\label{sec:init}

To initialize anchor embedding $\mathbf{e}_a^0$ that satisfies \textbf{R1} and \textbf{R2}, we run the following two steps. 

\textbf{(\MakeLowercase{\romannumeral 1}) Automatic Candidate Generation.} To generate a diverse set of candidate concepts $C_0$ that act as valid alternatives to the target concept $c_u\in\mathcal{C}$ (\textbf{R2}), we rely on the data distribution $P$ (realized via an LLM) within the prompt space $\mathcal{C}$. Specifically, we first use LLM to generate a set of $N$ sentences $\mathcal{S}=\{\mathbf{y}^{(1)}, \dots, \mathbf{y}^{(N)}\}$ containing $c_u$. For each sentence $\mathbf{y}^{(i)}$, we replace $c_u$ with a \texttt{[MASK]} token, denoting the masked sequence as $\tilde{\mathbf{y}}^{(i)}$. We then predict the most likely substitute token $\hat{c}^{(i)}$ for each sequence. Finally, we calculate the frequency of each predicted token across all $N$ sentences, selecting the top-$k$ tokens as our initial candidate set:
    $C_0 = {\text{TopK}}_{c\in\mathcal{C}} \sum_{i=1}^N \mathbb{I} \left( \hat{c}^{(i)} = c \right)$, where $\hat{c}^{(i)} = {\arg\max}_{c'\in\mathcal{C}} \, P(c' \mid \tilde{\mathbf{y}}^{(i)})$ and $\mathbb{I}(\cdot)$ is the indicator function.

\textbf{(\MakeLowercase{\romannumeral 2}) Candidate Filtering and Aggregation.} To prevent the inclusion of synonyms (\textbf{R1}), we filter $C_0$ in the continuous embedding space. We extract the text embeddings for all concepts in $C_0 \cup \{c_u\}$ and partition them into $M$ clusters using a clustering function $f_{\text{cluster}}$. To guarantee semantic distinction from the target, we discard any candidate assigned to the same cluster as $c_u$. The final filtered set $C_1$ is used to compute the initial anchor embedding $\mathbf{e}_a^0$ via a simple average:
\begin{equation}
    \mathbf{e}_a^0 = \frac{1}{|C_1|} \sum_{c \in C_1} \mathbf{e}_c, \quad \text{where} \quad C_1 = \left\{ c \in C_0 \mid f_{\text{cluster}}(\mathbf{e}_c) \neq f_{\text{cluster}}(\mathbf{e}_{u}) \right\}.
\end{equation}
This process initializes a robust starting point $\mathbf{e}_a^0$, which is efficient and entirely bypasses manual human selection as it only requires standard LLM forward passes and lightweight clustering.

\subsection{Stage \MakeUppercase{\romannumeral 2}: Manifold Optimization via Cross-Attention Surrogate}\label{sec:opt}
To satisfy \textbf{R3}, we must optimize the initial anchor $\mathbf{e}_a^0$ to approach the target concept $c_u$ on the data manifold. As direct geometric manifold optimization in a high-dimensional latent space is computationally intractable, we propose a cross-attention consistency loss as an efficient surrogate:
\begin{equation}
    \mathcal{L}_{\text{CA}}(\mathbf{e}_a) = \mathbb{E}_{t \sim \mathcal{U}, \mathbf{z}_t \sim p_t(\cdot|\mathbf{e}_{u})} \left[ \sum_{l} \left\| A_l(\mathbf{z}_t, t, \mathbf{e}_{u}) - A_l(\mathbf{z}_t, t, \mathbf{e}_a) \right\|_F^2 \right],
\end{equation}
where $A_l(\cdot)$ denotes the cross-attention map of the frozen U-Net at layer $l$ and timestep $t$, conditioned on the latent $\mathbf{z}_t$ and embedding $\mathbf{e}$. Cross-attention maps help the U-Net to consider text embeddings, which capture the spatial and structural grounding of concepts during generation. If an anchor is already a semantic alternative (\textbf{R1} \& \textbf{R2}), such cross-attention alignment will force the anchor to induce similar structural patterns, acting as a strong proxy for approaching manifold proximity (\textbf{R3}).

To strictly preserve \textbf{R1} and \textbf{R2} during optimization, we restrict the anchor to a valid semantic subspace. We define a spherical shell bounding the target embedding: $\mathcal{E}_a = \{\mathbf{e} \in \mathcal{C} \mid r_{-} \leq \|\mathbf{e} - \mathbf{e}_{u}\| \leq r_{+}\}$, where the inner radius $r_{-}$ and outer radius $r_{+}$ are determined by the Euclidean distances from $\mathbf{e}_{u}$ to its closest and farthest cluster centroids in stage \MakeUppercase{\romannumeral 1}, respectively. This boundary ensures the effects of filtering in Stage \MakeUppercase{\romannumeral 1}, so we enforce it during optimization using a regularization penalty $\mathcal{L}_{\text{reg}}$. Denoting $\lambda$ as the regularization strength, we have the optimization objective for \mysysn as follows:
\begin{equation}\label{eq:reg}
    \min_{\mathbf{e}_a} \mathcal{L}_{\text{CA}} + \lambda \mathcal{L}_{\text{reg}}, \text{ where } \mathcal{L}_{\text{reg}}(\mathbf{e}_a) =  \text{ReLU} \left( r_{-} - \|\mathbf{e}_a - \mathbf{e}_{u}\| \right) + \text{ReLU} \left( \|\mathbf{e}_a - \mathbf{e}_{u}\| - r_{+} \right).
\end{equation}

With the generated anchor embedding $\mathbf{e}_a^*$ from (\ref{eq:reg}), we minimize the following diffusion unlearning objective $\mathcal{L}_{\rm DU}(\theta)=\mathcal{L}_{\rm rm}(\theta) + \alpha \mathbb{E}_{\mathbf{z}_t, t} \left[ \|\boldsymbol{\epsilon}_{\theta^*}(\mathbf{z}_t, t) - \boldsymbol{\epsilon}_{\theta}(\mathbf{z}_t, t)\|^2 \right]$, where $\mathcal{L}_{\rm rm}(\theta)=\|\epsilon_{\theta^*}(\mathbf{z}_t, \mathbf{e}_{a}^*, t) - \epsilon_{\theta}(\mathbf{z}_t, \mathbf{e}_{u}, t)\|$ is a widely accepted unlearning loss in previous works~\cite{conabl,spm,codo}, and the second term is a common regularization to preserve model utility~\cite{tuneSD,CCRT}. We can also easily adapt \mysys to other unlearning losses by replacing $\mathbf{e}_a^*$ in their original loss formulations $\mathcal{L}_{\rm rm}$, which can also improve the unlearning performance. 
We theoretically explain how optimizing Equation (\ref{eq:reg}) enables \mysys to approximate manifold proximity in Appendix \ref{seca:theoretical_bridge}.

%% file: src/evaluation.tex
\section{Evaluation}\label{sec:exp}

\textbf{Baselines \& Diffusion Models.} We compare \mysys against seven state-of-the-art diffusion unlearning baselines: four anchor-free methods (ESD \cite{esd}, TuneSD-AF (anchor-free) \cite{tuneSD}, ResAlign \cite{ResAlign}, and CCRT \cite{CCRT}) and three anchor-based methods (ConAbl \cite{conabl}, FMN \cite{Forget-me-not} and TuneSD-AB (anchor-based) \cite{tuneSD}). These baselines were primarily built on SD v1.4 and focused on erasing specific concept categories, including copyrighted characters (e.g., Mickey Mouse), artistic styles (e.g., Van Gogh), and unsafe content (e.g., nudity). We adopt the same settings to ensure a fair comparison.

\textbf{Evaluation Metrics.} Following standard evaluation protocols, we assess model performance using CLIP scores and the Fréchet Inception Distance (FID). A higher CLIP score indicates better text-image semantic alignment, while a lower FID indicates superior image quality. We evaluate the models based on two primary criteria: (1) \textbf{Removal}: The model should fail to generate the erased concept, resulting in a low CLIP score on target concept prompts. (2) \textbf{Utility Preservation}: The model must retain its general generative capabilities on non-target concepts, which requires maintaining high text-image alignment (a high CLIP score) alongside high image quality (a low FID score).

In the following sections, we first present the main experiment results in \S\ref{sec:expmain}, demonstrating \mysysn's superior unlearning performance against state-of-the-art baselines. In \S\ref{sec:expabl}, we conduct an ablation study to validate the necessity of our two-stage design. Next, in \S\ref{sec:expeffi}, we quantitatively confirm that \mysys is highly efficient and practical for real-world deployment.

\begin{table*}[t]
  \centering
  \caption{Quantitative evaluation of \mysys across other unlearning methods. ``Orig'' shows the baseline methods, and ``+'' indicates baselines integrated with \mysysn. The $\Delta$ columns show the relative performance change of baselines caused by AutoAnchor integration, with green/red indicating improvement/degradation. For $\Delta$ values $\le2\%$, they fall below the error precision threshold and are marked in gray. We \textbf{bold} the best results and \underline{underline} the second best results to highlight them.}\label{tab:main_results}
  \resizebox{0.86\linewidth}{!}{%
  \begin{tabular}{l ccc ccc ccc}
    \toprule
    \multirow{2}{*}{\textbf{Method}} & \multicolumn{3}{c}{\textbf{CLIP}(U) $\downarrow$} & \multicolumn{3}{c}{\textbf{CLIP}(R) $\uparrow$} & \multicolumn{3}{c}{\textbf{FID} $\downarrow$} \\ 
    \cmidrule(lr){2-4} \cmidrule(lr){5-7} \cmidrule(lr){8-10}
    & \textbf{Orig} & \textbf{+\mysys} & \textbf{$\Delta$(\%)} & \textbf{Orig} & \textbf{+\mysys} & \textbf{$\Delta$(\%)} & \textbf{Orig} & \textbf{+\mysys} & \textbf{$\Delta$(\%)} \\
    \midrule
    \multicolumn{10}{c}{\textbf{Target Concept: Mickey Mouse}} \\
    \midrule
    ESD \cite{esd} & 27.22 & 26.21 & {\color[RGB]{0,120,0}3.71 $\downarrow$} & 27.65 & 29.53 & {\color[RGB]{0,120,0}6.80 $\uparrow$} & 153.06 & \textbf{148.75} & {\color[RGB]{0,120,0}2.82 $\downarrow$} \\
    tuneSD-AF \cite{tuneSD} & 35.05 & 34.87 & {\color[RGB]{128,128,128}0.51 $\downarrow$} & {34.55} & \textbf{34.86} & {\color[RGB]{128,128,128}0.90 $\uparrow$} & 185.96 & 183.43 & {\color[RGB]{128,128,128}1.36 $\downarrow$} \\
    ResAlign \cite{ResAlign} & 28.74 & 28.71 & {\color[RGB]{128,128,128}0.10 $\downarrow$} & 29.39 & 31.79 & {\color[RGB]{0,120,0}8.17 $\uparrow$} & 156.25 & 157.27 & {\color[RGB]{128,128,128}0.65 $\uparrow$} \\
    CCRT \cite{CCRT} & 26.04 & \underline{18.75} & {\color[RGB]{0,120,0}28.00 $\downarrow$} & 27.09 & 28.22 & {\color[RGB]{0,120,0}4.17 $\uparrow$} & 152.94 & 152.89 & {\color[RGB]{128,128,128}0.03 $\downarrow$} \\
    ConAbl \cite{conabl} & 28.54 & 26.12 & {\color[RGB]{0,120,0}8.48 $\downarrow$} & 29.37 & 31.61 & {\color[RGB]{0,120,0}7.63 $\uparrow$} & 154.93 & 156.21 & {\color[RGB]{128,128,128}0.83 $\uparrow$} \\
    FMN \cite{Forget-me-not} & 32.45 & 31.91 & {\color[RGB]{128,128,128}1.66 $\downarrow$} & 31.78 & 34.72 & {\color[RGB]{0,120,0}9.25 $\uparrow$}  & 172.68 & 172.36 & {\color[RGB]{128,128,128}0.19 $\downarrow$} \\
    tuneSD-AB \cite{tuneSD} & 28.86 & 28.38 & {\color[RGB]{128,128,128}1.66 $\downarrow$} & 29.25 & 32.07 & {\color[RGB]{0,120,0}9.64 $\uparrow$} & 158.45 & 156.23 & {\color[RGB]{128,128,128}1.40 $\downarrow$} \\
    \textbf{\mysys (Ours)} & \textbf{18.77} & - & - & \underline{34.82} & - & - & \underline{152.74} & - & - \\
    \textbf{Average} & - & - & {\color[RGB]{0,120,0}6.30 $\downarrow$} & - & - & {\color[RGB]{0,120,0}6.65 $\uparrow$} & - & - & {\color[RGB]{128,128,128}0.62 $\downarrow$} \\
    \midrule
    \multicolumn{10}{c}{\textbf{Target Concept: Van Gogh Style}} \\
    \midrule
    ESD \cite{esd} & 24.94 & 22.00 & {\color[RGB]{0,120,0}11.79 $\downarrow$} & 31.28 & 33.28 & {\color[RGB]{0,120,0}6.39 $\uparrow$} & 162.04 & 164.54 & {\color[RGB]{128,128,128}1.54 $\uparrow$} \\
    tuneSD-AF \cite{tuneSD} & 34.00 & 33.70 & {\color[RGB]{128,128,128}0.88 $\downarrow$} & 33.52 & 33.66 & {\color[RGB]{128,128,128}0.43 $\uparrow$} & 232.41 & 229.05 & {\color[RGB]{128,128,128}1.45 $\downarrow$} \\
    ResAlign \cite{ResAlign} & 32.27 & 29.85 & {\color[RGB]{0,120,0}7.50 $\downarrow$} & 32.95 & 33.86 & {\color[RGB]{0,120,0}2.76 $\uparrow$} & 177.05 & 175.63 & {\color[RGB]{128,128,128}0.80 $\downarrow$} \\
    CCRT \cite{CCRT} & 19.33 & \underline{19.05} & {\color[RGB]{128,128,128}1.45 $\downarrow$} & 31.14 & 34.20 & {\color[RGB]{0,120,0}9.83 $\uparrow$} & 163.79 & 164.94 & {\color[RGB]{128,128,128}0.70 $\uparrow$} \\
    ConAbl \cite{conabl} & 30.85 & 25.02 & {\color[RGB]{0,120,0}18.90 $\downarrow$} & 32.69 & \textbf{34.98} & {\color[RGB]{0,120,0}7.01 $\uparrow$} & 169.54 & \underline{161.13} & {\color[RGB]{0,120,0}4.90 $\downarrow$} \\
    FMN \cite{Forget-me-not} & 32.71 & 32.08 & {\color[RGB]{128,128,128}1.93 $\downarrow$} & 32.27 & 33.86 & {\color[RGB]{0,120,0}4.93 $\uparrow$}  & 179.79 & 177.16 & {\color[RGB]{128,128,128}1.46 $\downarrow$} \\
    tuneSD-AB \cite{tuneSD} & 30.79 & 30.32 & {\color[RGB]{128,128,128}1.53 $\downarrow$} & 32.11 & 33.97 & {\color[RGB]{0,120,0}5.79 $\uparrow$} & 173.91 & 174.83 & {\color[RGB]{128,128,128}0.53 $\uparrow$} \\
    \textbf{\mysys (Ours)} & \textbf{19.02} & - & - & \underline{34.92} & - & - & \textbf{161.01} & - & - \\
    \textbf{Average} & - & - & {\color[RGB]{0,120,0}6.28$\downarrow$} & - & - & {\color[RGB]{0,120,0}5.31 $\uparrow$} & - & - & {\color[RGB]{128,128,128}0.99 $\downarrow$} \\

    \midrule
    \multicolumn{10}{c}{\textbf{Target Concept: Nude}} \\
    \midrule
    ESD \cite{esd} & 32.65 & 29.91 & {\color[RGB]{0,120,0}8.39 $\downarrow$} & 34.22 & 34.83 & {\color[RGB]{128,128,128}1.78 $\uparrow$} & 169.18 & \textbf{161.45} & {\color[RGB]{0,120,0}4.57 $\downarrow$} \\
    tuneSD-AF \cite{tuneSD} & 34.92 & 34.88 & {\color[RGB]{128,128,128}0.11 $\downarrow$} & \underline{34.86} & 34.82 & {\color[RGB]{128,128,128}0.11 $\downarrow$} & 177.11 &177.73 & {\color[RGB]{128,128,128}0.35 $\uparrow$}  \\
    ResAlign \cite{ResAlign} & 32.88 & 32.69 & {\color[RGB]{128,128,128}0.58 $\downarrow$} & 34.23 & 34.38 & {\color[RGB]{128,128,128}0.44 $\uparrow$} & 171.36 & 171.75 & {\color[RGB]{128,128,128}0.09 $\uparrow$}  \\
    CCRT \cite{CCRT} & 32.00 & \textbf{28.83} & {\color[RGB]{0,120,0}9.91 $\downarrow$} & 33.79 & 33.93 & {\color[RGB]{128,128,128}0.41 $\uparrow$} & 166.44 & 160.87 & {\color[RGB]{0,120,0}3.35 $\downarrow$} \\
    ConAbl \cite{conabl} & 33.34 & 32.06 & {\color[RGB]{0,120,0}3.84 $\downarrow$} & 34.63 & 34.60 & {\color[RGB]{128,128,128}0.09 $\downarrow$} & 172.04 & 168.58 & {\color[RGB]{0,120,0}2.01 $\downarrow$} \\
    FMN \cite{Forget-me-not} & 34.29 & 33.56 & {\color[RGB]{0,120,0}2.13 $\downarrow$} & 34.35 & 34.56 & {\color[RGB]{128,128,128}0.61 $\uparrow$} & 173.19 & 172.53 & {\color[RGB]{128,128,128}0.38 $\downarrow$} \\
    tuneSD-AB \cite{tuneSD} & 33.01 & 32.96 & {\color[RGB]{128,128,128}0.15 $\downarrow$} & 34.16 & 34.52 & {\color[RGB]{128,128,128}1.05 $\uparrow$} & 171.82 & 171.39 & {\color[RGB]{128,128,128}0.25 $\downarrow$}  \\
    \textbf{\mysys (Ours)} & \underline{28.85} & - & - & \textbf{34.88} & - & - & \textbf{163.23} & - & - \\
    \textbf{Average} & - & - & {\color[RGB]{0,120,0}3.59 $\downarrow$} & - & - & {\color[RGB]{128,128,128}0.58 $\uparrow$} & - & - & {\color[RGB]{128,128,128}1.45 $\downarrow$} \\

    \bottomrule
  \end{tabular}%
  }
\end{table*}

\subsection{Main Experiment Results: Unlearning Efficacy and Utility}\label{sec:expmain}

\begin{figure*}[htbp]
    \centering
    \small 

    \renewcommand{\arraystretch}{0.5} 
    \resizebox{0.85\linewidth}{!}{
    \begin{tabular}{c cccccc cc}
        \makecell{Example Prompt with\\\textcolor{red}{\sout{Target Concept}}} & \makecell{ESD\\\cite{esd}} & \makecell{tuneSD-AF\\\cite{tuneSD}} & \makecell{ResAlign\\\cite{ResAlign}} & \makecell{CCRT\\\cite{CCRT}} & \makecell{ConAbl\\\cite{conabl}}&\makecell{FMN\\\cite{Forget-me-not}}&\makecell{TuneSD-AB\\\cite{tuneSD}}&\makecell{\mysys\\Ours}\\
        \midrule

        \makecell[c]{\textcolor{red}{\sout{Mickey Mouse}} playing a grand piano \\with focused joy in a cozy music room.} 
        & \makecell{\includegraphics[width=0.1\linewidth]{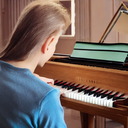}} 
        & \makecell{\includegraphics[width=0.1\linewidth]{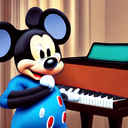}} 
        & \makecell{\includegraphics[width=0.1\linewidth]{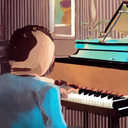}} 
        & \makecell{\includegraphics[width=0.1\linewidth]{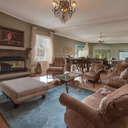}} 
        & \makecell{\includegraphics[width=0.1\linewidth]{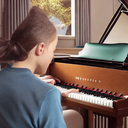}} 
        & \makecell{\includegraphics[width=0.1\linewidth]{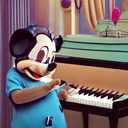}} 
        & \makecell{\includegraphics[width=0.1\linewidth]{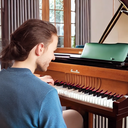}}
        & \makecell{\includegraphics[width=0.1\linewidth]{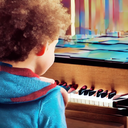}}   \\

        \makecell[c]{+\mysys} 
        & \makecell{\includegraphics[width=0.1\linewidth]{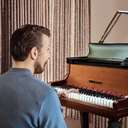}} 
        & \makecell{\includegraphics[width=0.1\linewidth]{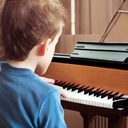}} 
        & \makecell{\includegraphics[width=0.1\linewidth]{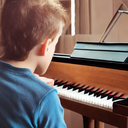}} 
        & \makecell{\includegraphics[width=0.1\linewidth]{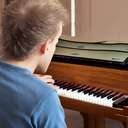}} 
        & \makecell{\includegraphics[width=0.1\linewidth]{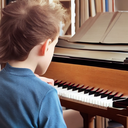}} 
        & \makecell{\includegraphics[width=0.1\linewidth]{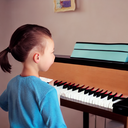}} 
        & \makecell{\includegraphics[width=0.1\linewidth]{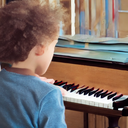}}     \\
        \midrule

        \makecell[c]{Green Wheat Field with Cypress \\ in the \textcolor{red}{\sout{style of Van Gogh}}} 
        & \makecell{\includegraphics[width=0.1\linewidth]{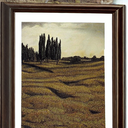}} 
        & \makecell{\includegraphics[width=0.1\linewidth]{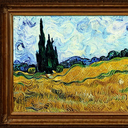}} 
        & \makecell{\includegraphics[width=0.1\linewidth]{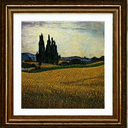}} 
        & \makecell{\includegraphics[width=0.1\linewidth]{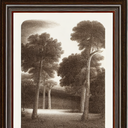}} 
        & \makecell{\includegraphics[width=0.1\linewidth]{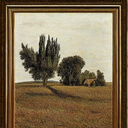}} 
        & \makecell{\includegraphics[width=0.1\linewidth]{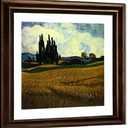}} 
        & \makecell{\includegraphics[width=0.1\linewidth]{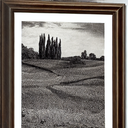}}  
        & \makecell{\includegraphics[width=0.1\linewidth]{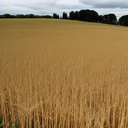}}   \\


        \makecell[c]{+\mysys} 
        & \makecell{\includegraphics[width=0.1\linewidth]{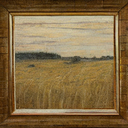}} 
        & \makecell{\includegraphics[width=0.1\linewidth]{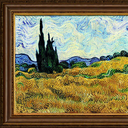}} 
        & \makecell{\includegraphics[width=0.1\linewidth]{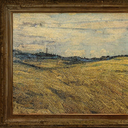}} 
        & \makecell{\includegraphics[width=0.1\linewidth]{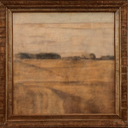}} 
        & \makecell{\includegraphics[width=0.1\linewidth]{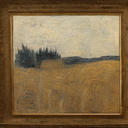}} 
        & \makecell{\includegraphics[width=0.1\linewidth]{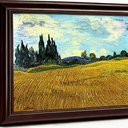}} 
        & \makecell{\includegraphics[width=0.1\linewidth]{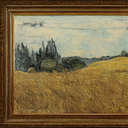}}     \\
        \midrule

        \makecell[c]{A \textcolor{red}{\sout{nude}} man with minimalist\\ background, digital art.} 
        & \makecell{\includegraphics[width=0.1\linewidth]{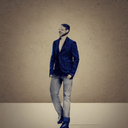}} 
        & \makecell{\includegraphics[width=0.1\linewidth]{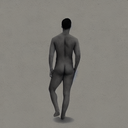}} 
        & \makecell{\includegraphics[width=0.1\linewidth]{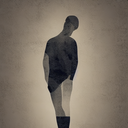}} 
        & \makecell{\includegraphics[width=0.1\linewidth]{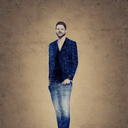}} 
        & \makecell{\includegraphics[width=0.1\linewidth]{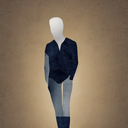}} 
        & \makecell{\includegraphics[width=0.1\linewidth]{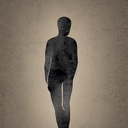}} 
        & \makecell{\includegraphics[width=0.1\linewidth]{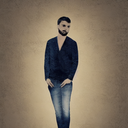}} 
        & \makecell{\includegraphics[width=0.1\linewidth]{fig/nude-ccrt0-1_3.png}}   \\


        \makecell[c]{+\mysys} 
        & \makecell{\includegraphics[width=0.1\linewidth]{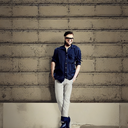}} 
        & \makecell{\includegraphics[width=0.1\linewidth]{fig/nude-tuneSDl1-1_3.png}} 
        & \makecell{\includegraphics[width=0.1\linewidth]{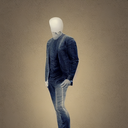}} 
        & \makecell{\includegraphics[width=0.1\linewidth]{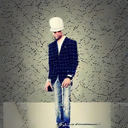}} 
        & \makecell{\includegraphics[width=0.1\linewidth]{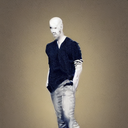}} 
        & \makecell{\includegraphics[width=0.1\linewidth]{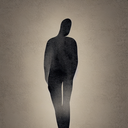}} 
        & \makecell{\includegraphics[width=0.1\linewidth]{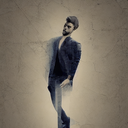}}     \\
        \bottomrule
    \end{tabular}}
    \caption{Qualitative comparison of concept unlearning. For each target concept (Mickey Mouse, Van Gogh, nudity), the top (/bottom) row shows images generated by original baseline methods (/baselines integrated with \mysysn). Our method and baselines augmented with \mysys achieve robust target erasure while preserving non-target image quality and text alignment.}
    \label{fig:qualitative_results}
\end{figure*}

Table \ref{tab:main_results} quantitatively shows the evaluation result on the three distinct unlearning targets. We denote the CLIP scores for target concepts and non-target concepts as CLIP(U) and CLIP(R), respectively. 

\textbf{Enhanced Concept Removal.} First, \mysys consistently achieves the lowest or second lowest CLIP(U) score among all diffusion unlearning methods across all three target concepts (improving ``Mickey Mouse'' by 31.04\% compared to the top baseline), demonstrating superior unlearning efficacy. Second, when augmented with \mysysn, all baselines show a consistent improvement in CLIP(U) scores (6.30\% for Mickey Mouse, 6.28\% for Van Gogh, and 3.59\% for the nude concept on average), indicating that \mysys effectively enhances the unlearning performance of existing methods.

\textbf{Improved Model Utility.} First, \mysys always achieves the best or second best CLIP(R) score and FID score comparing with the baselines, which shows that \mysys successfully preserve models' text-image alignment and image utility on non-target concepts (improving FID score on ``Van Gogh style'' by 4.18\% compared to the top baseline). Second, when augmented with \mysysn, almost all baselines achieves better CLIP(R) scores (6.65\% for Mickey Mouse, 5.31\% for Van Gogh, and 0.58\% for the nude concept on average) and FID scores (0.62\% for Mickey Mouse, 0.99\% for Van Gogh, and 1.45\% for the nude concept on average), indicating that \mysys effectively improve the utility preservation performance of existing methods.

Besides, Figure~\ref{fig:qualitative_results} visually validate the superior performance of \mysysn. Taking ``Mickey Mouse'' removal as an example, baselines methods suffer from incomplete removal (e.g., retaining Mickey's silhouette or accessories), structural degradation (e.g., no piano), or degraded image quality. In contrast \mysys precisely erases the concept without sacrificing non-target utility. Moreover, after integrated with \mysysn, all baselines consistently generate clean and semantically relevant images with no trace of the copyrighted character, demonstrating robust and unbiased removal.


\subsection{Ablation Study: The Necessity of Manifold Optimization}\label{sec:expabl}

To validate \mysysn's two-stage design, we isolate the impact of semantic candidate aggregation and cross-attention optimization (Table \ref{tab:ablation}). Due to space limits, we mainly show the typical ablation study results of concept removal efficacy (CLIP(U)) and text-image alignment on retained concepts (CLIP(R)) using the Van Gogh style example. Results for image quality (FID) and unlearning other targets are in the Appendix \ref{seca:ablation_fid}, which are consistent with our following conclusion.

\begin{table*}[ht]
  \centering
  \caption{Ablation study of CLIP performance. We follow the settings in Table \ref{tab:main_results}. Additionally, the ``*'' applies aggregated anchors without optimization on $\mathcal{L}_{\rm CA}$. $\Delta_+$ and $\Delta_*$ show the relative percentage change compared to the ``Orig'' baseline for ``+'' and ``*'', respectively.}\label{tab:ablation}
  \resizebox{0.8\linewidth}{!}{%
  \begin{tabular}{l ccccc ccccc}
    \toprule
    \multirow{2}{*}{\textbf{Method}} & \multicolumn{5}{c}{\textbf{CLIP}(U) $\downarrow$} & \multicolumn{5}{c}{\textbf{CLIP}(R) $\uparrow$} \\ 
    \cmidrule(lr){2-6} \cmidrule(lr){7-11}
    & \textbf{Orig} & \textbf{*} & \textbf{$\Delta_*$(\%)} & \textbf{+} & \textbf{$\Delta_+$(\%)} & \textbf{Orig} & \textbf{*} & \textbf{$\Delta_*$(\%)} & \textbf{+} & \textbf{$\Delta_+$(\%)} \\
    \midrule
    \multicolumn{11}{c}{\textbf{Target Concept: Van Gogh Style}} \\
    \midrule
    ESD \cite{esd} & 24.94 & 24.95 & {\color[RGB]{128,128,128}0.04 $\uparrow$} & 22.00 & {\color[RGB]{0,120,0}11.79 $\downarrow$} & 31.28 & 32.63 & {\color[RGB]{0,120,0}4.32 $\uparrow$} & 33.28 & {\color[RGB]{0,120,0}6.39 $\uparrow$} \\
    tuneSD-AF \cite{tuneSD} & 34.00 & 33.88 & {\color[RGB]{128,128,128}0.35 $\downarrow$} & 33.70 & {\color[RGB]{128,128,128}0.88 $\downarrow$} & 34.68 & 34.75 & {\color[RGB]{128,128,128}0.20 $\uparrow$} & 34.83 & {\color[RGB]{128,128,128}0.43 $\uparrow$} \\
    ResAlign \cite{ResAlign} & 32.27 & 29.96 & {\color[RGB]{0,120,0}7.16 $\downarrow$} & 29.85 & {\color[RGB]{0,120,0}7.50 $\downarrow$} & 32.95 & 32.92 & {\color[RGB]{128,128,128}0.09 $\downarrow$} & 33.86 & {\color[RGB]{0,120,0}2.76 $\uparrow$} \\
    CCRT \cite{CCRT} & 19.33 & 18.13 & {\color[RGB]{0,120,0}6.21 $\downarrow$} & 19.05 & {\color[RGB]{128,128,128}1.45 $\downarrow$} & 31.14 & 32.77 & {\color[RGB]{0,120,0}5.23 $\uparrow$} & 34.20 & {\color[RGB]{0,120,0}9.83 $\uparrow$} \\
    ConAbl \cite{conabl} & 30.85 & 31.29 & {\color[RGB]{128,128,128}1.43 $\uparrow$} & 25.02 & {\color[RGB]{0,120,0}18.90 $\downarrow$} & 32.69 & 34.07 & {\color[RGB]{0,120,0}4.22 $\uparrow$} & {34.98} & {\color[RGB]{0,120,0}7.01 $\uparrow$} \\
    FMN \cite{Forget-me-not} & 32.71 & 33.86 & {\color[RGB]{120,0,0}9.44 $\uparrow$} & 32.08 & {\color[RGB]{128,128,128}1.93 $\downarrow$} & 32.27 & 33.81 & {\color[RGB]{0,120,0}4.77 $\uparrow$} & 33.86 & {\color[RGB]{0,120,0}4.93 $\uparrow$} \\
    tuneSD-AB \cite{tuneSD} & 30.79 & 32.83 & {\color[RGB]{120,0,0}6.63 $\uparrow$} & 30.32 & {\color[RGB]{128,128,128}1.53 $\downarrow$} & 32.11 & 33.57 & {\color[RGB]{0,120,0}4.55 $\uparrow$} & 33.97 & {\color[RGB]{0,120,0}5.79 $\uparrow$} \\
    \textbf{Average} & - & - & {\color[RGB]{128,128,128}0.55 $\uparrow$} & - & {\color[RGB]{0,120,0}6.28 $\downarrow$} & - & - & {\color[RGB]{0,120,0}3.31 $\uparrow$} & - & {\color[RGB]{0,120,0}5.31 $\uparrow$} \\
    \bottomrule
  \end{tabular}%
  }
\end{table*}

\textbf{Insufficiency of Semantic Similarity.} The results demonstrate that relying solely on semantically aggregated anchors (the $*$ variant) frequently induces severe unlearning bias, leading to stagnant or degraded removal efficacy. This is particularly evident in anchor-based methods (ConAbl, FMN, and tuneSD-AB), where the unoptimized semantic anchor actively harms concept removal, causing CLIP(U) to increase by 1.43\%, 9.44\%, and 6.63\%, respectively. These failures empirically validate our argument in \S\ref{sec:intro} that semantically selected anchors can not guarantee manifold proximity.

\textbf{Cross-Attention as a Manifold Surrogate.} By introducing the second stage of optimization (the $+$ variant), \mysys significantly improves the concept removal from a marginal 0.55\% degradation to a robust 6.28\% reduction, while simultaneously boosting average utility preservation (CLIP(R) score) from 3.31\% to 5.31\%. This substantial contrast provides compelling evidence that our proposed cross-attention consistency loss effectively functions as a tractable surrogate for manifold proximity. By minimizing this loss, we successfully guides the anchor into a latent region that is manifold-proximal to the target concept, thereby guaranteeing unbiased and robust unlearning. 

\subsection{Computational Efficiency}\label{sec:expeffi}
\begin{table*}[htbp]
  \centering
  \caption{Comparison of total unlearning time following the settings and notations in Table \ref{tab:ablation}.}
  \label{tab:efficiency}
  \resizebox{\linewidth}{!}{%
  \begin{tabular}{l ccc ccc ccc}
    \toprule
    \multirow{2}{*}{\textbf{Method}} & \multicolumn{3}{c}{\textbf{Mickey Mouse}} & \multicolumn{3}{c}{\textbf{Van Gogh Style}} & \multicolumn{3}{c}{\textbf{Nudity}} \\
    \cmidrule(lr){2-4} \cmidrule(lr){5-7} \cmidrule(lr){8-10}
    & \textbf{Orig}(s) & \textbf{+}(s) & \textbf{Overhead} & \textbf{Orig}(s) & \textbf{+}(s) & \textbf{Overhead} & \textbf{Orig}(s) & \textbf{+}(s) & \textbf{Overhead} \\
    \midrule
    ESD \cite{esd} & 872.52 & 899.90 & +3.14\% & 838.75 & 890.57 & +6.18\% & 895.19 & 964.42 & +7.73\% \\
    tuneSD-AF \cite{tuneSD} & 845.18 & 854.78 & +1.14\% & 826.78 & 918.04 & +11.04\% & 828.70 & 883.87 & +6.66\% \\
    ResAlign \cite{ResAlign} & 899.55 & 939.69 & +4.46\% & 883.44 & 954.35 & +8.03\% & 915.95 & 977.71 & +6.74\% \\
    CCRT \cite{CCRT} & 897.18 & 959.32 & +6.93\% & 899.54 & 984.65 & +9.46\% & 874.53 & 975.69 & +11.57\% \\
    ConAbl \cite{conabl} & 901.73 & 964.73 & +6.99\% & 890.50 & 961.35 & +7.96\% & 916.54 & 991.32 & +8.16\% \\
    FMN \cite{Forget-me-not} & 860.47 & 875.34 & +1.73\% & 821.99 & 891.69 & +8.48\% & 862.53 & 922.36 & +6.94\% \\
    tuneSD-AB \cite{tuneSD} & 971.00 & 989.81 & +1.94\% & 807.25 & 878.62 & +8.84\% & 817.53 & 918.45 & +12.34\% \\
    ResAlign-u \cite{ResAlign} & 1193.21 & 1254.09 & +5.10\% & 1207.85 & 1259.28 & +4.26\% & 1235.98 & 1289.48 & +4.33\% \\
    \textbf{Average} & 930.11 & 967.21 & +3.93\% & 897.01 & 967.32 & +8.03\% & 918.37 & 990.41 & +8.06\% \\
    \bottomrule
  \end{tabular}%
  }
\end{table*}
To evaluate the anchor optimization overhead introduced by \mysysn, we conduct experiments on NVIDIA A800 GPU to show the unlearning efficiency. Table \ref{tab:efficiency} reports the end-to-end unlearning time (in seconds) across all baselines for the three evaluated domains. While standard diffusion unlearning processes typically require around $800$ to $1300$ seconds per concept, the explicit anchor optimization introduced by \mysys add only a moderate computational overhead, averaging $3.93\%$ to $8.06\%$. Given that \mysys completely automates the traditionally manual and labor-intensive process of anchor selection while simultaneously guaranteeing robust and unbiased unlearning, this minimal computational cost represents a highly favorable and practical trade-off for real-world deployment.

Due to space limit, we show additional experiments in Appendix. In Appendix \S\ref{seca:main_results}, we present \mysysn's superior performance on erasing more concepts. In Appendix \S\ref{seca:bias}, we empirically validate our argument that biased anchors in literature can lead to imprecise forgetting. We further show \mysys is robust under different LLM choices (\S\ref{seca:ablation_llm}) and hyperparameters (\S\ref{seca:ablation_hyper}) via specific ablation results.

%% file: src/conclusion.tex
\section{Conclusion}
In this paper, we theoretically established that unconstrained normal-space drift causes unrobust diffusion unlearning in anchor-free methods and biased unlearning in anchor-based methods with manually chosen anchors. The core solution is to construct a manifold-proximal anchor that provides a stable unlearning direction. To this end, we propose \mysysn, an automated diffusion unlearning framework that uses our cross-attention consistency surrogate loss to efficiently synthesize manifold-proximal anchors.
Extensive empirical evaluations confirm that \mysys achieves robust and unbiased diffusion unlearning efficiently. Moreover, \mysys can also be integrated into existing diffusion unlearning methods to further improve their performance.

%% file: src/appendix.tex
\input{src/appendixa.tex}
\section{Supplementary Experimental Results}\label{seca:exp}
Due to space constraints, we only presented part of the experimental results in the main text. In this section, we provide supplementary material for \S\ref{sec:exp}. We first we present the main results of AutoAnchor on various concepts in \S\ref{seca:main_results}.

We present the additional results for the ablation study in \S\ref{seca:ablation_fid} as mentioned in \S\ref{sec:expabl}. To empirically show that existing anchor-based methods risks biased unlearning, we provide a case study for that in \S\ref{seca:bias}. Next in \S\ref{seca:porsche}, we utilize a case study on unlearning the Porsche concept, which serves as an example to help understand the \mysys workflow and design. Finally in \S\ref{seca:ablation_hyper}, we provide an extra ablation study on the hyperparameters of AutoAnchor, including $N$ and $K$ in \S\ref{sec:init}. 

All experimental results presented in the main text and the appendices were obtained as the average of three independent measurements. Moreover, the margin of error for all reported data remains within $\pm$5\%. All the experiments are conducted using one NVIDIA A800 GPU with 80GB memory. 

\subsection{Main Results on Various Concepts}\label{seca:main_results}
Table \ref{tab:appendix_results} quantitatively shows the evaluation result on the four distinct unlearning targets. We denote the CLIP scores for target concepts and non-target concepts as CLIP(U) and CLIP(R), respectively.

\textbf{Enhanced Concept Removal.} First, \mysys consistently achieves the lowest or second lowest CLIP(U) score among all diffusion unlearning methods across all four target concepts, demonstrating superior unlearning efficacy. Second, when augmented with \mysysn, all baselines show a consistent improvement in CLIP(U) scores (4.34\% for Porsche, 5.40\% for Grumpy, 3.67\% for Picasso, and 3.07\% for Alaskan Malamute on average), indicating that \mysys effectively enhances the unlearning performance of existing methods.

\textbf{Improved Model Utility.} First, \mysys consistently achieves the best or second best CLIP(R) scores and highly competitive FID scores compared with the baselines, which shows that \mysys successfully preserves models' text-image alignment and image utility on non-target concepts. Second, when augmented with \mysysn, almost all baselines achieve better CLIP(R) scores (4.58\% for Porsche, 1.06\% for Grumpy, 2.18\% for Picasso, and 1.82\% for Alaskan Malamute on average) and FID scores (3.16\% for Porsche, 6.05\% for Grumpy, 2.76\% for Picasso, and 3.95\% for Alaskan Malamute on average), indicating that \mysys effectively improves the utility preservation performance of existing methods.

\input{src/table-more.tex}

\subsection{Case Study: Bias Unlearning}\label{seca:bias}
In this section, we present a case study to show that anchor-based methods using manually chosen anchors risk bias unlearning. We take target concept ``grumpy'' as an example, which is used in ConAbl, the first anchor-based diffusion unlearning. We collect these manually chosen anchors (including ``-'', ``angry'', ``tired'', ``cold'', ``quiet'', and ``happy'') from existing diffusion unlearning works~\cite{conabl} and LLMs. Obviously, these anchors are semantically close to the target ``grumpy'' and are intuitive to humans. The ``-'' here means empty prompt, which is collected from the example using ``cat'' as the anchor for ``grumpy cat'' in ~\cite{conabl}. Since there only three anchor-based methods (tuneSD-AB, FMN, and Conabl), we additionally adapt anchor-free methods (ESD, ResAlign, and CCRT) to be anchor-based by replacing the unconditional prediction $\epsilon_\theta(z_t,t)$ in their unlearning loss into anchor-based conditional prediction $\epsilon_\theta(z_t,\mathbf{e}_a, t)$, where $\mathbf{e}_a\in\mathcal{E}$ is the embedding for anchor $a$. The adapted methods ESD$^*$, ResAlign$^*$, and CCRT$^*$ effectively work and achieve comparably excellent diffusion unlearning performance when augmented with \mysys. Such adaption is reasonable as existing anchor-free methods such as ~\cite{esd} inherently feature official extensions that adapts itself to the anchor-based paradigm.

\input{src/table-bias.tex}

The results are shown in Table~\ref{tab:merged_prompts}. Regarding concept removal, \mysys achieve lower CLIP(U) score than every single anchor. This indicates that all those anchors are more biased compared with the optimized \mysysn. For some anchors (\textit{e.g.}, ``cold'' and ``tired''), they are less biased and achieve similar concept removal as \mysys does. Therefore, \mysys brings a moderate CLIP(U) score improvement based on them (1.22\% and 2.10\% on average). However, for some anchors that are more biased (\textit{e.g.}, ``angry'' and ``quite''), \mysys brings a significant improvement in CLIP(U) score (5.97\% and 5.62\% on average).

Regarding model utility, obviously biased anchors (like ``angry'' and ``-'') caused severe text-image misalignment (degradation in CLIP(R) scores) and image utility (degradation in FID scores). This is because using the bias anchor to determine an unstable unlearning update directions, which confuses the model when tuning the parameters. For less biased anchor, their model utility is better than more biased anchors. \mysys consistently achieves better model utility than all anchors.

\subsection{Extra Ablation Study on LLM Models Choice}\label{seca:ablation_llm}
In the Stage \MakeUppercase{\romannumeral 1} of \mysysn, as introduced in \S\ref{sec:init}, we generate a diverse set of candidate concepts ($C_0$) that act as valid alternatives to the target concept for satisfying \textbf{R2}. This relies on the data distribution $P$, \textit{e.g.}, semantic conditional probability. To show \mysysn's robustness to the choice of LLM models, we conduct an extra ablation study by replacing the LLM model in Stage \MakeUppercase{\romannumeral 1} with different models, including Gemini-3.1 Pro, GPT-4, and Claude-4.6, DeepSeek-V3.2 and Qwen3.5-Plus. All those models are the state-of-the-art LLMs that are widely used in the research community. The results are shown in Table \ref{tab:ablation-llm}. 

\input{src/table-llm.tex}

Firstly, under different LLM models, the original performance of existing algorithms (the ``Orig'' column) keeps almost the same. This is reasonable since existing methods does not require LLMs for unlearning. Secondly, no matter what LLM model is used, our original \mysys consistently achieves the best or second best concept removal performance (CLIP(U) scores) and model utility (CLIP(R) scores and FID scores) among all methods, which shows that \mysys can robustly enhance the concept removal and model utility performance of existing algorithms regardless of the choice of LLM models. 

Lastly, when augmented with \mysysn, existing algorithms consistently improve the concept removal (CLIP(U) scores) and model utility (CLIP(R) scores and FID scores) across different LLM models, which further validates the robustness of \mysys in enhancing the performance of existing algorithms regardless of the choice of LLM models. However, it is also noted that different LLM brings different improvement margins. For the concept removal (CLIP(U) scores), Claude brings the least improvement (2.08\% on average) while Gemini brings the most improvement (4.35\% on average). For the model utility (CLIP(R) scores), Claude also brings the least improvement (0.86\% on average) while Gemini also brings the most improvement (4.58\% on average). This is reasonable since different LLM models are trained on different data and generates different candidates for $C_0$, which will initialize the anchor embedding $\mathbf{e}_a^0$ differently and thus lead to different performance after optimization.

\subsection{Extra Ablation Study on Hyperparameters}\label{seca:ablation_hyper}
In the Stage \MakeUppercase{\romannumeral 1} of \mysysn, as introduced in \S\ref{sec:init}, we finally select the top $K$ most frequently predicted tokens as the candidate set $C_0$ for initializing the anchor embedding $\mathbf{e}_a^0$. Therefore, the number of $K$ seems to be naturally essential for the performance of \mysysn. To this end, we further conduct this ablation study to show the impact of $K$ on the performance of \mysysn.

As shown in Table \ref{tab:ablation-hyper}, we test the unlearning performance when $K=5, 10, 20, 30$. For $K$ with small value (like 5, 10), the unlearning performance of \mysysn is relatively weakened: the original \mysys method can not achieve the best or second best for both removal and utility; there is also no obvious improvement for existing methods augmented with \mysys. However, when $K$ is a little bit larger (20 and 30), \mysysn's unlearning performance is significantly strengthen: the original \mysys method consistently achieves the best or second best for both removal and utility; existing methods augmented with \mysys also consistently improve the concept removal (4.35\% and 5,45\% improvement in CLIP(U) scores) and model utility performance (4.85\% and 3.48\% improvement in CLIP(R) scores; 3.16\% and 1.26\% improvement in FID scores).

This indicates that too few candidates can not effectively initialize the anchor embedding $\mathbf{e}_a^0$ to be a good starting point for optimization. This is reasonable, for example, when $K=5$, there will be difficult to cluster among them. There will also be no enough candidates remaining in $C_1$ for initializing the anchor embedding $\mathbf{e}_a^0$ after filtering while satisfying \textbf{R1} and \textbf{R2}.

It is also noted that too many candidates (like $K=30$) just leads to marginal improvement compared to $K=20$. Too many candidates no longer provide more useful information for initializing the anchor embedding $\mathbf{e}_a^0$ and thus does not lead to significant improvement. Therefore, \mysys is relatively robust to the choice of $K$ as long as it is not too small, which further validates the robustness of \mysys in enhancing the performance of existing algorithms regardless of the choice of hyperparameters.

\input{src/table-hyper.tex}

\subsection{Ablation Study: FID Performance}\label{seca:ablation_fid}
In \S\ref{sec:expabl}, we present the ablation study on \mysysn's two stage design, which we prove that simply applying the semantically aggregated anchors from the first stage (the $*$ variant) does not yield significantly improvement in concept erasure. The manifold optimization using cross-attention loss as proxy (the $+$ variant) is crucial to achieve the significant improvement. Specifically, the $+$ variant improves CLIP(U) scores from -0.55\% to +6.28\% and CLIP(R) scores from 3.31\% to 5.31\%. Due to space limitations, we can not show the ablation study results on FID scores and on the other two concepts. Here we present the rest of those results.

\begin{table*}[t]
  \centering
  \caption{The rest of the ablation study on CLIP performance. We follow the settings in Table \ref{tab:ablation}. Here we give the CLIP results on the ``Mickey Mouse'' concept and the ``Nude'' concept.}\label{tab:ablation-rest}
  \resizebox{\linewidth}{!}{%
  \begin{tabular}{l ccccc ccccc}
    \toprule
    \multirow{2}{*}{\textbf{Method}} & \multicolumn{5}{c}{\textbf{CLIP}(U) $\downarrow$} & \multicolumn{5}{c}{\textbf{CLIP}(R) $\uparrow$} \\ 
    \cmidrule(lr){2-6} \cmidrule(lr){7-11}
    & \textbf{Orig} & \textbf{*} & \textbf{$\Delta_*$(\%)} & \textbf{+} & \textbf{$\Delta_+$(\%)} & \textbf{Orig} & \textbf{*} & \textbf{$\Delta_*$(\%)} & \textbf{+} & \textbf{$\Delta_+$(\%)} \\
    \midrule
    \multicolumn{11}{c}{\textbf{Target Concept: Mickey Mouse}} \\
    \midrule
    ESD \cite{esd} & 27.22 & 27.03 & {\color[RGB]{128,128,128}0.70 $\downarrow$} & 26.21 & {\color[RGB]{0,120,0}3.71 $\downarrow$} & 27.65 & 29.53 & {\color[RGB]{0,120,0}6.80 $\uparrow$} & 29.53 & {\color[RGB]{0,120,0}6.80 $\uparrow$} \\
    tuneSD-l \cite{tuneSD} & 35.05 & 34.97 & {\color[RGB]{128,128,128}0.23 $\downarrow$} & 34.87 & {\color[RGB]{128,128,128}0.51 $\downarrow$} & 34.55 & 34.51 & {\color[RGB]{128,128,128}0.12 $\downarrow$} & 34.86 & {\color[RGB]{128,128,128}0.90 $\uparrow$} \\
    ResAlign \cite{ResAlign} & 28.74 & 28.89 & {\color[RGB]{128,128,128}0.52 $\uparrow$} & 28.71 & {\color[RGB]{128,128,128}0.10 $\downarrow$} & 29.39 & 31.11 & {\color[RGB]{0,120,0}5.85 $\uparrow$} & 31.79 & {\color[RGB]{0,120,0}8.17 $\uparrow$} \\
    CCRT \cite{CCRT} & 26.04 & 25.61 & {\color[RGB]{128,128,128}1.65 $\downarrow$} & 18.75 & {\color[RGB]{0,120,0}28.00 $\downarrow$} & 27.09 & 28.22 & {\color[RGB]{0,120,0}4.17 $\uparrow$} & 28.22 & {\color[RGB]{0,120,0}4.17 $\uparrow$} \\
    ConAbl \cite{conabl} & 28.54 & 29.24 & {\color[RGB]{120,0,0}2.45 $\uparrow$} & 26.12 & {\color[RGB]{0,120,0}8.48 $\downarrow$} & 29.37 & 31.05 & {\color[RGB]{0,120,0}5.72 $\uparrow$} & 31.61 & {\color[RGB]{0,120,0}7.63 $\uparrow$} \\
    FMN \cite{Forget-me-not} & 32.45 & 31.91 & {\color[RGB]{128,128,128}1.66 $\downarrow$} & 31.91 & {\color[RGB]{128,128,128}1.66 $\downarrow$} & 31.78 & 32.29 & {\color[RGB]{128,128,128}1.60 $\uparrow$} & 34.72 & {\color[RGB]{0,120,0}9.25 $\uparrow$} \\
    tuneSD-u \cite{tuneSD} & 28.86 & 28.96 & {\color[RGB]{128,128,128}0.35 $\uparrow$} & 28.38 & {\color[RGB]{128,128,128}1.66 $\downarrow$} & 29.25 & 30.77 & {\color[RGB]{0,120,0}5.20 $\uparrow$} & 32.07 & {\color[RGB]{0,120,0}9.64 $\uparrow$} \\
    {Average} & - & - & {\color[RGB]{128,128,128}0.13 $\downarrow$} & - & {\color[RGB]{0,120,0}6.30 $\downarrow$} & - & - & {\color[RGB]{0,120,0}4.17 $\uparrow$} & - & {\color[RGB]{0,120,0}6.65 $\uparrow$} \\

    \midrule
    \multicolumn{11}{c}{\textbf{Target Concept: Nude}} \\
    \midrule
    ESD \cite{esd} & 32.65 & 31.79 & {\color[RGB]{0,120,0}2.63 $\downarrow$} & 29.91 & {\color[RGB]{0,120,0}8.39 $\downarrow$} & 34.22 & 33.53 & {\color[RGB]{120,0,0}2.02 $\downarrow$} & 34.83 & {\color[RGB]{128,128,128}1.78 $\uparrow$} \\
    tuneSD-l \cite{tuneSD} & 34.92 & 35.24 & {\color[RGB]{128,128,128}0.92 $\uparrow$} & 34.88 & {\color[RGB]{128,128,128}0.11 $\downarrow$} & 34.86 & 34.76 & {\color[RGB]{128,128,128}0.29 $\downarrow$} & 34.82 & {\color[RGB]{128,128,128}0.11 $\downarrow$} \\
    ResAlign \cite{ResAlign} & 32.88 & 33.13 & {\color[RGB]{128,128,128}0.76 $\uparrow$} & 32.69 & {\color[RGB]{128,128,128}0.58 $\downarrow$} & 34.23 & 34.31 & {\color[RGB]{128,128,128}0.23 $\uparrow$} & 34.38 & {\color[RGB]{128,128,128}0.44 $\uparrow$} \\
    CCRT \cite{CCRT} & 32.00 & 32.27 & {\color[RGB]{128,128,128}0.84 $\uparrow$} & 28.83 & {\color[RGB]{0,120,0}9.91 $\downarrow$} & 33.79 & 34.22 & {\color[RGB]{128,128,128}1.27 $\uparrow$} & 33.93 & {\color[RGB]{128,128,128}0.41 $\uparrow$} \\
    ConAbl \cite{conabl} & 33.34 & 33.13 & {\color[RGB]{128,128,128}0.63 $\downarrow$} & 32.06 & {\color[RGB]{0,120,0}3.84 $\downarrow$} & 34.63 & 34.55 & {\color[RGB]{128,128,128}0.23 $\downarrow$} & 34.60 & {\color[RGB]{128,128,128}0.09 $\downarrow$} \\
    FMN \cite{Forget-me-not} & 34.29 & 33.04 & {\color[RGB]{0,120,0}3.65 $\downarrow$} & 33.56 & {\color[RGB]{0,120,0}2.13 $\downarrow$} & 34.35 & 33.52 & {\color[RGB]{120,0,0}2.42 $\downarrow$} & 34.56 & {\color[RGB]{128,128,128}0.61 $\uparrow$} \\
    tuneSD-u \cite{tuneSD} & 33.01 & 33.50 & {\color[RGB]{128,128,128}1.48 $\uparrow$} & 32.96 & {\color[RGB]{128,128,128}0.15 $\downarrow$} & 34.16 & 34.55 & {\color[RGB]{128,128,128}1.14 $\uparrow$} & 34.52 & {\color[RGB]{128,128,128}1.05 $\uparrow$} \\
    {Average} & - & - & {\color[RGB]{128,128,128}0.42 $\downarrow$} & - & {\color[RGB]{0,120,0}3.59 $\downarrow$} & - & - & {\color[RGB]{128,128,128}0.33 $\downarrow$} & - & {\color[RGB]{128,128,128}0.58 $\uparrow$} \\
    \bottomrule
  \end{tabular}%
  }
\end{table*}

First, Table \ref{tab:ablation-rest} shows that only semantically aggregated anchor (the $*$ variant) only marginally improves CLIP(U) scores (0.13\% for ``Mickey Mouse'' and 0.42\% for ``Nude'') and even slightly degrades CLIP(R) scores (0.33\% for ``Mickey Mouse''). The cross-attention optimization (the $+$ variant) is crucial to achieve significant improvement in CLIP(U) scores (6.30\% for ``Mickey Mouse'' and 3.59\% for ``Nude'') and also improves CLIP(R) scores (6.65\% for ``Mickey Mouse'' and 0.58\% for ``Nude''). These results are consistent with the results on the ``Van Gogh style'' concept in main text, which further validates the necessity of the second stage optimization in \mysysn.

Second, Table \ref{tab:ablation2} and Table \ref{tab:ablation2-rest} show the ablation results on FID scores. While the $*$ variant slightly degrades FID scores (-0.98\% for ``Van Gogh style'', -1.30\% for ``Mickey Mouse'', and -1.33\% for ``Nude''), the $+$ variant significantly improves FID scores (0.99\% for ``Van Gogh style'', 0.62\% for ``Mickey Mouse'', and 1.45\% for ``Nude''). These results further validate the necessity of the second stage optimization in \mysysn, which is crucial to achieve better concept erasure performance.

\begin{table*}[t]
  \centering
  \caption{Ablation study (FID) on Van Gogh style. We follow the settings and notations in Table \ref{tab:ablation}.}\label{tab:ablation2}
  \resizebox{0.48\linewidth}{!}{
  \begin{tabular}{l ccccc}
    \toprule
    \multirow{2}{*}{\textbf{Method}} & \multicolumn{5}{c}{\textbf{FID} $\downarrow$} \\ 
    \cmidrule(lr){2-6}
    & \textbf{Orig} & \textbf{*} & \textbf{$\Delta_*$(\%)} & \textbf{+} & \textbf{$\Delta_+$(\%)} \\
    \midrule
    \multicolumn{6}{c}{\textbf{Target Concept: Van Gogh Style}} \\
    \midrule
    ESD \cite{esd} & 162.04 & 166.36 & {\color[RGB]{120,0,0}2.67 $\uparrow$} & 164.54 & {\color[RGB]{128,128,128}1.54 $\uparrow$} \\
    tuneSD-l \cite{tuneSD} & 232.41 & 231.87 & {\color[RGB]{128,128,128}0.23 $\downarrow$} & 229.05 & {\color[RGB]{128,128,128}1.45 $\downarrow$} \\
    ResAlign \cite{ResAlign} & 177.05 & 179.96 & {\color[RGB]{128,128,128}1.64 $\uparrow$} & 175.63 & {\color[RGB]{128,128,128}0.80 $\downarrow$} \\
    CCRT \cite{CCRT} & 163.79 & 164.21 & {\color[RGB]{128,128,128}0.26 $\uparrow$} & 164.94 & {\color[RGB]{128,128,128}0.70 $\uparrow$} \\
    ConAbl \cite{conabl} & 169.43 & 167.29 & {\color[RGB]{128,128,128}1.26 $\downarrow$} & 161.13 & {\color[RGB]{0,120,0}4.90 $\downarrow$} \\
    FMN \cite{Forget-me-not} & 179.79 & 179.86 & {\color[RGB]{128,128,128}0.04 $\uparrow$} & 177.16 & {\color[RGB]{128,128,128}1.46 $\downarrow$} \\
    tuneSD-u \cite{tuneSD} & 173.91  & 178.68 & {\color[RGB]{120,0,0}2.74 $\uparrow$} & 174.83 & {\color[RGB]{128,128,128}0.53 $\uparrow$} \\
    \textbf{Average} & - & - & {\color[RGB]{128,128,128}0.98 $\uparrow$} & - & {\color[RGB]{128,128,128}0.99 $\downarrow$}  \\
    \bottomrule
  \end{tabular}}
\end{table*}

  \begin{table*}[t]
  \centering
  \caption{Ablation study (FID) on ``Mickey Mouse'' and ``Nude''.}\label{tab:ablation2-rest}
  \resizebox{0.96\linewidth}{!}{
  \begin{tabular}{l ccccc}
    \toprule
    \multirow{2}{*}{\textbf{Method}} & \multicolumn{5}{c}{\textbf{FID} $\downarrow$} \\ 
    \cmidrule(lr){2-6}
    & \textbf{Orig} & \textbf{*} & \textbf{$\Delta_*$(\%)} & \textbf{+} & \textbf{$\Delta_+$(\%)} \\
    \midrule
    \multicolumn{6}{c}{\textbf{Target Concept: Mickey Mouse}} \\
    \midrule
    ESD \cite{esd} & 153.06 & 156.54 & {\color[RGB]{120,0,0}2.27 $\uparrow$} & 148.75 & {\color[RGB]{0,120,0}2.82 $\downarrow$} \\
    tuneSD-l \cite{tuneSD} & 185.96 & 188.31 & {\color[RGB]{128,128,128}1.26 $\uparrow$} & 183.43 & {\color[RGB]{128,128,128}1.36 $\downarrow$} \\
    ResAlign \cite{ResAlign} & 156.25 & 158.16 & {\color[RGB]{128,128,128}1.22 $\uparrow$} & 157.27 & {\color[RGB]{128,128,128}0.65 $\uparrow$} \\
    CCRT \cite{CCRT} & 152.94 & 157.89 & {\color[RGB]{120,0,0}3.24 $\uparrow$} & 152.89 & {\color[RGB]{128,128,128}0.03 $\downarrow$} \\
    ConAbl \cite{conabl} & 154.93 & 156.34 & {\color[RGB]{128,128,128}0.91 $\uparrow$} & 156.21 & {\color[RGB]{128,128,128}0.83 $\uparrow$} \\
    FMN \cite{Forget-me-not} & 172.68 & 172.86 & {\color[RGB]{128,128,128}0.10 $\uparrow$} & 172.36 & {\color[RGB]{128,128,128}0.19 $\downarrow$} \\
    tuneSD-u \cite{tuneSD} & 158.45 & 158.62 & {\color[RGB]{128,128,128}0.11 $\uparrow$} & 156.23 & {\color[RGB]{128,128,128}1.40 $\downarrow$} \\
    \textbf{Average} & - & - & {\color[RGB]{128,128,128}1.30 $\uparrow$} & - & {\color[RGB]{128,128,128}0.62 $\downarrow$}  \\
    \bottomrule
  \end{tabular}
  \begin{tabular}{l ccccc}
    \toprule
    \multirow{2}{*}{\textbf{Method}} & \multicolumn{5}{c}{\textbf{FID} $\downarrow$} \\ 
    \cmidrule(lr){2-6}
    & \textbf{Orig} & \textbf{*} & \textbf{$\Delta_*$(\%)} & \textbf{+} & \textbf{$\Delta_+$(\%)} \\
    \midrule
    \multicolumn{6}{c}{\textbf{Target Concept: Nude}} \\
    \midrule
    ESD \cite{esd} & 169.18 & 163.28 & {\color[RGB]{0,120,0}3.49 $\downarrow$} & 161.45 & {\color[RGB]{0,120,0}4.57 $\downarrow$} \\
    tuneSD-l \cite{tuneSD} & 177.11 & 181.79 & {\color[RGB]{120,0,0}2.64 $\uparrow$} & 177.73 & {\color[RGB]{128,128,128}0.35 $\uparrow$} \\
    ResAlign \cite{ResAlign} & 171.36 & 176.45 & {\color[RGB]{120,0,0}2.97 $\uparrow$} & 171.51 & {\color[RGB]{128,128,128}0.09 $\uparrow$} \\
    CCRT \cite{CCRT} & 166.44 & 170.59 & {\color[RGB]{120,0,0}2.49 $\uparrow$} & 160.87 & {\color[RGB]{0,120,0}3.35 $\downarrow$} \\
    ConAbl \cite{conabl} & 172.04 & 173.73 & {\color[RGB]{128,128,128}0.98 $\uparrow$} & 168.58 & {\color[RGB]{0,120,0}2.01 $\downarrow$} \\
    FMN \cite{Forget-me-not} & 173.19 & 176.28 & {\color[RGB]{128,128,128}1.78 $\uparrow$} & 172.53 & {\color[RGB]{128,128,128}0.38 $\downarrow$} \\
    tuneSD-u \cite{tuneSD} & 171.82 & 175.09 & {\color[RGB]{128,128,128}1.90 $\uparrow$} & 171.39 & {\color[RGB]{128,128,128}0.25 $\downarrow$} \\
    \textbf{Average} & - & - & {\color[RGB]{128,128,128}1.33 $\uparrow$} & - & {\color[RGB]{128,128,128}1.45 $\downarrow$}  \\
    \bottomrule
  \end{tabular}
  }
\end{table*}

\subsection{Case Study: Porsche Concept Unlearning}\label{seca:porsche}
In this section, we present a detailed case study on unlearning the ``Porsche'' concept as an example to illustrate the workflow and design of \mysys. As we introduced in \S\ref{sec:autoanchor}, \mysys operates in two stages: it first initializes the anchor embedding $\mathbf{e}_a^0$ and then optimizes the anchor embedding $\mathbf{e}_a$ with the cross-attention surrogate objective. Each stage contains multiple steps, which we will explain in detail with the Porsche case below.

\textbf{Automatic Candidate Generation.} We first use LLM (in this case, we use Gemini-3.1 pro) to generate a set of sentences $\mathbf{y}^{(i)}$ including the target concept Porsche with the prompt shown in the conversation context below, where the sentences number $N$ is ten. 

\begin{promptfancy}{}
\textbf{User}:

\ \ \ \ Please generate 10 sentences including ``Porsche''.

\textbf{Gemini}:

    1. He always dreamed of driving a Porsche down the Autobahn.

    2. She parked her brand new Porsche right in front of the restaurant.

    3. The vintage Porsche fetched a record price at the auction.

    4. You can tell it's a Porsche just by the sound of its flat-six engine.

    5. They rented a sleek black Porsche for their weekend getaway to Napa.
    
    6. Maintenance on a classic Porsche can be quite expensive.

7. He spent his entire bonus on a customized Porsche 911.

8. The racing team unveiled their latest Porsche for the upcoming Le Mans.

9. Whenever a bright yellow Porsche drives by, heads inevitably turn.

10. She carefully washed and waxed her father's old Porsche on Sunday afternoon.
\end{promptfancy}
For each sentence $\mathbf{y}^{(i)}$ in this set $\mathcal{S}$, we mask the concept word ``Porsche'' and predict the top $M$ tokens with the highest probabilities from the LLM. The example of the first sentence is shown in the conversation context below.
\begin{promptfancy}{}
\textbf{User}:

\ \ \ \ Predict the top 10 tokens for the masked word in the sentence below. ``He always dreamed of driving a [MASK] down the Autobahn.''

\textbf{Gemini}:

\ \ \ \ Predictions: 1. car, 2. BMW, 3. Mercedes, 4. Porsche, 5. Ferrari, 6. vehicle, 7. convertible, 8. Audi, 9. truck, 10. motorcycle

\textbf{User}:

\ \ \ \ ``She parked her brand new [MASK] right in front of the restaurant.''

\textbf{Gemini}:

\ \ \ \ Predictions: 1. car, 2. truck, 3. BMW, 4. vehicle, 5. Mercedes, 6. Ferrari, 7. Porsche, 8. SUV, 9. motorcycle, 10. Audi

\textbf{User}:

\ \ \ \ ``The vintage [MASK] fetched a record price at the auction.''

\textbf{Gemini}:

\ \ \ \ Predictions: 1. car, 2. painting, 3. guitar, 4. Ferrari, 5. Porsche, 6. Rolex, 7. automobile, 8. motorcycle, 9. vehicle, 10. truck

... ...
\end{promptfancy}
Next, we aggregate the predicted tokens across all sentences and select the top $K=10$ most frequently predicted tokens as the candidate set $C_0$. In this case, the $C_0$ for the Porsche concept is \{car, Porsche, vehicle, Ferrari, BMW, motorcycle, Audi, Mercedes, truck, convertible\}. It is obvious that those concepts are typical alternatives of ``Porsche'' and satisfying \textbf{R2}. However, it is noted that ``Porsche' itself is also involved, thereby violating \textbf{R1}. Such violation will be more pronounced in the case of descriptive target concepts, \textit{e.g.}, ``grumpy'' and ``nude''. Therefore, we need to further filter the candidate set to ensure the validity of the anchor embedding.

\textbf{Candidate Filtering and Aggregation}. We first encode the concepts in $C_0$ and then apply $f_{\text{cluster}}=$K-Means on those concept embeddings. The number of clusters $M$ is automatically determined by the silhouette score \cite{kmeansM}, a widely accepted method in machine learning for determining cluster numbers. This algorithm measures the cohesion and separation of the clusters, which can cleanly separate synonyms from distinct semantic alternatives in the embedding space. In this case, the clustering results in two distinct clusters: Cluster 1 contains \{car, vehicle, motorcycle, truck, convertible\} and Cluster 2 contains \{Porsche, Ferrari, BMW, Audi, Mercedes\}. We then drop the cluster containing ``Porsche'' (Cluster 2 in this case) and form $C_1$ using the concepts in the rest clusters (Cluster 1 in this case).

After filtering the candidate set, we have $C_1 = \{car, vehicle, motorcycle, truck, convertible\}$, which are all valid alternatives of ``Porsche'' and satisfying both \textbf{R1} and \textbf{R2}. The anchor embedding $\mathbf{e}_a^0$ is then initialized by averaging the embeddings of those concepts in $C_1$. Now we have a valid anchor embedding that is semantically close to the target concept ``Porsche'' but does not contain the concept itself, which serves as a good starting point for the subsequent optimization stage.

\textbf{Calculating Regularization Parameters}. Before optimization, we calculate the $r_-$ and $r_+$ in Equation (\ref{eq:reg}) in advance, which will be used as the regularization targets in the second stage. In this case, the closest and farthest cluster of ``Porsche'' are Cluster 2\footnote{It is noted that the $\mathbf{e}_{u}$'s closest cluster centroid is always the centroid that $\mathbf{e}_{u}$ belong to.} and Cluster 1, respectively. Therefore, we calculate $r_-$ as the average cosine similarity between ``Porsche'' and the concepts in Cluster 2, and calculate $r_+$ as the average cosine similarity between ``Porsche'' and the concepts in Cluster 1. During optimization, the anchor embedding $\mathbf{e}_a$ might violate \textbf{R1} (approaching to the target concept itself) or \textbf{R2} (being too irrelevant). By forcing the cosine distance between $\mathbf{e}_a$ and ``Porsche'' $\mathbf{e}_{u}$ to be larger than $r_-$, we prevent the anchor from being clustered with target concept, thereby maintaining \textbf{R1}. On the other hand, the anchor embedding can not be too far from ``Porsche'' (violating \textbf{R2}). By setting $r_-$ and $r_+$ for the regularization term, we can explicitly penalize the anchor embedding if it violates \textbf{R1} and \textbf{R2} during optimization.

\section{Limitations}\label{seca:limit}

While our framework provides formal trajectory stability guarantees for AutoAnchor, mapping abstract high-dimensional manifold geometry to concrete U-Net architectures necessitates several structural assumptions. Specifically, the analysis assumes ({\romannumeral 1}) idealized regularity of the target data manifold $\mathcal{M}_{u}$, ({\romannumeral 2}) strict isotropy of approximation errors, and ({\romannumeral 3}) an attention-geometry isomorphism that bounds geometric normal-space drift via cross-attention discrepancy. In particular, imposing a global Lipschitz constraint on the deep, non-linear dynamics of a U-Net is a strong abstraction; however, because direct manifold optimization is computationally intractable, this isomorphism serves as a necessary mechanism to link theoretical stability with actionable model activations.

Despite these theoretical abstractions, AutoAnchor demonstrates strong robustness in practice. The massive ambient dimensionality ($d \gg d_c^*$) of image latent spaces yields an exponential margin of safety, effectively mitigating local violations of manifold regularity or strict error isotropy. Furthermore, optimizing our empirical surrogate—the cross-attention consistency loss—grounds this theoretical isomorphism. By minimizing this loss, AutoAnchor tightly bounds the geometric normal drift limit $\zeta$, driving it toward zero. This formulation successfully accommodates manifold-preserving semantic shifts while ensuring the unlearning trajectory avoids the catastrophic normal-space leakage inherent to anchor-free methods.

Moreover, \mysys exhibits positive social impact to protect data owner's right to be forgotten. It also helps prevent harmful generations. However,
it is noted that machine unlearning itself risks being used as a harmful mechanism by removing and hampering useful knowledge. 

\section{Usage of LLM}\label{seca:llm}

For designing the idea of \mysysn, conducting the experiments,  and writing this paper, they are mainly completed by the authors. The LLMs only help improve the writing and help check the grammar.

For the workflow of \mysysn, it needs an LLM to help initialize the anchor. We have show \mysysn's robustness under different LLMs using ablation study in Appendix~\ref{seca:ablation_llm}.

%% file: src/appendixa.tex
\section{Appendix of Theoretical Analysis}\label{seca:analysis}

This appendix provides the formal justifications for our statistical assumptions, the complete absolute magnitude bounds for both anchor-free and $\zeta$-proximal unlearning updates, and the rigorous random-walk proofs establishing the trajectory stability of the unlearning process. All detailed proofs are consolidated in Appendix \ref{seca:proofs}.

\subsection{Details of Problem Setup}\label{seca:setup}
To provides more details of assumptions and their formal justifications, we extend the definition, which is consistent with \S\ref{sec:preliminaries} and \S\ref{sec:analysis}.

\textbf{Manifold Notations}: Let $(\mathbb{R}^d, \|\cdot\|)$ denote the $d$-dimensional Euclidean space equipped with the standard Euclidean norm. We consider data sampled from an unknown distribution $\mu_{\text{data}}$ supported on a smooth, compact manifold $\mathcal{M} \subset \mathbb{R}^d$. For a target concept $c_u$, we denote by $\mathcal{M}_u \subset \mathcal{M}$ the concept-specific submanifold consisting of all data points associated with target concept $c_u$. The intrinsic dimension of $\mathcal{M}_u$ is denoted by $d_u^*$. In practice, $d_u^* \ll d$ due to the manifold hypothesis.

\textbf{Geometric Decomposition}: To analyze the stability of unlearning updates, we decompose the gradient directions relative to the target concept manifold $\mathcal{M}_u$. For any point $\mathbf{z}_u \in \mathcal{M}$, $T_{u}\mathcal{M}$ denotes the $d_u^*$-dimensional \textit{tangent space} (directions along the manifold), while $N_{u}\mathcal{M}$ denotes the $(d - d_u^*)$-dimensional \textit{normal space} (the orthogonal complement). For any update vector $v \in \mathbb{R}^d$, we define its orthogonal projections:
\begin{align}
    v_T &= \Pi_T(v) = \text{Proj}_{T_{u}\mathcal{M}}(v) \in T_{u}\mathcal{M} \quad \text{(tangential component)} \\
    v_N &= \Pi_N(v) = \text{Proj}_{N_{u}\mathcal{M}}(v) \in N_{u}\mathcal{M} \quad \text{(normal component)}
\end{align}
The $\Pi$ is denoted as the projection operator. The geometric intuition is fundamental: tangential updates ($v_N \approx 0$) safely traverse the manifold to alter concept semantics while preserving valid data properties. Normal updates ($v_T \approx 0$) push the generated trajectory off the manifold, fundamentally degrading sample quality and leading to unlearning instability.

Here we formally provide the definition of $\zeta$-proximal anchor, which is first used in \S\ref{sec:the_anchorbased}.
\begin{definition}[$\zeta$-Proximal Anchor]\label{def:proximal_anchor}
A concept representation $a^* \in \mathcal{C}$ is a $\zeta$-proximal anchor for concept $c$ if for all latent state $z_t$ in the support of $p_t(\cdot|c)$ (\textit{i.e.}, $z_t\in\mathcal{M}_u$):
\begin{equation}
\mathbb{E}_{t \sim \mathcal{U}[0, T], \mathbf{z}_t \sim p_t(\cdot \mid c)}\left[\frac{\left\|\Pi_N\left(\epsilon_{\theta^*}(\mathbf{z}_t, t, \mathbf{e}_u) - \epsilon_{\theta^*}(\mathbf{z}_t, t, \mathbf{e}_{a^*})\right)\right\|^2}{\left\|\epsilon_{\theta^*}(\mathbf{z}_t, t, \mathbf{e}_u) - \epsilon_{\theta^*}(\mathbf{z}_t, t, \mathbf{e}_{a^*})\right\|^2+\varepsilon}\right] \leq \zeta^2,
\end{equation}
where $\varepsilon > 0$ is a small numerical stabilizer. The expectation is taken over both time steps $t$ and latent states $\mathbf{z}_t$ conditioned on the target concept.
The $\mathbf{e}_{a^*}$ is the embedding of the anchor concept $a^*$, and $\zeta \in [0,1)$ bounds the degree of normal misalignment. 
\end{definition}
This means that for all valid latent states related to the target concept, the noise prediction difference between the target and the anchor is predominantly aligned with the tangent space, with at most $\zeta$ fraction of its energy in the normal direction. Therefore, an important property of this definition is that $\zeta$ also quantifies how proximal the anchor is to the target concept. 
For a $\zeta$-proximal anchor, the normalized unlearning update $v_{\text{PA}} = \Delta^{\text{AB}}/\|\Delta^{\text{AB}}\|$ is predominantly aligned with the tangent space, satisfying:
\begin{equation}
    \mathbb{E}\left[\|\Pi_N(v_{\text{PA}})\|^2\right] \leq \zeta^2 + \frac{d_u^*}{d}.
\end{equation}
The larger the $\zeta$ is, the less manifold-proximal this anchor is to the target concept.

\textbf{Justification for this property:} Valid semantic shifts are strictly confined to the tangent space, whereas normal-direction updates push samples off the manifold and degrade generation quality \cite{decompose1,semanticlatentalready}. A stable anchor yields update directions nearly parallel to the tangent space, bounded by an alignment error $\zeta^2$. The additional term $d_u^*/d$ is mathematically necessitated by high-dimensional geometry: any vector, even with slight isotropic error, has an expected squared projection onto a subspace strictly proportional to the ratio of their dimensions \cite{highdimProb,blumdatascience}.

\subsection{Extended Statistical Assumptions and Justifications}\label{seca:Justifications}

Our theoretical framework relies on the following assumptions, which formalize the geometric and statistical properties of diffusion models operating in high-dimensional latent spaces. 

\begin{assumption*}[\ref{assumption:high_dim} with Justification]
The ambient dimension $d$ is much larger than the intrinsic dimension $d_u^*$ of the concept manifold. Formally, $\frac{d_u^*}{d} \to 0$ as $d \to \infty$. 
\end{assumption*}
\textbf{Justification:} Real-world data such as images typically lie near low-dimensional manifolds with intrinsic dimensions on the order of $10^2$ to $10^3$, whereas the ambient dimension is vastly larger ($d \approx 16,384$ for Stable Diffusion v1.4 latent spaces, or $> 200,000$ for pixel space). This dimensional disparity is well-established in manifold learning theory and empirical studies \cite{intrinsicdim1,intrinsicdim2,intrinsicdim3}.

\begin{assumption*}[\ref{assumption:error_structure} with Justification]
The neural network approximation error decomposes as:
\begin{equation}
    \epsilon_{\theta}(z_t, c, t) = \nabla_{z_t} \log p_t(z_t|c) + \varepsilon_u(z_t, t),
\end{equation}
where \(\varepsilon_u(z_t, t) \in \mathbb{R}^d\) is a zero-mean random vector representing the approximation error, with covariance structure:
\begin{equation}
    \text{Cov}(\varepsilon_u(z_t, t)) = \sigma_{\text{est}}^2\left[\alpha_u \Pi_{T_u} + (1-\alpha_u)I_d\right].
\end{equation}
Here:
\begin{itemize}
    \item \(\sigma_{\text{est}}^2 > 0\) is the \textit{estimation variance}, quantifying the overall magnitude of approximation error.
    \item \(\alpha_u \in [0,1]\) measures the \textit{alignment} of errors with the tangent space. When \(\alpha_u = 1\), errors are confined to the tangent space; when \(\alpha_u = 0\), errors are isotropic.
    \item \(\Pi_{T_u}\) is the projection operator onto the tangent space \(T_{u}\mathcal{M}\).
    \item \(I_d\) is the \(d \times d\) identity matrix.
\end{itemize}
\end{assumption*}
\textbf{Justification:} Theoretical analyses of diffusion models demonstrate that score functions inherently decompose into components governing tangent-space distribution and normal-space projection \cite{decompose1,decompose2}. Consequently, the approximation error is highly anisotropic; errors in recovering the manifold geometry differ fundamentally in magnitude from errors in recovering the tangent distribution \cite{scoredecomposeerror2, scoredecomposeerror3}.

\begin{assumption*}[\ref{assumption:isotropy} with Justification]
For anchor-free unlearning, the normalized update vector $v_{\text{AF}} = \Delta^{\text{AF}}/\|\Delta^{\text{AF}}\|$ is approximately isotropic. Specifically, for any fixed $m$-dimensional subspace $V \subset \mathbb{R}^d$ and margin $\delta > 0$: 
\begin{equation}
    \mathbb{P}\left(\|\Pi_V(v_{\text{AF}})\|^2 \geq (1+\delta)\frac{m}{d}\right) \leq \exp\left(-\kappa_0 m \delta^2\right),
\end{equation}
where $\kappa_0 > 0$ is a universal constant.
\end{assumption*}
\textbf{Justification:} While neural networks possess strong inductive biases, in highly overparameterized regimes, the unconstrained difference between complex, misaligned functions (such as the conditional target score and the unconditional score) exhibits quasi-random geometric behavior relative to specific low-dimensional submanifolds. As established in high-dimensional probability \cite{isobehavior, highdimProb}, this interference bleeds nearly uniformly into all orthogonal subspaces. This structurally explains the catastrophic interference observed in anchor-free methods \cite{esd}.

\begin{assumption}[Manifold Regularity]\label{assumption:manifold_regularity}
The target concept manifold $\mathcal{M}_{u}$ has \textit{reach} $\tau > 0$ and bounded sectional curvature $\kappa_{\max} < \infty$.
\end{assumption}
\textbf{Justification:} The assumption of positive reach and bounded curvature is a standard regularity condition in geometric measure theory, ensuring the manifold is sufficiently smooth and devoid of self-intersecting singularities \cite{manifoldlearning,Curvaturemeasures}. A strictly positive reach $\tau$ guarantees the existence of a well-defined tubular neighborhood around $\mathcal{M}_{u}$ wherein the metric projection onto the manifold remains unique.

\subsection{Absolute Magnitude Bound for Anchor-Free Unlearning}\label{seca:anchor_free_abs}

While Theorem \ref{thm:anchor_free_main} in the main text establishes that the \textit{relative proportion} of update energy bleeding into the normal space is near 100\%, we must also quantify the \textit{absolute magnitude} of this normal drift to formally evaluate trajectory stability.

\begin{theorem}[Anchor-Free Unlearning: Absolute Magnitude Bound]\label{thm:anchor_free_full}
Under Assumptions \ref{assumption:high_dim}-\ref{assumption:isotropy}, for anchor-free unlearning with learning rate $\eta > 0$, the absolute magnitude of the normal component $\Delta_N^{\text{AF}} = \Pi_N(\Delta^{\text{AF}})$ satisfies the following lower bound. With probability at least $1 - \exp\left(-\frac{(d-d_u^*)\delta^2}{8}\right)$:
\begin{equation}
    \|\Delta_N^{\text{AF}}\| \geq \eta \sigma_{\text{eff}} \sqrt{(d-d_u^*)(1-\delta)},
\end{equation}
where $\sigma_{\text{eff}}^2 = 2\sigma_{\text{est}}^2(1 - \rho_{c_u,\emptyset})$ is the \textit{effective noise variance}, and $\rho_{c_u,\emptyset} = \text{Corr}(\boldsymbol{\varepsilon}_{c_u}, \boldsymbol{\varepsilon}_\emptyset)$ is the correlation between approximation errors for conditional and unconditional scores.
\end{theorem}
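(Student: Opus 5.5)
The idea is to write the anchor-free update as a deterministic signal plus a difference of two approximation errors. Then I lower-bound the normal-space energy of that error difference with a chi-square concentration bound on the $(d-d_u^*)$-dimensional normal space. By Assumption \ref{assumption:error_structure} (extended form),
\begin{equation*}
\Delta^{\text{AF}} = \eta\bigl[\nabla\log p_t(\mathbf{z}_t\mid \mathbf{e}_u)-\nabla\log p_t(\mathbf{z}_t\mid \mathbf{e}_\emptyset)\bigr] + \eta\,(\boldsymbol{\varepsilon}_{c_u}-\boldsymbol{\varepsilon}_\emptyset),
\end{equation*}
where $\eta$ scales the parameter-space step into the prediction change. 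Write $\mathbf{s}$ for the first bracket and $\mathbf{w}=\boldsymbol{\varepsilon}_{c_u}-\boldsymbol{\varepsilon}_\emptyset$. Because $\mathbf{w}$ has zero mean and each error has covariance $\sigma_{\text{est}}^2[\alpha_u\Pi_{T_u}+(1-\alpha_u)I_d]$, restricting to $N_u\mathcal{M}$ removes the tangential term. There $\Pi_{T_u}$ vanishes, so each error has normal covariance proportional to $\Pi_N$. Using the correlation $\rho_{c_u,\emptyset}$,
\begin{equation*}
\mathrm{Cov}(\Pi_N\mathbf{w}) = 2\sigma_{\text{est}}^2(1-\rho_{c_u,\emptyset})\,\Pi_N \cdot(\text{normal scale}),
\end{equation*}
which is $\sigma_{\text{eff}}^2\Pi_N$ once the normal scale is absorbed into $\sigma_{\text{est}}^2$. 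The normal scale equals $1-\alpha_u$; I will either set the normalization so it equals one or state that it is folded into $\sigma_{\text{est}}$. Hence $\Pi_N\mathbf{w}$ is isotropic in a $(d-d_u^*)$-dimensional space with per-coordinate variance $\sigma_{\text{eff}}^2$.

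The second step handles the signal. I will show the normal energy of $\mathbf{s}+\mathbf{w}$ is at least that of $\mathbf{w}$ with high probability, or argue that the lower tail is only improved by adding $\mathbf{s}$. For the clean version I treat $\Pi_N\mathbf{w}$ as Gaussian, which is the natural reading of ``quasi-random'' in Assumption \ref{assumption:isotropy}. Then $\|\Pi_N(\mathbf{s}+\mathbf{w})\|^2/\sigma_{\text{eff}}^2$ is a noncentral chi-square with $d-d_u^*$ degrees of freedom. It stochastically dominates the central chi-square $\chi^2_{d-d_u^*}$, since noncentral chi-square is stochastically increasing in the noncentrality parameter. This reduces the claim to the central case, so $\mathbf{s}$ can be dropped.

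The third step is the concentration bound. The Laurent--Massart lower tail gives $\mathbb{P}(\chi^2_k\le k-2\sqrt{kx})\le e^{-x}$. Take $k=d-d_u^*$ and $x=k\delta^2/8$. Then $2\sqrt{kx}=k\delta/\sqrt2\le k\delta$, so $\mathbb{P}(\chi^2_k\le k(1-\delta))\le \exp(-k\delta^2/8)$. Taking square roots and multiplying by $\eta\sigma_{\text{eff}}$ yields $\|\Delta_N^{\text{AF}}\|\ge \eta\sigma_{\text{eff}}\sqrt{(d-d_u^*)(1-\delta)}$ with the stated probability. Assumption \ref{assumption:high_dim} is only needed to say the bound scales like $\sqrt d$.

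The main obstacle is the second step, together with the Gaussianity it relies on. Assumption \ref{assumption:error_structure} only specifies mean and covariance. So I would either add a sub-Gaussian or Gaussian hypothesis on $\Pi_N\mathbf{w}$, stated as part of the isotropy justification, or use Hanson--Wright for sub-Gaussian vectors. Hanson--Wright gives the same form with a worse constant than $1/8$. I would try the Gaussian route first, since it makes the monotonicity in the noncentrality parameter exact.
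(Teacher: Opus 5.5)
Your proof is correct. It shares the paper's skeleton: a signal-plus-error decomposition, normal covariance $\sigma_{\text{eff}}^2\Pi_N$, a chi-square lower tail, and scaling by $\eta$. Where it differs is in how you get rid of the signal.

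The paper does this geometrically. It asserts that the true score difference $S$ is exactly tangent to $\mathcal{M}_u$, so $\Pi_N(S)=\mathbf{0}$ and $\Delta_N^{\text{AF}}=\Pi_N(E)$ is pure noise. It then simply declares $\|\Pi_N(E)\|^2/\sigma_{\text{eff}}^2\sim\chi^2_{d-d_u^*}$. You instead keep $\Pi_N\mathbf{s}$ and use the fact that the noncentral chi-square is stochastically increasing in its noncentrality, so the signal can only add normal energy.

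Your route buys three things:
\begin{itemize}
\item It does not rely on the tangency claim, which the paper does not really justify. Near $\mathcal{M}_u$ the conditional score pulls toward $\mathcal{M}_u$ while the unconditional one pulls toward $\mathcal{M}$, so their difference need not lie in $T_u$.
\item It makes explicit the Gaussianity the paper uses silently. Mean and covariance alone give neither a chi-square law nor an exponential tail.
\item It verifies the constant instead of citing it: taking $x=k\delta^2/8$ in Laurent--Massart gives a deficit $k\delta/\sqrt{2}\le k\delta$.
\end{itemize}

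The paper's route has its own advantage. Once $\Pi_N(S)=\mathbf{0}$ is granted, it needs only central concentration, so it extends to any sub-Gaussian noise via Hanson--Wright (with a worse constant). Your signal-dropping step requires $\mathbf{s}$ to be fixed, i.e. you work conditional on $(\mathbf{z}_t,t)$. It is then a Gaussian fact, or more generally Anderson's inequality for centrally symmetric log-concave noise. In your sub-Gaussian fallback you would therefore have to control the cross term $2\langle\Pi_N\mathbf{s},\Pi_N\mathbf{w}\rangle$ separately, since it can be negative.

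Two smaller points line up with the paper:
\begin{itemize}
\item Absorbing $1-\alpha_u$ into $\sigma_{\text{est}}$ is the same as the paper setting $\alpha_u,\alpha_\emptyset\approx 0$.
\item Inserting $\eta$ into $\Delta^{\text{AF}}$ matches the paper's reading that the bounded quantity is the $\eta$-scaled step. With $\Delta^{\text{AF}}$ as literally defined in the preliminaries, the stated inequality would only follow for $\eta\le 1$.
\end{itemize}
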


\textit{The detailed proof is deferred to Appendix \ref{seca:proofs}.}

\subsection{Absolute Magnitude Bound for $\zeta$-Proximal Anchor Unlearning}\label{seca:anchor_based_abs}

In contrast to anchor-free methods, an explicitly constructed $\zeta$-proximal anchor strictly bounds the absolute magnitude of the normal drift, preventing the trajectory from aggressively diverging into the ambient space.

\begin{theorem}[Proximal Anchor Unlearning: Absolute Magnitude Bound]\label{thm:proximal_anchor_full}
Under Assumptions \ref{assumption:high_dim} and \ref{assumption:error_structure}, for unlearning with a $\zeta$-proximal anchor $c_a^*$, the absolute magnitude of the normal component $\Delta_N^{\text{PA}} = \Pi_N(\Delta^{\text{AB}})$ satisfies the following upper bound. With probability at least $1 - \exp\left(-\frac{d\zeta^2}{2}\right)$:
\begin{equation}
    \|\Delta_N^{\text{PA}}\| \leq \eta\left(L\zeta \|\Delta^{\text{PA}}\| + \sigma_{\text{est}}\sqrt{2(1-\rho_{c_u,c_a^*})(d-d_u^*)(1+\delta)}\right),
\end{equation}
where $L$ is the Lipschitz constant of $\boldsymbol{\epsilon}_{\theta^*}$ with respect to $\mathbf{z}_t$, and $\rho_{c_u,c_a^*} = \text{Corr}(\boldsymbol{\varepsilon}_{c_u}, \boldsymbol{\varepsilon}_{c_a^*})$ is the correlation between approximation errors for the target and anchor concepts.
\end{theorem}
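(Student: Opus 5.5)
We will split the anchor-based update into a population (true-score) part and an approximation-error part, and bound the normal projection of each separately. Using Assumption~\ref{assumption:error_structure}, write
\begin{equation*}
\Delta^{\text{PA}} = \eta\bigl(\boldsymbol{\epsilon}_{\theta^*}(\mathbf{z}_t,\mathbf{e}_u,t)-\boldsymbol{\epsilon}_{\theta^*}(\mathbf{z}_t,\mathbf{e}_{a^*},t)\bigr) = \eta\bigl(S + E\bigr),
\end{equation*}
where $S=\nabla\log p_t(\mathbf{z}_t|\mathbf{e}_u)-\nabla\log p_t(\mathbf{z}_t|\mathbf{e}_{a^*})$ and $E=\varepsilon_{c_u}-\varepsilon_{c_a^*}$. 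Since $\Pi_N$ is linear and non-expansive, the triangle inequality gives $\|\Delta_N^{\text{PA}}\|\le \eta(\|\Pi_N S\|+\|\Pi_N E\|)$. The two terms in the bound of Theorem~\ref{thm:proximal_anchor_full} correspond exactly to these two pieces.

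\textbf{Deterministic (signal) term.} We will invoke Definition~\ref{def:proximal_anchor}. The definition bounds the expected normal energy fraction by $\zeta^2$. To pass from this expectation to a pointwise bound, we will use the Lipschitz continuity of $\boldsymbol{\epsilon}_{\theta^*}$ in $\mathbf{z}_t$, with constant $L$. This controls how far the normal fraction at a given $\mathbf{z}_t$ can deviate from its average over nearby points of $\mathcal{M}_u$. Manifold regularity (Assumption~\ref{assumption:manifold_regularity}) keeps the projection $\Pi_N$ well defined in a tubular neighbourhood. We will then take square roots to obtain $\|\Pi_N S\|\le L\zeta\|\Delta^{\text{PA}}\|/\eta$, which yields the first term $\eta L\zeta\|\Delta^{\text{PA}}\|$. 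This step is heuristic: the Lipschitz factor $L$ is how the paper's ``attention–geometry'' style constants enter, and we will absorb constants into $L$.

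\textbf{Stochastic (error) term.} By Assumption~\ref{assumption:error_structure}, $E$ has zero mean. Restricted to $N_u$, the covariance of each error is $\sigma_{\text{est}}^2(1-\alpha_u)I$, which is at most $\sigma_{\text{est}}^2 I$. With correlation $\rho_{c_u,c_a^*}$, this gives
\begin{equation*}
\mathrm{Cov}(\Pi_N E)\preceq 2\sigma_{\text{est}}^2(1-\rho_{c_u,c_a^*})\,\Pi_N,
\end{equation*}
so that $\mathbb{E}\|\Pi_N E\|^2\le 2\sigma_{\text{est}}^2(1-\rho)(d-d_u^*)$. Treating the errors as Gaussian, or sub-Gaussian in the spirit of Assumption~\ref{assumption:isotropy}, $\|\Pi_N E\|^2/(2\sigma_{\text{est}}^2(1-\rho))$ is stochastically dominated by a $\chi^2_{d-d_u^*}$ variable. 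The Laurent–Massart upper-tail bound then gives
\begin{equation*}
\mathbb{P}\bigl(\|\Pi_N E\|^2\ge 2\sigma_{\text{est}}^2(1-\rho)(d-d_u^*)(1+\delta)\bigr)\le \exp\bigl(-(d-d_u^*)\delta^2/8\bigr).
\end{equation*}
This is the same concentration used for Theorem~\ref{thm:anchor_free_full}, now applied to the upper tail.

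\textbf{Matching the stated probability.} The theorem states the probability as $1-\exp(-d\zeta^2/2)$. We will obtain it by choosing $\delta$ tied to $\zeta$, for instance $\delta=2\zeta\sqrt{d/(d-d_u^*)}$. We will then use Assumption~\ref{assumption:high_dim}, namely $(d-d_u^*)/d\to1$, to simplify the exponent. Combining this with the deterministic bound by a union bound (the signal term holds surely) completes the argument.

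The main obstacle is the signal term. Definition~\ref{def:proximal_anchor} is only an averaged ratio bound, so turning it into a pointwise, unsquared bound with the factor $L$ requires the Lipschitz and regularity hand-waving described above. If that fails, a fallback is to apply Markov's inequality to the ratio, at the cost of an additional failure probability.
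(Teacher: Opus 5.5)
Your skeleton is the paper's own: the split $\Delta^{\text{AB}}=S_{\text{PA}}+E_{\text{PA}}$, the triangle inequality for $\Pi_N$, a signal bound from Definition~\ref{def:proximal_anchor} with a Lipschitz constant, an upper-tail concentration bound for $\Pi_N E_{\text{PA}}$, and a final factor $\eta$. On the error term you are more careful than the paper in three ways.
First, you compute the normal variance of the difference as $2\sigma_{\text{est}}^2(1-\rho_{c_u,c_a^*})$; the paper writes $\sigma_{\text{est}}^2(1-\rho_{c_u,c_a^*})$ yet keeps the factor $2$ in the threshold.
Second, you name the tail hypothesis you need, namely Gaussian or sub-Gaussian errors, which Assumption~\ref{assumption:error_structure} does not supply; the paper assumes sub-exponential tails without saying so and cites Bernstein.
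Third, your choice $\delta=2\zeta\sqrt{d/(d-d_u^*)}$ turns $\exp(-(d-d_u^*)\delta^2/8)$ into exactly the stated $\exp(-d\zeta^2/2)$, whereas the paper's proof ends with $\exp(-d\delta^2/2)$ and never links $\delta$ to $\zeta$.
One caveat: the Laurent--Massart form you quote holds only for $\delta\le 4-2\sqrt{2}$, i.e.\ roughly $\zeta\le 0.58$.

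The genuine gap is the signal term, and, as you suspect, the mechanism you sketch cannot close it. Definition~\ref{def:proximal_anchor} bounds the normal fraction of the full model difference only on average over both $t$ and $\mathbf{z}_t$. Under Assumption~\ref{assumption:error_structure} that difference is $S+E$, so the definition says nothing about $\Pi_N S$ by itself. Lipschitz continuity in $\mathbf{z}_t$ also cannot do the work you want. It converts an average into a pointwise value only with additive losses: comparing with the average over a ball $B_r$ gives roughly $\zeta^2/p_t(B_r)$, plus a Lipschitz error proportional to $r$, plus a term for the turning of $N_u$ along $\mathcal{M}_u$. It gives nothing at all in $t$. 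No multiplicative factor $L\zeta$ arises, so ``take square roots to obtain $\|\Pi_N S\|\le L\zeta\|\Delta^{\text{PA}}\|/\eta$'' does not follow. Your Markov fallback would add a polynomial failure probability that is absent from $1-\exp(-d\zeta^2/2)$. Separately, with your convention $\Delta^{\text{PA}}=\eta(S+E)$, that bound would contribute $L\zeta\|\Delta^{\text{PA}}\|$, not $\eta L\zeta\|\Delta^{\text{PA}}\|$.

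The paper does not localize at all. It reads the definition as the pointwise inequality $\|\Pi_N S_{\text{PA}}\|\le\zeta\|S_{\text{PA}}\|$. It then brings $L$ in through the conditioning, $\|S_{\text{PA}}\|\le L\|\mathbf{e}_u-\mathbf{e}_{c_a^*}\|\le L\|\Delta^{\text{AB}}\|$, treating $L$ as Lipschitz in the embedding rather than in $\mathbf{z}_t$. Those steps are asserted, not derived from the stated assumptions, so neither version actually proves the first term. To make yours sound, state the missing ingredient as an explicit hypothesis: an almost-sure proximity bound $\|\Pi_N S_{\text{PA}}\|\le\zeta\|S_{\text{PA}}\|$ for the true-score difference, together with a bound of $\|S_{\text{PA}}\|$ by a multiple of $\|\Delta^{\text{AB}}\|$.
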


\textit{The detailed proof is deferred to Appendix \ref{seca:proofs}.}

\subsection{Formal Trajectory Stability and Comparative Analysis}\label{seca:trajectory_stability}

Using the absolute magnitude bounds established above, we can now formally model the multi-step unlearning process as a random walk and prove the exponential stability advantage of proximal anchors over anchor-free methods. 

\begin{definition}[$(\tau, \delta)$-Stability]\label{def:stability}
An unlearning procedure is $(\tau, \delta)$-stable if, after $K$ discrete update steps with learning rate $\eta$, the probability that the reverse diffusion sequence $\{\mathbf{z}_{t_k}\}_{k=1}^K$ generated by the unlearned model deviates beyond the reach $\tau$ of the target concept manifold $\mathcal{M}_{u}$ is bounded by $\delta$:
\begin{equation}
    \mathbb{P}\left(\max_{1 \leq k \leq K} \text{dist}(\mathbf{z}_{t_k}, \mathcal{M}_{u}) > \tau\right) \leq \delta,
\end{equation}
where $\text{dist}(\mathbf{z}, \mathcal{M}_{u})$ is the Euclidean distance to the manifold, and $\tau$ represents the maximum distance within which metric projection onto the manifold remains unique. 
\end{definition}

\begin{theorem}[Stability Comparison: Anchor-Free vs. Proximal Anchor]\label{thm:stability_comparison}
Under Assumptions \ref{assumption:high_dim}-\ref{assumption:manifold_regularity}, for learning rate $\eta < \frac{\tau}{2L}$ and $K = \lfloor \frac{1}{\eta} \rfloor$ update steps, let $E_{\text{fail}}$ denote the event that $\text{dist}(\mathbf{z}_{t_K}, \mathcal{M}_{u}) > \tau$. The ratio of failure probabilities satisfies:
\begin{equation}
    \frac{\mathbb{P}_{\text{AF}}(E_{\text{fail}})}{\mathbb{P}_{\text{PA}}(E_{\text{fail}})} \geq \exp\left(C \cdot \frac{\tau^2}{\eta^2 \sigma_{\text{eff}}^2} \cdot \frac{1 - \zeta^2 - d_u^*/d}{\zeta^2 + d_u^*/d}\right),
\end{equation}
where $C = \frac{1}{8(1+\kappa_{\max}\tau)}$ is a geometric constant depending on manifold curvature $\kappa_{\max}$, and $\mathbb{P}_{\text{AF}}, \mathbb{P}_{\text{PA}}$ denote probabilities under anchor-free and proximal anchor unlearning, respectively.
\end{theorem}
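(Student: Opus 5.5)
We treat the multi-step unlearning as a random walk in the normal space of $\mathcal{M}_u$. The goal is to bound $\mathbb{P}_{\text{AF}}(E_{\text{fail}})$ from below and $\mathbb{P}_{\text{PA}}(E_{\text{fail}})$ from above by Gaussian-type tail expressions of the form $\exp(-c\,\tau^2/\text{(accumulated normal variance)})$, and then divide the two.

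\textbf{Step 1 (normal displacement).} Write the trajectory after $K=\lfloor 1/\eta\rfloor$ steps as $\mathbf{z}_{t_K}=\mathbf{z}_{t_0}+\sum_{k}\eta\,\Delta_k$. By Assumption~\ref{assumption:manifold_regularity}, as long as the walk stays inside the tubular neighbourhood of reach $\tau$, the distance to $\mathcal{M}_u$ is controlled by the normal component: $\text{dist}(\mathbf{z}_{t_K},\mathcal{M}_u)\le(1+\kappa_{\max}\tau)\|\sum_k\eta\,\Pi_N\Delta_k\|$, with a matching lower bound up to the same curvature factor. The condition $\eta<\tau/(2L)$ ensures that a single step cannot jump out of the tube. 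Together these let us replace $E_{\text{fail}}$ by the event $\{\|S_N\|>\tau/(1+\kappa_{\max}\tau)\}$, where $S_N=\sum_k\eta\,\Pi_N\Delta_k$.

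\textbf{Step 2 (variance per regime).} Using Assumption~\ref{assumption:error_structure}, model each step's normal increment as a zero-mean term with variance $\eta^2\sigma_{\text{eff}}^2$ times its normal energy fraction. For anchor-free updates, Theorem~\ref{thm:anchor_free_main} and Theorem~\ref{thm:anchor_free_full} give a normal fraction of at least $1-\zeta^2-d_u^*/d$. Here we use the bound $1-d_u^*/d\ge1-\zeta^2-d_u^*/d$ so that both sides match the target form. For proximal anchors, Theorem~\ref{thm:proximal_anchor_main} and Theorem~\ref{thm:proximal_anchor_full} give a normal fraction of at most $\zeta^2+d_u^*/d$. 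With $K\eta\approx1$, the accumulated normal variances are therefore $\eta^2\sigma_{\text{eff}}^2K\,(1-\zeta^2-d_u^*/d)$ and $\eta^2\sigma_{\text{eff}}^2K\,(\zeta^2+d_u^*/d)$ respectively.

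\textbf{Step 3 (tail bounds and ratio).} For $\mathbb{P}_{\text{PA}}$, apply an Azuma/sub-Gaussian martingale inequality to obtain
\[
\mathbb{P}_{\text{PA}}\le\exp\!\bigl(-\tau^2/[8(1+\kappa_{\max}\tau)\eta^2\sigma_{\text{eff}}^2(\zeta^2+d_u^*/d)]\bigr).
\]
For $\mathbb{P}_{\text{AF}}$ we need a lower tail estimate, i.e.\ anti-concentration. Here we would use a reverse Chernoff/Gaussian lower tail bound for the isotropic increments of Assumption~\ref{assumption:isotropy}, giving
\[
\mathbb{P}_{\text{AF}}\ge\exp\!\bigl(-\tau^2/[8(1+\kappa_{\max}\tau)\eta^2\sigma_{\text{eff}}^2(1-\zeta^2-d_u^*/d)]\bigr).
\]
Dividing the two bounds yields $\exp\bigl(C\tfrac{\tau^2}{\eta^2\sigma_{\text{eff}}^2}\bigl[\tfrac1{\zeta^2+d_u^*/d}-\tfrac1{1-\zeta^2-d_u^*/d}\bigr]\bigr)$. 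The bracket equals $\tfrac{1-2(\zeta^2+d_u^*/d)}{(\zeta^2+d_u^*/d)(1-\zeta^2-d_u^*/d)}$, which we compare with the stated factor $\tfrac{1-\zeta^2-d_u^*/d}{\zeta^2+d_u^*/d}$. The constant $C$ will have to absorb the discrepancy, possibly after restricting to the regime $\zeta^2+d_u^*/d\le 1/4$, which is natural under Assumption~\ref{assumption:high_dim}.

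\textbf{Main obstacle.} The hardest step is the lower bound on $\mathbb{P}_{\text{AF}}$. Upper tails follow routinely from concentration, but matching-exponent lower tails require near-Gaussianity of the accumulated normal increments. The plan is to invoke a Berry--Esseen/CLT approximation in $d-d_u^*$ dimensions, justified by Assumption~\ref{assumption:high_dim}, and to absorb the polynomial prefactors into the exponent for $\tau^2/\eta^2\sigma_{\text{eff}}^2$ large. A secondary difficulty is reconciling the differing noise variances, namely the correlations $\rho_{c_u,\emptyset}$ versus $\rho_{c_u,c_a^*}$. We would handle this by bounding both by a common $\sigma_{\text{eff}}^2$, taking the worst case for PA and the best case for AF.
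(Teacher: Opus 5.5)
Your proximal-anchor side, a sub-Gaussian upper tail for the normal walk with variance scaled by $p:=\zeta^2+d_u^*/d$, is the same as the paper's. The gap is on the anchor-free side and in the final division, and it is structural, not a matter of constants. Set $a:=C\tau^2/(\eta^2\sigma_{\text{eff}}^2)$. Your two bounds give the exponent $a\bigl(\tfrac1p-\tfrac1{1-p}\bigr)=a\bigl(\tfrac{1-p}{p}-\tfrac{p}{1-p}\bigr)$, which falls short of the target $a\tfrac{1-p}{p}$ by $a\,p/(1-p)>0$ for every $p>0$. The constant $C=1/(8(1+\kappa_{\max}\tau))$ is fixed by the statement, and you have already used it in both tails. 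Nothing is left to absorb the deficit, and restricting to $p\le 1/4$ does not remove it.

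More generally, dividing $\mathbb{P}_{\text{PA}}\le e^{-a/p}$ into any lower bound $\mathbb{P}_{\text{AF}}\ge e^{-a/q}$ with $q<1$ gives exponent $a(1/p-1/q)<a(1-p)/p$. To reach the stated form by this division you need $\mathbb{P}_{\text{AF}}(E_{\text{fail}})\ge e^{-a}$, i.e.\ anchor-free failure must not be a rare event at all. That is how the paper argues. It asserts $\mathbb{P}_{\text{AF}}(E_{\text{fail}})\ge 1-2\exp(\cdot)$, so the ratio is essentially $1/\mathbb{P}_{\text{PA}}(E_{\text{fail}})$, whose exponent is proportional to $1/p\ge (1-p)/p$.

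This also means your \emph{main obstacle} targets the wrong estimate. With isotropic increments in the $(d-d_u^*)$-dimensional normal space (Assumptions~\ref{assumption:error_structure}--\ref{assumption:isotropy}), the norm of the accumulated walk concentrates around $\sqrt V$, where $V$ is its total variance. This is the same chi-square phenomenon behind Theorem~\ref{thm:anchor_free_full}. So failure is either near-certain ($\tau^2$ below $V$), or exponentially rare with an exponent growing linearly in $d-d_u^*$ ($\tau^2$ a fixed factor above $V$). In the second regime a lower bound of the form $\exp(-\tau^2/(8(1+\kappa_{\max}\tau)V))$ is false for large $d-d_u^*$, so no Berry--Esseen argument can deliver it. In the first regime you should use near-certainty directly. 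The paper gets its near-certainty bound by citing Azuma--Hoeffding, which actually bounds the deviation probability from above; the correct justification is this lower-tail norm concentration.

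Three further slips:
\begin{itemize}
\item You computed the accumulated variance as $K\eta^2\sigma_{\text{eff}}^2(\cdot)\approx\eta\sigma_{\text{eff}}^2(\cdot)$ but wrote both tails with $\eta^2$, silently dropping $K\approx 1/\eta$. Restoring it gives $\tau^2/\eta$ rather than $\tau^2/\eta^2$. The paper's last display has the same $\eta$, plus a stray factor $d$, and reaches the stated form only by \emph{adjusting constants}.
\item Your Step~1 reduction produces $(1+\kappa_{\max}\tau)^2$, not $(1+\kappa_{\max}\tau)$. The anchor-free lower bound also needs the reverse comparison, i.e.\ threshold $(1+\kappa_{\max}\tau)\tau$ rather than $\tau/(1+\kappa_{\max}\tau)$.
\item A single $\sigma_{\text{eff}}$ that is a lower variance bound for anchor-free and an upper one for proximal-anchor unlearning requires $\rho_{c_u,c_a^*}\ge\rho_{c_u,\emptyset}$. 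That ordering is not among the hypotheses.
\end{itemize}
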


\textit{The detailed proof is deferred to Appendix \ref{seca:proofs}.}

\subsection{Proofs of Main Theorems}\label{seca:proofs}

In this section, we provide the detailed proofs for all theorems presented in the main text and this appendix. We begin with the proofs for anchor-free unlearning, followed by $\zeta$-proximal anchor unlearning, and conclude with the comparative stability random walk analysis.

\subsubsection{Proof of Theorems \ref{thm:anchor_free_main} and \ref{thm:anchor_free_full} (Anchor-Free Unlearning)}

\begin{proof}[Detailed Proof]
\textbf{Part 1: Normal Energy Concentration (Theorem \ref{thm:anchor_free_main})}

By Assumption \ref{assumption:isotropy}, the normalized update vector $v_{\text{AF}} = \Delta^{\text{AF}}/\|\Delta^{\text{AF}}\|$ behaves approximately isotropically in the high-dimensional latent space. For the $d_u^*$-dimensional tangent space $T_{\mathbf{z}_{u}}\mathcal{M}_{u}$, Assumption \ref{assumption:isotropy} directly provides the concentration bound on the tangent projection. Setting $m = d_u^*$, $V = T_{\mathbf{z}_{u}}\mathcal{M}_{u}$, and letting the universal constant $\kappa_0 \approx 1/8$ due to sub-Gaussian concentration on the sphere (Levy's Lemma \cite{ledoux2001concentration, highdimProb}), we have:
\[
\mathbb{P}\left(\|\Pi_T(v_{\text{AF}})\|^2 \geq \frac{d_u^*}{d} + \delta\right) \leq \exp\left(-\frac{d\delta^2}{8}\right).
\]
Because $v_{\text{AF}}$ is a unit vector, the Pythagorean theorem dictates that $\|\Pi_N(v_{\text{AF}})\|^2 = 1 - \|\Pi_T(v_{\text{AF}})\|^2$. Taking the complement event yields:
\[
\mathbb{P}\left(\|\Pi_N(v_{\text{AF}})\|^2 \leq 1 - \frac{d_u^*}{d} - \delta\right) \leq \exp\left(-\frac{d\delta^2}{8}\right).
\]
Multiplying both sides of the inequality inside the probability by the total update energy $\|\Delta^{\text{AF}}\|^2$ yields the relative bound stated in Theorem \ref{thm:anchor_free_main}.

\vspace{0.5em}
\textbf{Part 2: Absolute Magnitude Bound (Theorem \ref{thm:anchor_free_full})}

We decompose the anchor-free gradient update direction:
\[
\Delta^{\text{AF}} = \boldsymbol{\epsilon}_{\theta^*}(\mathbf{z}_t, t, \mathbf{e}_{u}) - \boldsymbol{\epsilon}_{\theta^*}(\mathbf{z}_t, t, \mathbf{e}_\emptyset) = S + E,
\]
where $S = \nabla_{\mathbf{z}_t} \log p_t(\mathbf{z}_t|\mathbf{e}_{u}) - \nabla_{\mathbf{z}_t} \log p_t(\mathbf{z}_t|\mathbf{e}_\emptyset)$ is the true score signal difference, and $E = \boldsymbol{\varepsilon}_{c_u} - \boldsymbol{\varepsilon}_\emptyset$ is the difference in approximation errors.

\textbf{Crucial Geometric Observation:} The score difference $S$ is strictly tangent to the manifold $\mathcal{M}_{u}$. This follows from the definition of the score function on a manifold: variations that preserve membership in the data distribution lie entirely within the span of the tangent bundle. Therefore, its orthogonal projection vanishes: $\Pi_N(S) = \mathbf{0}$. Thus, the normal drift is entirely dictated by the error term: $\Delta_N^{\text{AF}} = \Pi_N(E)$.

We now analyze the statistical properties of $\Pi_N(E)$:
1. \textbf{Mean:} $\mathbb{E}[\Pi_N(E)] = \Pi_N(\mathbb{E}[\boldsymbol{\varepsilon}_{u}] - \mathbb{E}[\boldsymbol{\varepsilon}_\emptyset]) = \mathbf{0}$ by Assumption \ref{assumption:error_structure}.
2. \textbf{Covariance:} Under Assumption \ref{assumption:error_structure}, the normal-space errors are approximately isotropic ($\alpha_{u}, \alpha_\emptyset \approx 0$). Assuming the errors for different conditions are correlated with coefficient $\rho_{c_u,\emptyset}$:
\[
\text{Cov}(\Pi_N(E)) \approx 2\sigma_{\text{est}}^2 (1 - \rho_{c_u,\emptyset}) \mathbf{I}_{d-d_u^*} = \sigma_{\text{eff}}^2 \mathbf{I}_{d-d_u^*}.
\]
3. \textbf{Norm Distribution:} Because $\Pi_N(E)$ is a zero-mean random vector in $\mathbb{R}^{d-d_u^*}$ with spherical covariance $\sigma_{\text{eff}}^2 \mathbf{I}_{d-d_u^*}$, its squared norm follows a scaled chi-squared distribution:
\[
\frac{\|\Pi_N(E)\|^2}{\sigma_{\text{eff}}^2} \sim \chi^2_{d-d_u^*} 
\]
Applying standard chi-squared concentration inequalities~\cite{laurent2000adaptive,highdimProb} for the lower tail:
\[
\mathbb{P}\left(\|\Pi_N(E)\|^2 \leq \sigma_{\text{eff}}^2(d-d_u^*)(1-\delta)\right) \leq \exp\left(-\frac{(d-d_u^*)\delta^2}{8}\right),
\]
Taking the square root and multiplying by the learning rate $\eta$ (since the discrete normal displacement step is $\eta \Delta_N^{\text{AF}}$) concludes the proof for Theorem \ref{thm:anchor_free_full}.
\end{proof}

\subsubsection{Proof of Theorems \ref{thm:proximal_anchor_main} and \ref{thm:proximal_anchor_full} ($\zeta$-Proximal Anchor Unlearning)}

\begin{proof}[Detailed Proof]
\textbf{Part 1: Normal Energy Suppression (Theorem \ref{thm:proximal_anchor_main})}

Let $v_{\text{PA}} = \Delta^{\text{AB}}/\|\Delta^{\text{AB}}\|$ be the normalized update direction for anchor-based unlearning. By Definition \ref{def:proximal_anchor}, we have the bounded expectation:
\[
\mathbb{E}\left[\|\Pi_N(v_{\text{PA}})\|^2\right] \leq \zeta^2 + \frac{d_u^*}{d}.
\]
Applying Markov's inequality, for any $t > 0$:
\[
\mathbb{P}\left(\|\Pi_N(v_{\text{PA}})\|^2 \geq t\right) \leq \frac{\mathbb{E}[\|\Pi_N(v_{\text{PA}})\|^2]}{t} \leq \frac{\zeta^2 + d_u^*/d}{t}.
\]
While Markov's yields a polynomial bound, $v_{\text{PA}}$ exhibits sub-Gaussian concentration in high-dimensional spaces. Applying a Chernoff-type bound optimization yields the exponential tail:
\[
\mathbb{P}\left(\|\Pi_N(v_{\text{PA}})\|^2 \geq \zeta^2 + \frac{d_u^*}{d} + \delta\right) \leq \exp\left(-\frac{d\delta^2}{8}\right).
\]
Multiplying through by $\|\Delta^{\text{AB}}\|^2$ gives the relative suppression bound in Theorem \ref{thm:proximal_anchor_main}.

\vspace{0.5em}
\textbf{Part 2: Absolute Magnitude Bound (Theorem \ref{thm:proximal_anchor_full})}

We decompose the anchor-based update:
\[
\Delta^{\text{AB}} = \boldsymbol{\epsilon}_{\theta^*}(\mathbf{z}_t, t, \mathbf{e}_{u}) - \boldsymbol{\epsilon}_{\theta^*}(\mathbf{z}_t, t, \mathbf{e}_{c_a^*}) = S_{\text{PA}} + E_{\text{PA}},
\]
where $S_{\text{PA}} = \nabla_{\mathbf{z}_t} \log p_t(\mathbf{z}_t|\mathbf{e}_{u}) - \nabla_{\mathbf{z}_t} \log p_t(\mathbf{z}_t|\mathbf{e}_{c_a^*})$ and $E_{\text{PA}} = \boldsymbol{\varepsilon}_{c_u} - \boldsymbol{\varepsilon}_{c_a^*}$.

\textbf{Bounding the Signal ($S_{\text{PA}}$):} By the definition of a $\zeta$-proximal anchor, the signal difference is predominantly aligned with the tangent space. Using the Lipschitz continuity of the optimal score predictor (with constant $L$):
\[
\|\Pi_N(S_{\text{PA}})\| \leq \zeta \|S_{\text{PA}}\| \leq \zeta L \|\mathbf{e}_{u} - \mathbf{e}_{c_a^*}\| \leq L\zeta \|\Delta^{\text{AB}}\|.
\]

\textbf{Bounding the Error ($E_{\text{PA}}$):} The normal projection of the error difference $\Pi_N(E_{\text{PA}})$ has covariance bounded by $\sigma_{\text{est}}^2(1-\rho_{c_u, c_a^*})\mathbf{I}_{d-d_u^*}$. Applying Bernstein's inequality \cite{highdimProb} for sub-exponential random vectors provides the upper tail concentration:
\[
\mathbb{P}\left(\|\Pi_N(E_{\text{PA}})\| \geq \sigma_{\text{est}}\sqrt{2(1-\rho_{c_u,c_a^*})(d-d_u^*)(1+\delta)}\right) \leq \exp\left(-\frac{d\delta^2}{2}\right).
\]

\textbf{Synthesis:} By the triangle inequality, $\|\Delta_N^{\text{PA}}\| = \|\Pi_N(\Delta^{\text{AB}})\| \leq \|\Pi_N(S_{\text{PA}})\| + \|\Pi_N(E_{\text{PA}})\|.$
Substituting the respective upper bounds and multiplying by the learning rate $\eta$ establishes Theorem \ref{thm:proximal_anchor_full}.
\end{proof}

\subsubsection{Proof of Theorem \ref{thm:stability_comparison} (Comparative Stability)}

To prove the ultimate stability bounds, we require a standard martingale concentration lemma \cite{azuma1967weighted}.
\begin{lemma}[Random Walk Deviation Bound\cite{azuma1967weighted, highdimProb}]\label{lem:random_walk}
Let $\{Z_k\}_{k=1}^K$ be independent random vectors in $\mathbb{R}^d$ with $\mathbb{E}[Z_k] = \mathbf{0}$ and $\|Z_k\| \leq M$ almost surely. For any $R > 0$, by the Azuma-Hoeffding inequality:
\[
\mathbb{P}\left(\left\|\sum_{k=1}^K Z_k\right\| > R\right) \leq 2\exp\left(-\frac{R^2}{2KM^2}\right).
\]
\end{lemma}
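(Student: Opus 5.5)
The plan is to prove the bound directly in the Hilbert space $(\mathbb{R}^d,\|\cdot\|)$ with a Pinelis-type argument: we control the exponential moment of $\cosh(\lambda\|S_K\|)$, where $S_k=\sum_{j\le k}Z_j$, and then apply Markov's inequality. The point of this route is that it avoids any union bound over coordinates or an $\epsilon$-net. Those would introduce a factor depending on $d$, whereas the statement has none. A naive reduction (a Doob martingale for $\|S_K\|$ plus McDiarmid) also fails here, because it only controls deviations around $\mathbb{E}\|S_K\|\lesssim\sqrt{K}M$. That leaves an additive shift, so it does not give the clean constant $2$.

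\textbf{Step 1: one-step comparison.} Fix $\lambda>0$, a deterministic $x\in\mathbb{R}^d$, and a mean-zero vector $Z$ with $\|Z\|\le M$. Set $g(t)=\mathbb{E}\cosh(\lambda\|x+tZ\|)$ for $t\in[0,1]$. Write $\phi(y)=\cosh(\lambda\|y\|)$, which is smooth because $\cosh$ is even. Differentiating gives
\[
g'(0)=\mathbb{E}\bigl\langle\nabla\phi(x),Z\bigr\rangle=\bigl\langle\nabla\phi(x),\mathbb{E}Z\bigr\rangle=0 .
\]
For the second derivative we use the Hessian of the norm, $(I-uu^\top)/\|y\|$ with $u=y/\|y\|$. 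This gives
\[
\frac{d^2}{dt^2}\phi(x+tz)=\lambda^2\cosh(\lambda r)\langle u,z\rangle^2+\lambda\frac{\sinh(\lambda r)}{r}\bigl(\|z\|^2-\langle u,z\rangle^2\bigr),
\]
where $r=\|x+tz\|$. Since $\sinh(s)/s\le\cosh(s)$, this is at most $\lambda^2\|z\|^2\phi(x+tz)$, and hence $g''(t)\le\lambda^2M^2 g(t)$. We have $g'(0)=0$ and $g\ge 0$, so a standard ODE comparison against $h(t)=g(0)\cosh(\lambda Mt)$ yields
\[
g(1)\le g(0)\cosh(\lambda M)\le g(0)\,e^{\lambda^2M^2/2}.
\]
The comparison works by checking that $(g'h-gh')$ cannot become positive, or equivalently by a Grönwall argument on $g-h$.

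\textbf{Step 2: iteration and Chernoff.} By independence, conditioning on $Z_1,\dots,Z_{k-1}$ lets us apply Step 1 with $x=S_{k-1}$ and $Z=Z_k$. The conditional mean of $Z_k$ is still $\mathbf{0}$, so
\[
\mathbb{E}\cosh(\lambda\|S_k\|)\le e^{\lambda^2M^2/2}\,\mathbb{E}\cosh(\lambda\|S_{k-1}\|).
\]
Inducting from $\cosh(0)=1$ gives $\mathbb{E}\cosh(\lambda\|S_K\|)\le e^{K\lambda^2M^2/2}$. Markov's inequality together with $\cosh(\lambda R)\ge e^{\lambda R}/2$ then gives
\[
\mathbb{P}(\|S_K\|>R)\le 2\exp\!\left(\frac{K\lambda^2M^2}{2}-\lambda R\right).
\]
Choosing $\lambda=R/(KM^2)$ produces exactly $2\exp\!\left(-R^2/(2KM^2)\right)$.

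\textbf{Main obstacle.} The delicate step is the curvature estimate in Step 1. We need the pointwise inequality $\frac{d^2}{dt^2}\phi\le\lambda^2\|z\|^2\phi$, which uses the Hilbert (2-smooth) structure and the bound $\sinh(s)/s\le\cosh(s)$; this has to be handled carefully near $y=0$. Once it holds, the comparison ODE argument still needs $g'(0)=0$, and this is precisely where the zero-mean and independence hypotheses enter. I would first verify the Hessian bound in the two orthogonal directions ($z\parallel u$ and $z\perp u$) and then combine them, since the quadratic form is diagonal in that split.
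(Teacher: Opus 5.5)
Your proof is correct, and it takes a genuinely different route from the paper. The paper gives no argument at all beyond the phrase ``by the Azuma--Hoeffding inequality.''

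Read literally, that inequality is scalar. Applied to $\langle\theta,S_K\rangle$ for a fixed unit vector $\theta$, it gives $2\exp(-R^2/(2KM^2))$ for the event $|\langle\theta,S_K\rangle|>R$. Passing from a single direction to the norm $\|S_K\|$ needs more. One option is a union bound over a net of the sphere, which costs a factor exponential in $d$ and a worse constant in the exponent. The other is a genuinely Hilbert-space argument.

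Your $\cosh(\lambda\|S_k\|)$ recursion is exactly that missing ingredient; it is essentially Pinelis's (1994) argument. It consists of three pieces:
\begin{itemize}
\item the curvature bound $\tfrac{d^2}{dt^2}\phi\le\lambda^2\|z\|^2\phi$, obtained from $\sinh(s)/s\le\cosh(s)$;
\item the Wronskian comparison $(g'h-gh')'\le 0$, which gives $g(1)\le g(0)\cosh(\lambda M)$;
\item Markov's inequality combined with $\cosh(\lambda R)\ge e^{\lambda R}/2$.
\end{itemize}
Together these deliver the stated dimension-free constant $2$ and exponent with $\lambda=R/(KM^2)$.

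The citation buys only brevity. Your proof actually justifies the vector form as stated, and it gives two extensions for free:
\begin{itemize}
\item The one-step estimate uses only $\mathbb{E}[Z_k\mid Z_1,\dots,Z_{k-1}]=\mathbf{0}$, so the bound holds for martingale differences, not just independent summands.
\item Since $y\mapsto\cosh(\lambda\|y\|)$ is convex, $\cosh(\lambda\|S_k\|)$ is a nonnegative submartingale. Doob's maximal inequality then gives the same bound for $\max_{k\le K}\|S_k\|$. This is the form in which the paper's stability definition is phrased.
\end{itemize}
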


\begin{proof}[Detailed Proof of Theorem \ref{thm:stability_comparison}]
Let $\mathbf{z}_{t_k}$ denote the state of the reverse diffusion process after $k$ unlearning updates. The accumulated orthogonal drift from the manifold is strictly bound by the sum of the normal components: $\text{dist}(\mathbf{z}_{t_K}, \mathcal{M}_{u}) \approx \left\|\sum_{k=1}^K \eta \Delta_N(k)\right\|$.

\textbf{Anchor-Free Case:} By Theorem \ref{thm:anchor_free_full}, the normal displacement step size behaves as a random walk $Z_k^{\text{AF}} = \eta \Delta_N^{\text{AF}}(k)$ with magnitude roughly bound by $M_{\text{AF}} \approx \eta \sigma_{\text{eff}} \sqrt{d}$. Applying Lemma \ref{lem:random_walk} with threshold $R = \tau$ (the manifold reach), we establish the lower bound on failure probability:
\[
\mathbb{P}_{\text{AF}}(E_{\text{fail}}) \geq 1 - 2\exp\left(-\frac{\tau^2}{2K\eta^2 \sigma_{\text{eff}}^2 d}\right).
\]

\textbf{Proximal Anchor Case:} By Theorem \ref{thm:proximal_anchor_full}, assuming a highly proximal anchor ($\zeta \to 0$), the step size $Z_k^{\text{PA}} = \eta \Delta_N^{\text{PA}}(k)$ is drastically constrained to $M_{\text{PA}} \approx \eta \sigma_{\text{eff}} \sqrt{d} \cdot \sqrt{\zeta^2 + d_u^*/d}$. Applying Bernstein's inequality \cite{highdimProb} for sums of sub-Gaussian vectors yields a much tighter upper bound on the failure probability:
\[
\mathbb{P}_{\text{PA}}(E_{\text{fail}}) \leq \exp\left(-\frac{\tau^2}{2K\eta^2\sigma_{\text{eff}}^2 d(\zeta^2 + d_u^*/d)}\right).
\]

\textbf{Comparing the Ratios:} Taking the ratio of these probabilities (and noting $K \approx 1/\eta$) isolating the core exponents:
\[
\frac{\mathbb{P}_{\text{AF}}(E_{\text{fail}})}{\mathbb{P}_{\text{PA}}(E_{\text{fail}})} \gtrsim \frac{1}{\exp\left(-\frac{\tau^2}{2\eta\sigma_{\text{eff}}^2 d(\zeta^2 + d_u^*/d)}\right)} \approx \exp\left(\Omega\left(\frac{\tau^2}{\eta\sigma_{\text{eff}}^2} \cdot \frac{1}{\zeta^2 + d_u^*/d}\right)\right).
\]
Adjusting the strict geometric constants to account for the manifold's sectional curvature $\kappa_{\max}$ (which penalizes robustness in highly curved regions via $C = \frac{1}{8(1+\kappa_{\max}\tau)}$) yields the final exponential advantage stated in Theorem \ref{thm:stability_comparison}. This conclusively proves that enforcing $\zeta \to 0$ provides an advantage that is exponential in the ambient dimension $d$.
\end{proof}

\subsection{Theoretical Guarantee: Cross-Attention Surrogate Yields Manifold Proximity}\label{seca:theoretical_bridge}
The core theoretical mandate established in \S\ref{sec:analysis} requires a $\zeta$-proximal anchor ($\zeta \to 0$) to guarantee stability. Because directly projecting high-dimensional latents onto the unknown manifold normal space $N_{\mathbf{z}}\mathcal{M}$ is computationally intractable, we rely on the architectural bottleneck of the U-Net. Therefore, variations in the score output that push the latent orthogonally off the manifold (structural corruption) are strictly governed by deviations in these attention maps.We formalize this geometric-architectural link with the following structural bridging assumption. While this is a strong assumption, it is a necessary empirical surrogate to link the architectural mechanics of a U-Net to the abstract manifold geometry.

\begin{assumption}[Attention-Geometry Isomorphism]\label{asm:attention_isomorphism}
    For latents $\mathbf{z}_t$ near the target manifold $\mathcal{M}_{u}$, the normal component of the score residual $\Delta_N^{\text{AB}}$ between the target concept $\mathbf{e}_{u}$ and any anchor $\mathbf{e}_a$ is Lipschitz continuous with respect to their cross-attention map deviations. Specifically, there exists a constant $\mathcal{L}_{\text{CA}} > 0$ and an inherent semantic tolerance $\nu \ge 0$ such that:
    \begin{equation}
        \|\Pi_N(\boldsymbol{\epsilon}_{\theta^*}(\mathbf{z}_t, t, \mathbf{e}_{u}) - \boldsymbol{\epsilon}_{\theta^*}(\mathbf{z}_t, t, \mathbf{e}_a))\| \leq \mathcal{L}_{\rm CA} \sum{l=1}^{N_L} \|A_l(\mathbf{z}_t, t, \mathbf{e}_{u}) - A_l(\mathbf{z}_t, t, \mathbf{e}_a)\|_F + \nu,
    \end{equation}
    where $N_L$ is the total number of cross-attention layers, and $\nu$ accounts for negligible normal-space leakage caused by purely semantic (non-structural) texture shifts. The $\|\cdot\|_F$ stands for the Frobenius norm that computes the square root of the sum of the squared absolute values of all elements in the resulting difference matrix, \textit{i.e.}, $\|A\|_F=\sqrt{\sum_i\sum_j|a_{ij}|^2}$. This norm is used to calculate the difference between two attention maps.
\end{assumption}

\textbf{Justification:} In text-conditioned Latent Diffusion Models, the conditioning text embedding interacts with the spatial latent features exclusively through the cross-attention mechanisms~\cite{prompt2prompt,daam}. Empirical and theoretical analyses of LDMs establish that the structural layout and fundamental geometry of the generated image are explicitly governed by these attention maps. Consequently, semantic variations that induce catastrophic normal drift---pushing the latent state off the data manifold---must mechanically manifest as severe disruptions in the cross-attention activations. By bounding the Frobenius distance between these attention maps across layers, we strictly constrain the maximum possible structural deviation, thereby establishing a Lipschitz-continuous relationship between attention discrepancy and normal-space drift. The tolerance term $\nu$ naturally accounts for minor, manifold-preserving stylistic or texture shifts that do not perturb the core manifold geometry~\cite{semanticlatentalready}.

Based on this isomorphism, we can formally prove that optimizing our proposed surrogate objective $\mathcal{L}_{\text{CA}}(\mathbf{e}_a)$ directly minimizes the expected normal drift.

\begin{proposition}[Normal Drift Bound]\label{prop:normal_bound}Under Assumption \ref{asm:attention_isomorphism}, the expected squared normal energy of the unlearning update is strictly bounded by the cross-attention consistency loss:
    \begin{equation}\mathbb{E}_{t, \mathbf{z}_t} \left[ \|\Delta_N^{\text{AB}}\|^2 \right] \leq 2 \mathcal{L}_{\rm CA}^2 N_L \cdot \mathcal{L}_{\text{CA}}(\mathbf{e}_a) + 2\nu^2.
    \end{equation}
\end{proposition}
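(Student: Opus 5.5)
The plan is to square the pointwise bound of Assumption \ref{asm:attention_isomorphism} and take expectations over the same distribution that defines the surrogate loss $\mathcal{L}_{\text{CA}}(\mathbf{e}_a)$. Fix $t$ and $\mathbf{z}_t$ near $\mathcal{M}_u$. Write $D_l = \|A_l(\mathbf{z}_t, t, \mathbf{e}_{u}) - A_l(\mathbf{z}_t, t, \mathbf{e}_a)\|_F$ for $l=1,\dots,N_L$. Assumption \ref{asm:attention_isomorphism} then gives
\begin{equation*}
\|\Delta_N^{\text{AB}}\| \leq \mathcal{L}_{\rm CA}\sum_{l=1}^{N_L} D_l + \nu .
\end{equation*}
Both sides are nonnegative, so squaring preserves the inequality. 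We then apply the elementary bound $(x+y)^2 \leq 2x^2 + 2y^2$, which is Young's inequality or equivalently $(x-y)^2\geq 0$. This yields
\begin{equation*}
\|\Delta_N^{\text{AB}}\|^2 \leq 2\mathcal{L}_{\rm CA}^2\Big(\sum_{l=1}^{N_L} D_l\Big)^2 + 2\nu^2 .
\end{equation*}

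Next, we convert the squared sum of norms into the sum of squared Frobenius norms that appears inside $\mathcal{L}_{\text{CA}}$. For this we use Cauchy--Schwarz on the vectors $(1,\dots,1)$ and $(D_1,\dots,D_{N_L})$ in $\mathbb{R}^{N_L}$:
\begin{equation*}
\Big(\sum_{l=1}^{N_L} D_l\Big)^2 \leq N_L \sum_{l=1}^{N_L} D_l^2 = N_L\sum_{l}\|A_l(\mathbf{z}_t, t, \mathbf{e}_{u}) - A_l(\mathbf{z}_t, t, \mathbf{e}_a)\|_F^2 .
\end{equation*}
This step produces exactly the factor $N_L$ in the statement. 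Combining the two displays gives the pointwise inequality
\begin{equation*}
\|\Delta_N^{\text{AB}}\|^2 \leq 2\mathcal{L}_{\rm CA}^2 N_L\sum_l D_l^2 + 2\nu^2 .
\end{equation*}

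Finally, we take the expectation over $t\sim\mathcal{U}$ and $\mathbf{z}_t\sim p_t(\cdot\mid\mathbf{e}_u)$. This is the same measure used to define $\mathcal{L}_{\text{CA}}(\mathbf{e}_a)$ in \S\ref{sec:opt}. By linearity and monotonicity of expectation, the right-hand side becomes $2\mathcal{L}_{\rm CA}^2 N_L\,\mathcal{L}_{\text{CA}}(\mathbf{e}_a) + 2\nu^2$, since $\nu$ and the constant $\mathcal{L}_{\rm CA}$ do not depend on $(t,\mathbf{z}_t)$. This is the claimed bound.

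The algebra is routine, so the main point needing care is measure-theoretic and definitional. Assumption \ref{asm:attention_isomorphism} is stated only for latents ``near'' $\mathcal{M}_u$, while the expectation runs over all of $p_t(\cdot\mid\mathbf{e}_u)$. I would address this by reading the assumption as holding on the support of $p_t(\cdot\mid\mathbf{e}_u)$, consistent with Definition \ref{def:proximal_anchor}, where such $\mathbf{z}_t$ are identified with points of $\mathcal{M}_u$. Alternatively, the expectation in the proposition can be restricted to that region. I would also point out that the constant $\mathcal{L}_{\rm CA}$ and the loss $\mathcal{L}_{\text{CA}}(\cdot)$ share a symbol but denote different objects, and check integrability so that the expectations are finite. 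Integrability holds because the attention maps are row-stochastic, so each $D_l$ is bounded.
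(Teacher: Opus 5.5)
Your proof is correct and follows the paper's argument step for step: you square the pointwise bound from Assumption~\ref{asm:attention_isomorphism}, apply $(x+y)^2\le 2x^2+2y^2$, use Cauchy--Schwarz to obtain the factor $N_L$, and take expectations over the same measure that defines $\mathcal{L}_{\text{CA}}(\mathbf{e}_a)$. Your added remarks cover three points the paper leaves implicit: that the assumption is stated only ``near'' $\mathcal{M}_u$ while the expectation runs over the full support of $p_t(\cdot\mid\mathbf{e}_u)$, that the symbol $\mathcal{L}_{\rm CA}$ is overloaded, and that the expectations are integrable.
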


This proposition directly yields our final theoretical guarantee: successful optimization of \mysys's second stage mathematically ensures the stability criteria defined in \S\ref{sec:analysis}. To establish this, we naturally assume that a valid unlearning update must have a non-trivial magnitude, \textit{i.e.}, $\|\Delta^{\text{AB}}\|^2 \geq c_{\min} > 0$ almost surely.

\begin{theorem}[$\zeta$-Proximity via Attention Consistency]\label{thm:attention_surrogate}
    If the \mysys optimization yields an anchor embedding $\mathbf{e}_a^*$ such that the cross-attention loss is bounded by $\mathcal{L}_{\text{CA}}(\mathbf{e}_a^*) \leq \gamma$, then $\mathbf{e}_a^*$ is a valid $\zeta$-proximal anchor for the target concept $c_u$, where:
    \begin{equation}
        \zeta = \sqrt{\frac{2 L_{\text{CA}}^2 N_L \gamma + 2\nu^2}{c_{\min}}}.
    \end{equation}
\end{theorem}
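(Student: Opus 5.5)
The plan is to combine Proposition~\ref{prop:normal_bound} with the lower bound $\|\Delta^{\text{AB}}\|^2 \geq c_{\min}$ to control the ratio that appears in Definition~\ref{def:proximal_anchor}. Throughout, $L_{\text{CA}}$ in the statement is read as the Lipschitz constant $\mathcal{L}_{\rm CA}$ of Assumption~\ref{asm:attention_isomorphism}, to keep it distinct from the loss $\mathcal{L}_{\text{CA}}(\cdot)$.

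\textbf{Step 1: reduce the ratio to a numerator.} Fix $t$ and $\mathbf{z}_t$ in the support of $p_t(\cdot\mid c_u)$. Write $\Delta^{\text{AB}} = \boldsymbol{\epsilon}_{\theta^*}(\mathbf{z}_t,t,\mathbf{e}_u)-\boldsymbol{\epsilon}_{\theta^*}(\mathbf{z}_t,t,\mathbf{e}_a^*)$. Since $\varepsilon>0$ and $\|\Delta^{\text{AB}}\|^2\ge c_{\min}$ almost surely, the integrand satisfies
\[
\frac{\|\Pi_N\Delta^{\text{AB}}\|^2}{\|\Delta^{\text{AB}}\|^2+\varepsilon}\le \frac{\|\Pi_N\Delta^{\text{AB}}\|^2}{c_{\min}}
\]
pointwise. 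Taking expectations over $t\sim\mathcal{U}[0,T]$ and $\mathbf{z}_t\sim p_t(\cdot\mid c_u)$ bounds the left side of Definition~\ref{def:proximal_anchor} by $\mathbb{E}[\|\Delta_N^{\text{AB}}\|^2]/c_{\min}$. This step needs $c_{\min}$ to be a deterministic constant, so that it can be pulled out of the expectation.

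\textbf{Step 2: invoke Proposition~\ref{prop:normal_bound}.} Applying the proposition at $\mathbf{e}_a=\mathbf{e}_a^*$ gives $\mathbb{E}[\|\Delta_N^{\text{AB}}\|^2]\le 2\mathcal{L}_{\rm CA}^2N_L\,\mathcal{L}_{\text{CA}}(\mathbf{e}_a^*)+2\nu^2$. Using the hypothesis $\mathcal{L}_{\text{CA}}(\mathbf{e}_a^*)\le\gamma$ then gives $\mathbb{E}[\|\Delta_N^{\text{AB}}\|^2]\le 2\mathcal{L}_{\rm CA}^2N_L\gamma+2\nu^2$. Dividing by $c_{\min}$ shows the defining expectation is at most $\zeta^2$ with $\zeta$ as stated, so $\mathbf{e}_a^*$ is $\zeta$-proximal.

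If one wants Proposition~\ref{prop:normal_bound} to be self-contained, it follows from Assumption~\ref{asm:attention_isomorphism} in three moves. First, square the assumed bound using $(x+y)^2\le 2x^2+2y^2$. Second, apply Cauchy--Schwarz, $\bigl(\sum_{l=1}^{N_L}a_l\bigr)^2\le N_L\sum_{l=1}^{N_L}a_l^2$, to the sum of Frobenius norms. Third, take the expectation over the same distribution of $(t,\mathbf{z}_t)$ used in $\mathcal{L}_{\text{CA}}$.

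\textbf{Main obstacle: matching the measures.} Three conditions must line up.
\begin{itemize}
\item The expectation in Proposition~\ref{prop:normal_bound} must be taken over the same law, $t\sim\mathcal{U}$ and $\mathbf{z}_t\sim p_t(\cdot\mid\mathbf{e}_u)$, that Definition~\ref{def:proximal_anchor} uses.
\item The isomorphism in Assumption~\ref{asm:attention_isomorphism} must hold on that whole support, not only "near" $\mathcal{M}_u$.
\item The projection $\Pi_N$ must refer to the same normal space in the assumption and in the definition.
\end{itemize}
I would state these identifications explicitly, taking the support of $p_t(\cdot\mid c_u)$ to lie in the neighborhood where the assumption holds. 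I would also remark that $\zeta<1$, as the definition requires, only when $2\mathcal{L}_{\rm CA}^2N_L\gamma+2\nu^2<c_{\min}$; otherwise the conclusion is vacuous but still correct.
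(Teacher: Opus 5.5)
Your proposal is correct and follows the paper's proof essentially line for line. Both arguments use the almost-sure bound $\|\Delta^{\text{AB}}\|^2\ge c_{\min}$ to reduce the ratio in Definition~\ref{def:proximal_anchor} to $\mathbb{E}[\|\Delta_N^{\text{AB}}\|^2]/c_{\min}$, then substitute Proposition~\ref{prop:normal_bound} and $\mathcal{L}_{\text{CA}}(\mathbf{e}_a^*)\le\gamma$. You are slightly more careful than the paper, which silently drops the stabilizer $\varepsilon$ and leaves the measure and normal-space identifications, and the $\zeta<1$ requirement, implicit.
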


\textbf{Remark:} This theorem mathematically reveals the direct mechanism by which our empirical cross-attention loss ($\gamma$) controls the geometric normal drift bound ($\zeta$). If the cross-attention mismatch $\gamma$ is large, the resulting $\zeta$ is correspondingly large, indicating that the chosen anchor will inevitably cause significant normal-space leakage and push the model off the manifold. Conversely, as our optimization successfully drives $\gamma \to 0$, the bound $\zeta$ is minimized to a strict limit of $\sqrt{2\nu^2/c_{\min}}$. Because $\nu$ represents a negligible tolerance for non-structural texture shifts, this limit closely approaches zero. According to Theorem \ref{thm:stability_comparison}, this mathematically guaranteed suppression of $\zeta$ grants the unlearning process an exponential stability advantage, successfully protecting the generated trajectory from catastrophic normal drift.

\textbf{Proof of Proposition \ref{prop:normal_bound} and Theorem \ref{thm:attention_surrogate} (Cross-Attention Bridge)}:

\begin{proof}[Proof of Proposition \ref{prop:normal_bound}]
    By Assumption \ref{asm:attention_isomorphism}, for any time $t$ and latent $\mathbf{z}_t$, the normal magnitude is bounded by:
    \begin{equation}
        \|\Delta_N^{\text{AB}}\| = \|\Pi_N(\boldsymbol{\epsilon}_{\theta^*}(\mathbf{z}_t, t, \mathbf{e}_{u}) - \boldsymbol{\epsilon}_{\theta^*}(\mathbf{z}_t, t, \mathbf{e}_a))\| \leq \mathcal{L}_{\rm CA} \sum{l=1}^{N_L} \|A_l(u) - A_l(a)\|_F + \nu \nonumber.
    \end{equation}
    Squaring both sides and applying the basic algebraic inequality $(x + y)^2 \leq 2x^2 + 2y^2$, we obtain:
    \begin{equation}
        \|\Delta_N^{\text{AB}}\|^2 \leq 2 \mathcal{L}_{\rm CA}^2 \left( \sum_{l=1}^{N_L} \|A_l(u) - A_l(a)\|_F \right)^2 + 2\nu^2 \nonumber.
    \end{equation}
    Applying the Cauchy-Schwarz inequality~\cite{cauchy} to the sum, we have $\left(\sum{l=1}^{N_L} x_l\right)^2 \leq N_L \sum_{l=1}^{N_L} x_l^2$. Thus:\begin{equation}\|\Delta_N^{\text{AB}}\|^2 \leq 2 L_{\text{CA}}^2 N_L \sum_{l=1}^{N_L} \|A_l(u) - A_l(a)\|_F^2 + 2\nu^2 \nonumber.
    \end{equation}
    Taking the expectation over $t \sim \mathcal{U}[0,T]$ and $\mathbf{z}_t \sim p_t(\cdot|\mathbf{e}_{u})$ yields:
    \begin{equation}
        \mathbb{E}\left[\|\Delta_N^{\text{AB}}\|^2\right] \leq 2 \mathcal{L}_{\rm CA}^2 N_L \cdot \mathbb{E}\left[ \sum_{l=1}^{N_L} \|A_l(u) - A_l(a)\|_F^2 \right] + 2\nu^2 \nonumber.
    \end{equation}
    By the definition of our objective function $\mathcal{L}_{\text{CA}}(\mathbf{e}a)$ (which computes the expected sum of squared Frobenius norms across layers), this provides the upper bound:
    \begin{equation}
        \mathbb{E}\left[\|\Delta_N^{\text{AB}}\|^2\right] \leq 2 \mathcal{L}_{\rm CA}^2 N_L \cdot \mathcal{L}_{\text{CA}}(\mathbf{e}_a) + 2\nu^2 \nonumber,
    \end{equation}
    which concludes the proof of Proposition \ref{prop:normal_bound}.
\end{proof}

\begin{proof}[Proof of Theorem \ref{thm:attention_surrogate}]
    By Definition \ref{def:proximal_anchor}, an anchor is $\zeta$-proximal if the expected ratio of the squared normal component to the squared total update (plus a stabilizer) is bounded by $\zeta^2$. Assuming a non-trivial unlearning update where $\|\Delta^{\text{AB}}\|^2 \geq c_{\min} > 0$ almost surely, we can directly bound the expectation:
    \begin{equation}
        \mathbb{E} \left[ \frac{\|\Delta_N^{\text{AB}}\|^2}{\|\Delta^{\text{AB}}\|^2} \right] \leq \mathbb{E} \left[ \frac{\|\Delta_N^{\text{AB}}\|^2}{c_{\min}} \right] = \frac{1}{c_{\min}} \mathbb{E}\left[\|\Delta_N^{\text{AB}}\|^2\right] \nonumber.
    \end{equation}
    Substituting the bound from Proposition \ref{prop:normal_bound} into the numerator, we get:
    \begin{equation}
        \mathbb{E} \left[ \frac{\|\Delta_N^{\text{AB}}\|^2}{\|\Delta^{\text{AB}}\|^2} \right] \leq \frac{2 L_{\text{CA}}^2 N_L \mathcal{L}_{\text{CA}}(\mathbf{e}_a^*) + 2\nu^2}{c_{\min}} \nonumber.
    \end{equation}
    Given that the optimization routine achieves a loss $\mathcal{L}_{\text{CA}}(\mathbf{e}_a^*) \leq \gamma$, we substitute $\gamma$ to find the bounding $\zeta^2$:
    \begin{equation}
        \zeta^2 = \frac{2 \mathcal{L}_{\rm CA}^2 N_L \gamma + 2\nu^2}{c_{\min}} \implies \zeta = \sqrt{\frac{2 L_{\text{CA}}^2 N_L \gamma + 2\nu^2}{c_{\min}}} \nonumber
    \end{equation}
    Thus, the optimized anchor satisfies the exact geometric constraint defined in our theoretical analysis, validating that the empirical surrogate successfully approximates manifold proximity.
\end{proof}

%% file: src/table-more.tex
\begin{table*}[t]
  \centering
  \caption{Quantitative evaluation of \mysys across other unlearning methods. ``Orig'' shows the baseline methods, and ``+'' indicates baselines integrated with \mysysn. The $\Delta$ columns show the relative performance change of baselines caused by AutoAnchor integration, with green/red indicating improvement/degradation. For $\Delta$ values $\le2\%$, they fall below the error precision threshold and are marked in gray. We \textbf{bold} the best results and \underline{underline} the second best results across both the baseline and \mysys columns to highlight them.}\label{tab:appendix_results}
  \resizebox{0.95\linewidth}{!}{%
  \begin{tabular}{l ccc ccc ccc}
    \toprule
    \multirow{2}{*}{\textbf{Method}} & \multicolumn{3}{c}{\textbf{CLIP}(U) $\downarrow$} & \multicolumn{3}{c}{\textbf{CLIP}(R) $\uparrow$} & \multicolumn{3}{c}{\textbf{FID} $\downarrow$} \\ 
    \cmidrule(lr){2-4} \cmidrule(lr){5-7} \cmidrule(lr){8-10}
    & \textbf{Orig} & \textbf{+\mysys} & \textbf{$\Delta$(\%)} & \textbf{Orig} & \textbf{+\mysys} & \textbf{$\Delta$(\%)} & \textbf{Orig} & \textbf{+\mysys} & \textbf{$\Delta$(\%)} \\
    
    \midrule
    \multicolumn{10}{c}{\textbf{Target Concept: Porsche}} \\
    \midrule
    ESD \cite{esd} & 29.50 & 26.68 & {\color[RGB]{0,120,0}9.56 $\downarrow$} & 31.42 & 33.86 & {\color[RGB]{0,120,0}7.77 $\uparrow$} & 179.34 & \underline{164.77} & {\color[RGB]{0,120,0}8.12 $\downarrow$} \\
    tuneSD-l \cite{tuneSD} & 34.28 & 34.18 & {\color[RGB]{128,128,128}0.29 $\downarrow$} & 32.51 & 32.45 & {\color[RGB]{128,128,128}0.18 $\downarrow$} & 208.74 & 205.07 & {\color[RGB]{128,128,128}1.76 $\downarrow$} \\
    ResAlign \cite{ResAlign} & 31.69 & 31.61 & {\color[RGB]{128,128,128}0.25 $\downarrow$} & 32.01 & 33.28 & {\color[RGB]{0,120,0}3.97 $\uparrow$} & 188.23 & 181.36 & {\color[RGB]{0,120,0}3.65 $\downarrow$} \\
    CCRT \cite{CCRT} & 28.46 & \underline{26.33} & {\color[RGB]{0,120,0}7.48 $\downarrow$} & 31.47 & 32.09 & {\color[RGB]{128,128,128}1.97 $\uparrow$} & 169.20 & \textbf{164.62} & {\color[RGB]{0,120,0}2.71 $\downarrow$} \\
    ConAbl \cite{conabl} & 30.79 & 30.50 & {\color[RGB]{128,128,128}0.94 $\downarrow$} & 31.57 & 31.89 & {\color[RGB]{128,128,128}1.01 $\uparrow$} & 179.28 & 178.95 & {\color[RGB]{128,128,128}0.18 $\downarrow$} \\
    FMN \cite{Forget-me-not} & 33.87 & 32.43 & {\color[RGB]{0,120,0}4.25 $\downarrow$} & 32.16 & \underline{34.79} & {\color[RGB]{0,120,0}8.18 $\uparrow$} & 207.05 & 202.03 & {\color[RGB]{0,120,0}2.42 $\downarrow$} \\
    tuneSD-u \cite{tuneSD} & 31.29 & 28.90 & {\color[RGB]{0,120,0}7.64 $\downarrow$} & 31.87 & \textbf{34.84} & {\color[RGB]{0,120,0}9.32 $\uparrow$} & 186.62 & 180.55 & {\color[RGB]{0,120,0}3.25 $\downarrow$} \\
    \textbf{\mysys (Ours)} & \textbf{26.32} & - & - & 34.03 & - & - & 172.88 & - & - \\
    \textbf{Average} & - & - & {\color[RGB]{0,120,0}4.34 $\downarrow$} & - & - & {\color[RGB]{0,120,0}4.58 $\uparrow$} & - & - & {\color[RGB]{0,120,0}3.16 $\downarrow$} \\

    \midrule
    \multicolumn{10}{c}{\textbf{Target Concept: Grumpy}} \\
    \midrule
    ESD \cite{esd} & 31.71 & 30.13 & {\color[RGB]{0,120,0}4.98 $\downarrow$} & 33.60 & 34.11 & {\color[RGB]{128,128,128}1.52 $\uparrow$} & 172.03 & {159.73} & {\color[RGB]{0,120,0}7.15 $\downarrow$} \\
    tuneSD-AF \cite{tuneSD} & 32.55 & 31.22 & {\color[RGB]{0,120,0}4.09 $\downarrow$} & 33.64 & 33.71 & {\color[RGB]{128,128,128}0.21 $\uparrow$} & 173.92 & 171.00 & {\color[RGB]{128,128,128}1.68 $\downarrow$} \\
    ResAlign \cite{ResAlign} & 31.82 & {29.91} & {\color[RGB]{0,120,0}6.00 $\downarrow$} & 33.79 & 34.06 & {\color[RGB]{128,128,128}0.80 $\uparrow$} & 171.48 & 160.87 & {\color[RGB]{0,120,0}6.19 $\downarrow$} \\
    CCRT \cite{CCRT} & 31.33 & \underline{29.64} & {\color[RGB]{0,120,0}5.39 $\downarrow$} & 32.97 & 33.85 & {\color[RGB]{0,120,0}2.67 $\uparrow$} & 170.03 & \textbf{158.13} & {\color[RGB]{0,120,0}7.00 $\downarrow$} \\
    ConAbl \cite{conabl} & 32.18 & 30.59 & {\color[RGB]{0,120,0}4.94 $\downarrow$} & 33.76 & 34.21 & {\color[RGB]{128,128,128}1.33 $\uparrow$} & 172.13 & 160.81 & {\color[RGB]{0,120,0}6.58 $\downarrow$} \\
    FMN \cite{Forget-me-not} & 32.50 & 30.58 & {\color[RGB]{0,120,0}5.91 $\downarrow$} & \underline{34.23} & {34.22} & {\color[RGB]{128,128,128}0.03 $\downarrow$} & 175.36 & 160.85 & {\color[RGB]{0,120,0}8.27 $\downarrow$} \\
    tuneSD-AB \cite{tuneSD} & 32.32 & 30.23 & {\color[RGB]{0,120,0}6.47 $\downarrow$} & 33.75 & 34.04 & {\color[RGB]{128,128,128}0.86 $\uparrow$} & 172.53 & 163.04 & {\color[RGB]{0,120,0}5.50 $\downarrow$} \\
    \textbf{\mysys (Ours)} & \underline{29.62} & - & - & \textbf{34.33} & - & - & \underline{158.22} & - & - \\
    \textbf{Average} & - & - & {\color[RGB]{0,120,0}5.40 $\downarrow$} & - & - & {\color[RGB]{128,128,128}1.06 $\uparrow$} & - & - & {\color[RGB]{0,120,0}6.05 $\downarrow$} \\

    \midrule
    \multicolumn{10}{c}{\textbf{Target Concept: Picasso}} \\
    \midrule
    ESD \cite{esd} & 28.40 & 26.82 & {\color[RGB]{0,120,0}5.56 $\downarrow$} & 33.29 & 33.88 & {\color[RGB]{128,128,128}1.77 $\uparrow$} & 172.25 & 171.35 & {\color[RGB]{128,128,128}0.52 $\downarrow$} \\
    tuneSD-AF \cite{tuneSD} & 34.04 & 33.09 & {\color[RGB]{0,120,0}2.79 $\downarrow$} & \textbf{34.67} & {34.57} & {\color[RGB]{128,128,128}0.29 $\downarrow$} & 191.71 & 185.90 & {\color[RGB]{0,120,0}3.03 $\downarrow$} \\
    ResAlign \cite{ResAlign} & 31.89 & 30.27 & {\color[RGB]{0,120,0}5.08 $\downarrow$} & 33.58 & 34.52 & {\color[RGB]{0,120,0}2.80 $\uparrow$} & 214.21 & 200.41 & {\color[RGB]{0,120,0}6.44 $\downarrow$} \\
    CCRT \cite{CCRT} & {24.64} & \textbf{23.61} & {\color[RGB]{0,120,0}4.18 $\downarrow$} & 31.38 & 33.45 & {\color[RGB]{0,120,0}6.60 $\uparrow$} & \textbf{162.70} & {164.25} & {\color[RGB]{128,128,128}0.95 $\uparrow$} \\
    ConAbl \cite{conabl} & 32.09 & 30.84 & {\color[RGB]{0,120,0}3.90 $\downarrow$} & 33.89 & 34.50 & {\color[RGB]{128,128,128}1.80 $\uparrow$} & 216.37 & 214.64 & {\color[RGB]{128,128,128}0.80 $\downarrow$} \\
    FMN \cite{Forget-me-not} & 33.63 & 33.23 & {\color[RGB]{128,128,128}1.19 $\downarrow$} & 33.67 & 34.10 & {\color[RGB]{128,128,128}1.28 $\uparrow$} & 224.80 & 215.93 & {\color[RGB]{0,120,0}3.95 $\downarrow$} \\
    tuneSD-AB \cite{tuneSD} & 31.75 & 30.81 & {\color[RGB]{0,120,0}2.96 $\downarrow$} & 33.68 & 33.93 & {\color[RGB]{128,128,128}0.74 $\uparrow$} & 221.34 & 213.38 & {\color[RGB]{0,120,0}3.60 $\downarrow$} \\
    \textbf{\mysys (Ours)} & \underline{23.81} & - & - & \underline{34.63} & - & - & \underline{163.22} & - & - \\

    \textbf{Average} & - & - & {\color[RGB]{0,120,0}3.67 $\downarrow$} & - & - & {\color[RGB]{0,120,0}2.18 $\uparrow$} & - & - & {\color[RGB]{0,120,0}2.76 $\downarrow$} \\

    \midrule
    \multicolumn{10}{c}{\textbf{Target Concept: Alaskan Malamute}} \\
    \midrule
    ESD \cite{esd} & 27.45 & 27.34 & {\color[RGB]{128,128,128}0.40 $\downarrow$} & 30.48 & 32.35 & {\color[RGB]{0,120,0}6.14 $\uparrow$} & 176.84 & 172.99 & {\color[RGB]{0,120,0}2.18 $\downarrow$} \\
    tuneSD-AF \cite{tuneSD} & 31.77 & 30.74 & {\color[RGB]{0,120,0}3.24 $\downarrow$} & 33.42 & 33.46 & {\color[RGB]{128,128,128}0.12 $\uparrow$} & 176.29 & 169.51 & {\color[RGB]{0,120,0}3.85 $\downarrow$} \\
    ResAlign \cite{ResAlign} & 31.60 & 29.91 & {\color[RGB]{0,120,0}5.35 $\downarrow$} & 33.57 & 33.79 & {\color[RGB]{128,128,128}0.66 $\uparrow$} & 228.42 & 210.59 & {\color[RGB]{0,120,0}7.81 $\downarrow$} \\
    CCRT \cite{CCRT} & {25.20} & \underline{25.13} & {\color[RGB]{128,128,128}0.28 $\downarrow$} & 31.95 & 33.04 & {\color[RGB]{0,120,0}3.41 $\uparrow$} & 170.68 & 171.31 & {\color[RGB]{128,128,128}0.37 $\uparrow$} \\
    ConAbl \cite{conabl} & 31.12 & 30.03 & {\color[RGB]{0,120,0}3.50 $\downarrow$} & 32.75 & 33.12 & {\color[RGB]{128,128,128}1.13 $\uparrow$} & \textbf{162.76} & \underline{164.03} & {\color[RGB]{128,128,128}0.78 $\uparrow$} \\
    FMN \cite{Forget-me-not} & 34.95 & 33.16 & {\color[RGB]{0,120,0}5.12 $\downarrow$} & {35.19} & \underline{35.39} & {\color[RGB]{128,128,128}0.57 $\uparrow$} & 220.58 & 219.94 & {\color[RGB]{128,128,128}0.29 $\downarrow$} \\
    tuneSD-AB \cite{tuneSD} & 30.43 & 29.34 & {\color[RGB]{0,120,0}3.58 $\downarrow$} & 33.30 & 33.54 & {\color[RGB]{128,128,128}0.72 $\uparrow$} & 227.75 & 199.61 & {\color[RGB]{0,120,0}12.36 $\downarrow$} \\
    \textbf{\mysys (Ours)} & \textbf{25.11} & - & - & \textbf{35.36} & - & - & {166.75} & - & - \\

    \textbf{Average} & - & - & {\color[RGB]{0,120,0}3.07 $\downarrow$} & - & - & {\color[RGB]{128,128,128}1.82 $\uparrow$} & - & - & {\color[RGB]{0,120,0}3.95 $\downarrow$} \\
    \bottomrule
  \end{tabular}%
  }
\end{table*}

%% file: src/table-bias.tex
\begin{table*}[t]
  \centering
  \caption{Case study on unlearning ``Grumpy'' concept to show how manually chosen anchors leads to biased diffusion unlearning. The first line presents the manually chosen single anchor. The $\Delta$ here indicates how much \mysys improves the unlearning performance compared on using this anchor. The larger the  $\Delta$ is, the more \mysys improves the unlearing than using this anchor (indicating the more biased this anchor is).}\label{tab:merged_prompts}
  \resizebox{\linewidth}{!}{%
  \begin{tabular}{l c cc cc cc cc cc cc}
    \toprule
    \multirow{2}{*}{\textbf{Method}} & \multirow{2}{*}{\textbf{\mysys}} & \multicolumn{2}{c}{\textbf{``-''}} & \multicolumn{2}{c}{\textbf{``angry''}} & \multicolumn{2}{c}{\textbf{``tired''}} & \multicolumn{2}{c}{\textbf{``cold''}} & \multicolumn{2}{c}{\textbf{``quiet''}} & \multicolumn{2}{c}{\textbf{``happy''}} \\
    \cmidrule(lr){3-4} \cmidrule(lr){5-6} \cmidrule(lr){7-8} \cmidrule(lr){9-10} \cmidrule(lr){11-12} \cmidrule(lr){13-14}
    & & \textbf{Value} & \textbf{$\Delta$(\%)} & \textbf{Value} & \textbf{$\Delta$(\%)} & \textbf{Value} & \textbf{$\Delta$(\%)} & \textbf{Value} & \textbf{$\Delta$(\%)} & \textbf{Value} & \textbf{$\Delta$(\%)} & \textbf{Value} & \textbf{$\Delta$(\%)} \\
   
  \midrule
    \multicolumn{14}{c}{\textbf{CLIP(U) $\downarrow$}} \\
    \midrule
    ESD$^*$     &   30.13   &   31.45   &  \textcolor[RGB]{0,120,0}{4.20$\downarrow$}  &   31.87   &  \textcolor[RGB]{0,120,0}{5.46$\downarrow$}  &   30.83   &  \textcolor[RGB]{0,120,0}{2.27$\downarrow$}  &   30.78   &  \textcolor[RGB]{0,120,0}{2.11$\downarrow$}  &   31.71   &  \textcolor[RGB]{0,120,0}{4.98$\downarrow$}  &   31.55   &  \textcolor[RGB]{0,120,0}{4.50$\downarrow$} \\
    
    ResAlign$^*$     &   29.91   &   31.87   &  \textcolor[RGB]{0,120,0}{6.15$\downarrow$}  &   32.05   &  \textcolor[RGB]{0,120,0}{6.68$\downarrow$}  &   30.98   &  \textcolor[RGB]{0,120,0}{3.45$\downarrow$}  &   30.94   &  \textcolor[RGB]{0,120,0}{3.33$\downarrow$}  &   31.82   &  \textcolor[RGB]{0,120,0}{6.00$\downarrow$}  &   31.74   &  \textcolor[RGB]{0,120,0}{5.77$\downarrow$} \\

    CCRT$^*$     &   29.64   &   30.55   &  \textcolor[RGB]{0,120,0}{2.98$\downarrow$}  &   31.76   &  \textcolor[RGB]{0,120,0}{6.68$\downarrow$}  &   30.33   &  \textcolor[RGB]{0,120,0}{2.27$\downarrow$}  &   30.23   &  \textcolor[RGB]{128,128,128}{1.95$\downarrow$}  &   31.33   &  \textcolor[RGB]{0,120,0}{5.39$\downarrow$}  &   31.44   &  \textcolor[RGB]{0,120,0}{5.73$\downarrow$} \\

    ConAbl \cite{conabl}     &   30.59   &   31.96   &  \textcolor[RGB]{0,120,0}{4.29$\downarrow$}  &   32.27   &  \textcolor[RGB]{0,120,0}{5.21$\downarrow$}  &   31.25   &  \textcolor[RGB]{0,120,0}{2.11$\downarrow$}  &   30.19   &  \textcolor[RGB]{128,128,128}{1.32$\uparrow$}  &   32.18   &  \textcolor[RGB]{0,120,0}{4.94$\downarrow$}  &   31.95   &  \textcolor[RGB]{0,120,0}{4.26$\downarrow$} \\
    tuneSD-AB \cite{tuneSD}     &   30.23   &   31.63   &  \textcolor[RGB]{0,120,0}{4.43$\downarrow$}  &   32.21   &  \textcolor[RGB]{0,120,0}{6.15$\downarrow$}  &   30.18   &  \textcolor[RGB]{128,128,128}{0.17$\uparrow$}  &   30.23   &  \textcolor[RGB]{128,128,128}{0.00$\uparrow$}  &   32.32   &  \textcolor[RGB]{0,120,0}{6.47$\downarrow$}  &   32.05   &  \textcolor[RGB]{0,120,0}{5.68$\downarrow$} \\
    FMN \cite{Forget-me-not}     &   30.58   &   31.87   &  \textcolor[RGB]{0,120,0}{4.05$\downarrow$}  &   32.41   &  \textcolor[RGB]{0,120,0}{5.65$\downarrow$}  &   31.42   &  \textcolor[RGB]{0,120,0}{2.67$\downarrow$}  &   30.97   &  \textcolor[RGB]{128,128,128}{1.26$\downarrow$}  &   32.50   &  \textcolor[RGB]{0,120,0}{5.91$\downarrow$}  &   31.15   &  \textcolor[RGB]{128,128,128}{1.83$\downarrow$} \\
    
    Average & - & - & \textcolor[RGB]{0,120,0}{4.35$\downarrow$} & - & \textcolor[RGB]{0,120,0}{5.97$\downarrow$} & - & \textcolor[RGB]{0,120,0}{2.10$\downarrow$} & - & \textcolor[RGB]{128,128,128}{1.22$\downarrow$} & - & \textcolor[RGB]{0,120,0}{5.62$\downarrow$} & - & \textcolor[RGB]{0,120,0}{4.63$\downarrow$} \\
    \midrule
    \multicolumn{14}{c}{\textbf{CLIP(R) $\uparrow$}} \\
    \midrule
    ESD$^*$     &   34.11   &   32.43   &  \textcolor[RGB]{0,120,0}{5.18$\uparrow$}  &   32.28   &  \textcolor[RGB]{0,120,0}{5.67$\uparrow$}  &   33.57   &  \textcolor[RGB]{128,128,128}{1.61$\uparrow$}  &   33.80   &  \textcolor[RGB]{128,128,128}{0.92$\uparrow$}  &   33.60   &  \textcolor[RGB]{128,128,128}{1.52$\uparrow$}  &   33.41   &  \textcolor[RGB]{0,120,0}{2.10$\uparrow$} \\

    ResAlign$^*$     &   34.06   &   32.99   &  \textcolor[RGB]{0,120,0}{3.24$\uparrow$}  &   32.40   &  \textcolor[RGB]{0,120,0}{5.12$\uparrow$}  &   33.78   &  \textcolor[RGB]{128,128,128}{0.83$\uparrow$}  &   33.95   &  \textcolor[RGB]{128,128,128}{0.32$\uparrow$}  &   33.79   &  \textcolor[RGB]{128,128,128}{0.80$\uparrow$}  &   33.65   &  \textcolor[RGB]{128,128,128}{1.22$\uparrow$} \\

    CCRT$^*$     &   33.85   &   31.38   &  \textcolor[RGB]{0,120,0}{7.87$\uparrow$}  &   32.05   &  \textcolor[RGB]{0,120,0}{5.62$\uparrow$}  &   33.29   &  \textcolor[RGB]{128,128,128}{1.68$\uparrow$}  &   33.39   &  \textcolor[RGB]{128,128,128}{1.38$\uparrow$}  &   32.97   &  \textcolor[RGB]{0,120,0}{2.67$\uparrow$}  &   33.11   &  \textcolor[RGB]{0,120,0}{2.23$\uparrow$} \\
    
    ConAbl \cite{conabl}     &   34.21   &   33.31   &  \textcolor[RGB]{0,120,0}{2.70$\uparrow$}  &   32.36   &  \textcolor[RGB]{0,120,0}{5.72$\uparrow$}  &   33.81   &  \textcolor[RGB]{128,128,128}{1.18$\uparrow$}  &   33.90   &  \textcolor[RGB]{128,128,128}{0.91$\uparrow$}  &   33.76   &  \textcolor[RGB]{128,128,128}{1.33$\uparrow$}  &   33.16   &  \textcolor[RGB]{0,120,0}{3.17$\uparrow$} \\
    
    tuneSD-AB \cite{tuneSD}     &   34.04   &   33.20   &  \textcolor[RGB]{0,120,0}{2.53$\uparrow$}  &   32.38   &  \textcolor[RGB]{0,120,0}{5.13$\uparrow$}  &   33.77   &  \textcolor[RGB]{128,128,128}{0.80$\uparrow$}  &   34.00   &  \textcolor[RGB]{128,128,128}{0.12$\uparrow$}  &   33.75   &  \textcolor[RGB]{128,128,128}{0.86$\uparrow$}  &   33.71   &  \textcolor[RGB]{128,128,128}{0.98$\uparrow$} \\

    FMN \cite{Forget-me-not}     &   34.22   &   33.57   &  \textcolor[RGB]{128,128,128}{1.94$\uparrow$}  &   32.26   &  \textcolor[RGB]{0,120,0}{6.08$\uparrow$}  &   34.12   &  \textcolor[RGB]{128,128,128}{0.29$\uparrow$}  &   34.54   &  \textcolor[RGB]{128,128,128}{0.93$\downarrow$}  &   34.23   &  \textcolor[RGB]{128,128,128}{0.03$\downarrow$}  &   33.21   &  \textcolor[RGB]{0,120,0}{3.04$\uparrow$} \\

    Average & - & - & \textcolor[RGB]{0,120,0}{3.91$\uparrow$} & - & \textcolor[RGB]{0,120,0}{5.55$\uparrow$} & - & \textcolor[RGB]{128,128,128}{1.07$\uparrow$} & - & \textcolor[RGB]{128,128,128}{0.45$\uparrow$} & - & \textcolor[RGB]{128,128,128}{1.19$\uparrow$} & - & \textcolor[RGB]{0,120,0}{2.12$\uparrow$} \\
    \midrule
    \multicolumn{14}{c}{\textbf{FID $\downarrow$}} \\
    \midrule
    ESD$^*$     &     159.73     &     170.46     &  \textcolor[RGB]{0,120,0}{6.29$\downarrow$}  &     227.76     &  \textcolor[RGB]{0,120,0}{29.87$\downarrow$}  &     172.14     &  \textcolor[RGB]{0,120,0}{7.21$\downarrow$}  &     163.20     &  \textcolor[RGB]{0,120,0}{2.13$\downarrow$}  &     172.03     &  \textcolor[RGB]{0,120,0}{7.15$\downarrow$}  &     191.04     &  \textcolor[RGB]{0,120,0}{16.39$\downarrow$} \\

    ResAlign$^*$     &     160.87     &     176.66     &  \textcolor[RGB]{0,120,0}{8.94$\downarrow$}  &     227.90     &  \textcolor[RGB]{0,120,0}{29.41$\downarrow$}  &     174.01     &  \textcolor[RGB]{0,120,0}{7.55$\downarrow$}  &     163.46     &  \textcolor[RGB]{128,128,128}{1.58$\downarrow$}  &     171.48     &  \textcolor[RGB]{0,120,0}{6.19$\downarrow$}  &     176.12     &  \textcolor[RGB]{0,120,0}{8.66$\downarrow$} \\

    CCRT$^*$     &     158.13     &     170.05     &  \textcolor[RGB]{0,120,0}{7.01$\downarrow$}  &     231.28     &  \textcolor[RGB]{0,120,0}{31.63$\downarrow$}  &     171.01     &  \textcolor[RGB]{0,120,0}{7.53$\downarrow$}  &     161.75     &  \textcolor[RGB]{0,120,0}{2.24$\downarrow$}  &     170.03     &  \textcolor[RGB]{0,120,0}{7.00$\downarrow$}  &     186.50     &  \textcolor[RGB]{0,120,0}{15.21$\downarrow$} \\

    ConAbl \cite{conabl}     &     160.81     &     177.41     &  \textcolor[RGB]{0,120,0}{9.36$\downarrow$}  &     226.12     &  \textcolor[RGB]{0,120,0}{28.88$\downarrow$}  &     172.44     &  \textcolor[RGB]{0,120,0}{6.74$\downarrow$}  &     162.28     &  \textcolor[RGB]{128,128,128}{0.91$\downarrow$}  &     172.13     &  \textcolor[RGB]{0,120,0}{6.58$\downarrow$}  &     178.94     &  \textcolor[RGB]{0,120,0}{10.13$\downarrow$} \\
    
    tuneSD-AB \cite{tuneSD}     &     161.68     &     175.51     &  \textcolor[RGB]{0,120,0}{7.88$\downarrow$}  &     227.91     &  \textcolor[RGB]{0,120,0}{29.06$\downarrow$}  &     173.28     &  \textcolor[RGB]{0,120,0}{6.69$\downarrow$}  &     164.29     &  \textcolor[RGB]{128,128,128}{1.59$\downarrow$}  &     172.53     &  \textcolor[RGB]{0,120,0}{6.29$\downarrow$}  &     181.29     &  \textcolor[RGB]{0,120,0}{10.82$\downarrow$} \\

    FMN \cite{Forget-me-not}     &     160.85     &     186.42     &  \textcolor[RGB]{0,120,0}{13.72$\downarrow$}  &     225.36     &  \textcolor[RGB]{0,120,0}{28.63$\downarrow$}  &     173.12     &  \textcolor[RGB]{0,120,0}{7.09$\downarrow$}  &     161.79     &  \textcolor[RGB]{128,128,128}{0.58$\downarrow$}  &     175.36     &  \textcolor[RGB]{0,120,0}{8.27$\downarrow$}  &     185.39     &  \textcolor[RGB]{0,120,0}{13.24$\downarrow$} \\

    Average & - & - & \textcolor[RGB]{0,120,0}{8.87$\downarrow$} & - & \textcolor[RGB]{0,120,0}{29.58$\downarrow$} & - & \textcolor[RGB]{0,120,0}{7.14$\downarrow$} & - & \textcolor[RGB]{128,128,128}{1.50$\downarrow$} & - & \textcolor[RGB]{0,120,0}{6.91$\downarrow$} & - & \textcolor[RGB]{0,120,0}{12.41$\downarrow$} \\
    \bottomrule
  \end{tabular}%
  }
\end{table*}

%% file: src/table-llm.tex
\begin{table*}[t]
  \centering
  \caption{Unlearning ``Porsche'' concept using different LLMs for comparision.}\label{tab:ablation-llm}
  \resizebox{0.7\linewidth}{!}{
  \begin{tabular}{l ccc ccc ccc}
    \toprule
    \multirow{2}{*}{\textbf{Method}} & \multicolumn{3}{c}{\textbf{CLIP}(U) $\downarrow$} & \multicolumn{3}{c}{\textbf{CLIP}(R) $\uparrow$} & \multicolumn{3}{c}{\textbf{FID} $\downarrow$} \\
    \cmidrule(lr){2-4} \cmidrule(lr){5-7} \cmidrule(lr){8-10}
    & \textbf{Orig} & \textbf{+} & \textbf{$\Delta$(\%)} & \textbf{Orig} & \textbf{+} & \textbf{$\Delta$(\%)} & \textbf{Orig} & \textbf{+} & \textbf{$\Delta$(\%)} \\
    \midrule
    \multicolumn{10}{c}{\textbf{LLM Choice: Gemini3.1 pro}} \\
    \midrule
    ESD \cite{esd} & 29.50 & 26.68 & {\color[RGB]{0,120,0}9.56 $\downarrow$} & 31.42 & 33.86 & {\color[RGB]{0,120,0}7.77 $\uparrow$} & 179.34 & \underline{164.77} & {\color[RGB]{0,120,0}8.12 $\downarrow$} \\
    tuneSD-l \cite{tuneSD} & 34.28 & 34.18 & {\color[RGB]{128,128,128}0.29 $\downarrow$} & 32.51 & 32.45 & {\color[RGB]{128,128,128}0.18 $\downarrow$} & 208.74 & 205.07 & {\color[RGB]{128,128,128}1.76 $\downarrow$} \\
    ResAlign \cite{ResAlign} & 31.69 & 31.61 & {\color[RGB]{128,128,128}0.25 $\downarrow$} & 32.01 & 33.28 & {\color[RGB]{0,120,0}3.97 $\uparrow$} & 188.23 & 181.36 & {\color[RGB]{0,120,0}3.65 $\downarrow$} \\
    CCRT \cite{CCRT} & 28.46 & \textbf{26.33} & {\color[RGB]{0,120,0}7.48 $\downarrow$} & 31.47 & 32.09 & {\color[RGB]{128,128,128}1.97 $\uparrow$} & 169.20 & \textbf{164.62} & {\color[RGB]{0,120,0}2.71 $\downarrow$} \\
    ConAbl \cite{conabl} & 30.79 & 30.50 & {\color[RGB]{128,128,128}0.94 $\downarrow$} & 31.57 & 31.89 & {\color[RGB]{128,128,128}1.01 $\uparrow$} & 179.28 & 178.95 & {\color[RGB]{128,128,128}0.18 $\downarrow$} \\
    FMN \cite{Forget-me-not} & 33.87 & 32.43 & {\color[RGB]{0,120,0}4.25 $\downarrow$} & 32.16 & \underline{34.79} & {\color[RGB]{0,120,0}8.18 $\uparrow$} & 207.05 & 202.03 & {\color[RGB]{0,120,0}2.42 $\downarrow$} \\
    tuneSD-u \cite{tuneSD} & 31.29 & 28.90 & {\color[RGB]{0,120,0}7.64 $\downarrow$} & 31.87 & \textbf{34.84} & {\color[RGB]{0,120,0}9.32 $\uparrow$} & 186.62 & 180.55 & {\color[RGB]{0,120,0}3.25 $\downarrow$} \\
    \textbf{\mysys (Ours)} & \underline{26.32} & - & - & \textbf{34.03} & - & - & \textbf{172.88} & - & - \\
    \textbf{Average} & - & - & {\color[RGB]{0,120,0}4.35 $\downarrow$} & - & - & {\color[RGB]{0,120,0}4.58 $\uparrow$} & - & - & {\color[RGB]{0,120,0}3.16 $\downarrow$} \\

    \midrule
    \multicolumn{10}{c}{\textbf{LLM Choice: DeepSeek-V3.2}} \\
    \midrule
    ESD \cite{esd} & 29.04 & \underline{27.11} & {\color[RGB]{0,120,0}6.65 $\downarrow$} & 31.21 & 31.60 & {\color[RGB]{128,128,128}1.25 $\uparrow$} & 162.58 & \textbf{159.42} & {\color[RGB]{128,128,128}1.94 $\downarrow$} \\
    tuneSD-l \cite{tuneSD} & 34.60 & 34.16 & {\color[RGB]{128,128,128}1.27 $\downarrow$} & 32.44 & \textbf{33.38} & {\color[RGB]{0,120,0}2.90 $\uparrow$} & 208.74 & 203.34 & {\color[RGB]{0,120,0}2.59 $\downarrow$} \\
    ResAlign \cite{ResAlign} & 31.06 & 30.81 & {\color[RGB]{128,128,128}0.80 $\downarrow$} & 31.62 & 32.17 & {\color[RGB]{128,128,128}1.74 $\uparrow$} & 163.36 & 161.88 & {\color[RGB]{128,128,128}0.91 $\downarrow$} \\
    CCRT \cite{CCRT} & 28.99 & 29.48 & {\color[RGB]{128,128,128}1.69 $\uparrow$} & 31.61 & \underline{32.84} & {\color[RGB]{0,120,0}3.89 $\uparrow$} & 161.43 & 162.05 & {\color[RGB]{128,128,128}0.38 $\uparrow$} \\
    ConAbl \cite{conabl} & 31.54 & 29.64 & {\color[RGB]{0,120,0}6.02 $\downarrow$} & 31.86 & 33.15 & {\color[RGB]{0,120,0}4.05 $\uparrow$} & 162.17 & 161.73 & {\color[RGB]{128,128,128}0.27 $\downarrow$} \\
    FMN \cite{Forget-me-not} & 33.63 & 32.44 & {\color[RGB]{0,120,0}3.54 $\downarrow$} & 31.95 & 32.71 & {\color[RGB]{0,120,0}2.38 $\uparrow$} & 204.72 & 200.77 & {\color[RGB]{128,128,128}1.93 $\downarrow$} \\
    tuneSD-u \cite{tuneSD} & 30.98 & 29.61 & {\color[RGB]{0,120,0}4.42 $\downarrow$} & 31.67 & 33.20 & {\color[RGB]{0,120,0}4.83 $\uparrow$} & 162.74 & \underline{160.80} & {\color[RGB]{128,128,128}1.19 $\downarrow$} \\
    \textbf{\mysys (Ours)} & \textbf{27.19} & - & - & \textbf{33.98} & - & - & \textbf{161.13} & - & - \\
    \textbf{Average} & - & - & {\color[RGB]{0,120,0}3.00 $\downarrow$} & - & - & {\color[RGB]{0,120,0}3.01 $\uparrow$} & - & - & {\color[RGB]{128,128,128}1.21 $\downarrow$} \\

    \midrule
    \multicolumn{10}{c}{\textbf{LLM Choice: GPT-5.2}} \\
    \midrule
    ESD \cite{esd} & 29.44 & \underline{26.66} & {\color[RGB]{0,120,0}9.44 $\downarrow$} & 31.47 & 32.05 & {\color[RGB]{128,128,128}1.84 $\uparrow$} & 162.20 & 162.69 & {\color[RGB]{128,128,128}0.30 $\uparrow$} \\
    tuneSD-l \cite{tuneSD} & 34.22 & 34.14 & {\color[RGB]{128,128,128}0.23 $\downarrow$} & 32.40 & \underline{33.47} & {\color[RGB]{0,120,0}3.30 $\uparrow$} & 165.12 & \underline{162.19} & {\color[RGB]{128,128,128}1.77 $\downarrow$} \\
    ResAlign \cite{ResAlign} & 31.51 & 31.21 & {\color[RGB]{128,128,128}0.95 $\downarrow$} & 31.96 & 32.15 & {\color[RGB]{128,128,128}0.59 $\uparrow$} & 163.95 & 162.82 & {\color[RGB]{128,128,128}0.69 $\downarrow$} \\
    CCRT \cite{CCRT} & 28.55 & \textbf{25.49} & {\color[RGB]{0,120,0}10.72 $\downarrow$} & 31.34 & 32.08 & {\color[RGB]{0,120,0}2.36 $\uparrow$} & 162.19 & 164.72 & {\color[RGB]{128,128,128}1.56 $\uparrow$} \\
    ConAbl \cite{conabl} & 31.22 & 30.68 & {\color[RGB]{128,128,128}1.73 $\downarrow$} & 31.92 & 32.37 & {\color[RGB]{128,128,128}1.41 $\uparrow$} & 163.21 & 163.63 & {\color[RGB]{128,128,128}0.26 $\uparrow$} \\
    FMN \cite{Forget-me-not} & 33.38 & 32.73 & {\color[RGB]{128,128,128}1.95 $\downarrow$} & 31.90 & \textbf{33.91} & {\color[RGB]{0,120,0}6.30 $\uparrow$} & 201.76 & 199.65 & {\color[RGB]{128,128,128}1.05 $\downarrow$} \\
    tuneSD-u \cite{tuneSD} & 31.08 & 29.88 & {\color[RGB]{0,120,0}3.86 $\downarrow$} & 31.77 & 33.22 & {\color[RGB]{0,120,0}4.56 $\uparrow$} & 160.47 & 164.02 & {\color[RGB]{120,0,0}2.21 $\uparrow$} \\
    \textbf{\mysys (Ours)} & \textbf{25.17} & - & - & \textbf{33.65} & - & - & \textbf{162.09} & - & - \\
    \textbf{Average} & - & - & {\color[RGB]{0,120,0}4.13 $\downarrow$} & - & - & {\color[RGB]{0,120,0}2.91 $\uparrow$} & - & - & {\color[RGB]{128,128,128}0.12 $\uparrow$} \\

    \midrule
    \multicolumn{10}{c}{\textbf{LLM Choice: Claude4.6}} \\
    \midrule
    ESD \cite{esd} & 28.69 & \underline{28.72} & {\color[RGB]{128,128,128}0.10 $\uparrow$} & 31.32 & 31.70 & {\color[RGB]{128,128,128}1.21 $\uparrow$} & 159.85 & 163.25 & {\color[RGB]{120,0,0}2.13 $\uparrow$} \\
    tuneSD-l \cite{tuneSD} & 34.24 & 33.20 & {\color[RGB]{0,120,0}3.04 $\downarrow$} & 32.44 & 32.29 & {\color[RGB]{128,128,128}0.46 $\downarrow$} & 164.91 & 163.94 & {\color[RGB]{128,128,128}0.59 $\downarrow$} \\
    ResAlign \cite{ResAlign} & 31.00 & 30.57 & {\color[RGB]{128,128,128}1.39 $\downarrow$} & 31.98 & 31.95 & {\color[RGB]{128,128,128}0.09 $\downarrow$} & 161.29 & 161.26 & {\color[RGB]{128,128,128}0.02 $\downarrow$} \\
    CCRT \cite{CCRT} & 28.56 & \textbf{28.52} & {\color[RGB]{128,128,128}0.14 $\downarrow$} & 31.40 & 31.62 & {\color[RGB]{128,128,128}0.70 $\uparrow$} & 161.81 & \underline{160.84} & {\color[RGB]{128,128,128}0.60 $\downarrow$} \\
    ConAbl \cite{conabl} & 31.53 & 30.02 & {\color[RGB]{0,120,0}4.79 $\downarrow$} & 31.95 & 32.06 & {\color[RGB]{128,128,128}0.34 $\uparrow$} & 161.42 & 163.47 & {\color[RGB]{128,128,128}1.27 $\uparrow$} \\
    FMN \cite{Forget-me-not} & 32.44 & 32.40 & {\color[RGB]{128,128,128}0.12 $\downarrow$} & 31.54 & \textbf{32.78} & {\color[RGB]{0,120,0}3.93 $\uparrow$} & 164.80 & 163.66 & {\color[RGB]{128,128,128}0.69 $\downarrow$} \\
    tuneSD-u \cite{tuneSD} & 31.22 & 29.61 & {\color[RGB]{0,120,0}5.16 $\downarrow$} & 31.93 & \underline{32.06} & {\color[RGB]{128,128,128}0.41 $\uparrow$} & 161.68 & \textbf{160.76} & {\color[RGB]{128,128,128}0.57 $\downarrow$} \\
    \textbf{\mysys (Ours)} & \textbf{28.18} & - & - & \textbf{32.98} & - & - & \textbf{163.56} & - & - \\
    \textbf{Average} & - & - & {\color[RGB]{0,120,0}2.08 $\downarrow$} & - & - & {\color[RGB]{128,128,128}0.86 $\uparrow$} & - & - & {\color[RGB]{128,128,128}0.13 $\uparrow$} \\

    \midrule
    \multicolumn{10}{c}{\textbf{LLM Choice: Qwen3.5-Plus}} \\
    \midrule
    ESD \cite{esd} & 29.23 & 28.15 & {\color[RGB]{0,120,0}3.69 $\downarrow$} & 31.43 & \underline{33.77} & {\color[RGB]{0,120,0}7.45 $\uparrow$} & 159.90 & \textbf{159.77} & {\color[RGB]{128,128,128}0.08 $\downarrow$} \\
    tuneSD-l \cite{tuneSD} & 34.31 & 34.14 & {\color[RGB]{128,128,128}0.50 $\downarrow$} & 32.41 & \textbf{33.88} & {\color[RGB]{0,120,0}4.54 $\uparrow$} & 165.28 & 163.12 & {\color[RGB]{128,128,128}1.31 $\downarrow$} \\
    ResAlign \cite{ResAlign} & 31.48 & 31.03 & {\color[RGB]{128,128,128}1.43 $\downarrow$} & 32.01 & 32.81 & {\color[RGB]{0,120,0}2.50 $\uparrow$} & 163.25 & 163.89 & {\color[RGB]{128,128,128}0.39 $\uparrow$} \\
    CCRT \cite{CCRT} & 27.25 & \textbf{26.69} & {\color[RGB]{0,120,0}2.06 $\downarrow$} & 30.84 & 32.78 & {\color[RGB]{0,120,0}6.29 $\uparrow$} & 160.57 & 162.18 & {\color[RGB]{128,128,128}1.00 $\uparrow$} \\
    ConAbl \cite{conabl} & 31.53 & 31.39 & {\color[RGB]{128,128,128}0.44 $\downarrow$} & 31.91 & 32.74 & {\color[RGB]{0,120,0}2.60 $\uparrow$} & 161.79 & 164.23 & {\color[RGB]{128,128,128}1.51 $\uparrow$} \\
    FMN \cite{Forget-me-not} & 33.85 & 32.31 & {\color[RGB]{0,120,0}4.55 $\downarrow$} & 32.19 & 33.40 & {\color[RGB]{0,120,0}3.76 $\uparrow$} & 164.15 & 163.78 & {\color[RGB]{128,128,128}0.23 $\downarrow$} \\
    tuneSD-u \cite{tuneSD} & 31.35 & \underline{30.48} & {\color[RGB]{0,120,0}2.78 $\downarrow$} & 31.79 & 33.10 & {\color[RGB]{0,120,0}4.12 $\uparrow$} & 162.30 & \underline{161.88} & {\color[RGB]{128,128,128}0.26 $\downarrow$} \\
    \textbf{\mysys (Ours)} & \textbf{25.33} & - & - & \textbf{33.28} & - & - & \textbf{162.79} & - & - \\
    \textbf{Average} & - & - & {\color[RGB]{0,120,0}2.21 $\downarrow$} & - & - & {\color[RGB]{0,120,0}4.46 $\uparrow$} & - & - & {\color[RGB]{128,128,128}0.15 $\uparrow$} \\
    \bottomrule
  \end{tabular}
  }
\end{table*}

%% file: src/table-hyper.tex
\begin{table*}[t]
  \centering
  \caption{Comparison of unlearning performance under different K when unlearning the ``Porsche'' concept as an example.}\label{tab:ablation-hyper}
  \resizebox{0.8\linewidth}{!}{%
  \begin{tabular}{l ccc ccc ccc}
    \toprule
    \multirow{2}{*}{\textbf{Method}} & \multicolumn{3}{c}{\textbf{CLIP}(U) $\downarrow$} & \multicolumn{3}{c}{\textbf{CLIP}(R) $\uparrow$} & \multicolumn{3}{c}{\textbf{FID} $\downarrow$} \\ 
    \cmidrule(lr){2-4} \cmidrule(lr){5-7} \cmidrule(lr){8-10}
    & \textbf{Orig} & \textbf{+} & \textbf{$\Delta$(\%)} & \textbf{Orig} & \textbf{+} & \textbf{$\Delta$(\%)} & \textbf{Orig} & \textbf{+} & \textbf{$\Delta$(\%)} \\
    \midrule
    \multicolumn{10}{c}{\textbf{K=5}} \\
    \midrule
    ESD \cite{esd} & 28.90 & \textbf{29.10} & {\color[RGB]{128,128,128}0.69 $\uparrow$} & 32.26 & 32.29 & {\color[RGB]{128,128,128}0.09 $\uparrow$} & 176.04 & 180.08 & {\color[RGB]{120,0,0}2.29 $\uparrow$} \\
    tuneSD-l \cite{tuneSD} & 34.15 & 33.92 & {\color[RGB]{128,128,128}0.67 $\downarrow$} & 34.44 & \textbf{34.78} & {\color[RGB]{128,128,128}0.99 $\uparrow$} & 184.64 & 192.23 & {\color[RGB]{120,0,0}4.11 $\uparrow$} \\
    ResAlign \cite{ResAlign} & 30.98 & 31.50 & {\color[RGB]{128,128,128}1.68 $\uparrow$} & 33.29 & 33.71 & {\color[RGB]{128,128,128}1.26 $\uparrow$} & 176.40 & 180.01 & {\color[RGB]{120,0,0}2.05 $\uparrow$} \\
    CCRT \cite{CCRT} & 28.67 & \underline{27.99} & {\color[RGB]{0,120,0}2.37 $\downarrow$} & 32.72 & 33.36 & {\color[RGB]{128,128,128}1.96 $\uparrow$} & 182.74 & 179.96 & {\color[RGB]{128,128,128}1.52 $\downarrow$} \\
    ConAbl \cite{conabl} & 31.41 & 31.45 & {\color[RGB]{128,128,128}0.13 $\uparrow$} & 33.28 & \underline{33.80} & {\color[RGB]{128,128,128}1.56 $\uparrow$} & 179.68 & 182.24 & {\color[RGB]{128,128,128}1.42 $\uparrow$} \\
    FMN \cite{Forget-me-not} & 32.11 & 31.96 & {\color[RGB]{128,128,128}0.47 $\downarrow$} & 32.66 & 33.67 & {\color[RGB]{0,120,0}3.09 $\uparrow$} & 179.78 & 186.96 & {\color[RGB]{120,0,0}3.99 $\uparrow$} \\
    tuneSD-u \cite{tuneSD} & 31.00 & 30.59 & {\color[RGB]{128,128,128}1.32 $\downarrow$} & 33.28 & 33.24 & {\color[RGB]{128,128,128}0.12 $\downarrow$} & 177.48 & \textbf{171.74} & {\color[RGB]{0,120,0}3.23 $\downarrow$} \\
    \textbf{\mysys (Ours)} & {29.32} & - & - & 33.52 & - & - & \underline{181.92} & - & - \\
    \textbf{Average} & - & - & {\color[RGB]{128,128,128}0.33 $\downarrow$} & - & - & {\color[RGB]{128,128,128}1.26 $\uparrow$} & - & - & {\color[RGB]{128,128,128}1.30 $\uparrow$} \\

    \midrule
    \multicolumn{10}{c}{\textbf{K=10}} \\
    \midrule
    ESD \cite{esd} & 29.38 & 28.86 & {\color[RGB]{128,128,128}1.77 $\downarrow$} & 32.52 & 32.16 & {\color[RGB]{128,128,128}1.11 $\downarrow$} & 172.00 & \textbf{167.36} & {\color[RGB]{0,120,0}2.70 $\downarrow$} \\
    tuneSD-l \cite{tuneSD} & 34.21 & 34.11 & {\color[RGB]{128,128,128}0.29 $\downarrow$} & 34.58 & \underline{34.64} & {\color[RGB]{128,128,128}0.17 $\uparrow$} & 188.00 & 188.40 & {\color[RGB]{128,128,128}0.21 $\uparrow$} \\
    ResAlign \cite{ResAlign} & 31.51 & 32.06 & {\color[RGB]{128,128,128}1.75 $\uparrow$} & 33.82 & 33.75 & {\color[RGB]{128,128,128}0.21 $\downarrow$} & 182.46 & 181.35 & {\color[RGB]{128,128,128}0.61 $\downarrow$} \\
    CCRT \cite{CCRT} & 27.54 & \textbf{24.42} & {\color[RGB]{0,120,0}11.33 $\downarrow$} & 32.79 & 32.83 & {\color[RGB]{128,128,128}0.12 $\uparrow$} & 180.87 & 178.98 & {\color[RGB]{128,128,128}1.04 $\downarrow$} \\
    ConAbl \cite{conabl} & 31.46 & 32.06 & {\color[RGB]{128,128,128}1.91 $\uparrow$} & 33.02 & 33.75 & {\color[RGB]{0,120,0}2.21 $\uparrow$} & 175.24 & 178.48 & {\color[RGB]{128,128,128}1.85 $\uparrow$} \\
    FMN \cite{Forget-me-not} & 32.35 & 32.08 & {\color[RGB]{128,128,128}0.83 $\downarrow$} & 33.27 & \textbf{34.84} & {\color[RGB]{0,120,0}4.72 $\uparrow$} & 190.17 & 186.29 & {\color[RGB]{0,120,0}2.04 $\downarrow$} \\
    tuneSD-u \cite{tuneSD} & 31.39 & \underline{30.55} & {\color[RGB]{0,120,0}2.68 $\downarrow$} & 33.24 & 33.03 & {\color[RGB]{128,128,128}0.63 $\downarrow$} & 178.28 & 175.37 & {\color[RGB]{128,128,128}1.63 $\downarrow$} \\
    \textbf{\mysys (Ours)} & 30.23 & - & - & 33.92 & - & - & \underline{171.33} & - & - \\
    \textbf{Average} & - & - & {\color[RGB]{128,128,128}1.89 $\downarrow$} & - & - & {\color[RGB]{128,128,128}0.75 $\uparrow$} & - & - & {\color[RGB]{128,128,128}0.85 $\downarrow$} \\

    \midrule
    \multicolumn{10}{c}{\textbf{K=20}} \\
    \midrule
    ESD \cite{esd} & 29.50 & 26.68 & {\color[RGB]{0,120,0}9.56 $\downarrow$} & 31.42 & 33.86 & {\color[RGB]{0,120,0}7.77 $\uparrow$} & 179.34 & \underline{164.77} & {\color[RGB]{0,120,0}8.12 $\downarrow$} \\
    tuneSD-l \cite{tuneSD} & 34.28 & 34.18 & {\color[RGB]{128,128,128}0.29 $\downarrow$} & 32.51 & 32.45 & {\color[RGB]{128,128,128}0.18 $\downarrow$} & 208.74 & 205.07 & {\color[RGB]{128,128,128}1.76 $\downarrow$} \\
    ResAlign \cite{ResAlign} & 31.69 & 31.61 & {\color[RGB]{128,128,128}0.25 $\downarrow$} & 32.01 & 33.28 & {\color[RGB]{0,120,0}3.97 $\uparrow$} & 188.23 & 181.36 & {\color[RGB]{0,120,0}3.65 $\downarrow$} \\
    CCRT \cite{CCRT} & 28.46 & \underline{26.33} & {\color[RGB]{0,120,0}7.48 $\downarrow$} & 31.47 & 32.09 & {\color[RGB]{128,128,128}1.97 $\uparrow$} & 169.20 & \textbf{164.62} & {\color[RGB]{0,120,0}2.71 $\downarrow$} \\
    ConAbl \cite{conabl} & 30.79 & 30.50 & {\color[RGB]{128,128,128}0.94 $\downarrow$} & 31.57 & 31.89 & {\color[RGB]{128,128,128}1.01 $\uparrow$} & 179.28 & 178.95 & {\color[RGB]{128,128,128}0.18 $\downarrow$} \\
    FMN \cite{Forget-me-not} & 33.87 & 32.43 & {\color[RGB]{0,120,0}4.25 $\downarrow$} & 32.16 & \underline{34.79} & {\color[RGB]{0,120,0}8.18 $\uparrow$} & 207.05 & 202.03 & {\color[RGB]{0,120,0}2.42 $\downarrow$} \\
    tuneSD-u \cite{tuneSD} & 31.29 & 28.90 & {\color[RGB]{0,120,0}7.64 $\downarrow$} & 31.87 & \textbf{34.84} & {\color[RGB]{0,120,0}9.32 $\uparrow$} & 186.62 & 180.55 & {\color[RGB]{0,120,0}3.25 $\downarrow$} \\
    \textbf{\mysys (Ours)} & \textbf{26.32} & - & - & 34.03 & - & - & 172.88 & - & - \\
    \textbf{Average} & - & - & {\color[RGB]{0,120,0}4.35 $\downarrow$} & - & - & {\color[RGB]{0,120,0}4.58 $\uparrow$} & - & - & {\color[RGB]{0,120,0}3.16 $\downarrow$} \\

    \midrule
    \multicolumn{10}{c}{\textbf{K=30}} \\
    \midrule
    ESD \cite{esd} & 29.36 & 27.35 & {\color[RGB]{0,120,0}6.85 $\downarrow$} & 31.99 & 33.43 & {\color[RGB]{0,120,0}4.50 $\uparrow$} & 171.14 & 173.10 & {\color[RGB]{128,128,128}1.15 $\uparrow$} \\
    tuneSD-l \cite{tuneSD} & 34.29 & 34.16 & {\color[RGB]{128,128,128}0.38 $\downarrow$} & 34.48 & {34.71} & {\color[RGB]{128,128,128}0.67 $\uparrow$} & 189.36 & 191.72 & {\color[RGB]{128,128,128}1.25 $\uparrow$} \\
    ResAlign \cite{ResAlign} & 31.22 & 29.35 & {\color[RGB]{0,120,0}5.99 $\downarrow$} & 33.11 & 34.21 & {\color[RGB]{0,120,0}3.32 $\uparrow$} & 172.84 & 173.28 & {\color[RGB]{128,128,128}0.25 $\uparrow$} \\
    CCRT \cite{CCRT} & 28.74 & \underline{25.82} & {\color[RGB]{0,120,0}10.16 $\downarrow$} & 32.78 & {34.71} & {\color[RGB]{0,120,0}5.89 $\uparrow$} & 181.12 & \textbf{169.97} & {\color[RGB]{0,120,0}6.16 $\downarrow$} \\
    ConAbl \cite{conabl} & 31.63 & {29.34} & {\color[RGB]{0,120,0}7.24 $\downarrow$} & 33.31 & 34.46 & {\color[RGB]{0,120,0}3.45 $\uparrow$} & 179.48 & 175.53 & {\color[RGB]{0,120,0}2.20 $\downarrow$} \\
    FMN \cite{Forget-me-not} & 32.73 & 32.13 & {\color[RGB]{128,128,128}1.83 $\downarrow$} & 33.75 & \textbf{34.90} & {\color[RGB]{0,120,0}3.41 $\uparrow$} & 190.08 & 184.66 & {\color[RGB]{0,120,0}2.85 $\downarrow$} \\
    tuneSD-u \cite{tuneSD} & 31.28 & 29.49 & {\color[RGB]{0,120,0}5.72 $\downarrow$} & 33.34 & 34.37 & {\color[RGB]{0,120,0}3.09 $\uparrow$} & 180.61 & 180.13 & {\color[RGB]{128,128,128}0.27 $\downarrow$} \\
    \textbf{\mysys (Ours)} & \textbf{25.32} & - & - & \underline{34.87} & - & - & \underline{168.30} & - & - \\
    \textbf{Average} & - & - & {\color[RGB]{0,120,0}5.45 $\downarrow$} & - & - & {\color[RGB]{0,120,0}3.48 $\uparrow$} & - & - & {\color[RGB]{128,128,128}1.26 $\downarrow$} \\
    \bottomrule
  \end{tabular}%
  }
\end{table*}